\documentclass[11pt, a4paper]{article}

\usepackage[margin=1.19in]{geometry}
\usepackage[utf8]{inputenc} 
\usepackage[T1]{fontenc}    
\usepackage{hyperref}       
\usepackage{url}            
\usepackage{booktabs}       
\usepackage{nicefrac}       
\usepackage{microtype}      
\usepackage{algorithm}
\usepackage[noend]{algorithmic} 
\usepackage{times}
\usepackage{latexsym}
\usepackage{graphicx}

\usepackage{subcaption}
\usepackage[english]{babel} 
\usepackage{amsmath,amsfonts,amsthm} 
\usepackage{enumitem}
\usepackage{wrapfig}
\usepackage{amssymb}
\usepackage{thmtools}
\usepackage{nameref,cleveref,}
\newtheorem{theorem}{Theorem}
\newtheorem{remark}{Remark}
\newtheorem*{theorem*}{Theorem}
\declaretheorem[name=Corollary, sibling=theorem, refname={corollary,corollaries}, Refname={Corollary,Corollaries}]{corollary}
\declaretheorem[name=Lemma, sibling=theorem, refname={lemma,lemmas}, Refname={Lemma,Lemmas}]{lemma}

\declaretheorem[name=Assumption, refname={assumption,assumptions}, Refname={Assumption,Assumptions}]{assumption}
\newcounter{relctr} 
\everydisplay\expandafter{\the\everydisplay\setcounter{relctr}{0}}
 
\newcommand\labelrel[2]{%
  \begingroup
    \refstepcounter{relctr}%
    \stackrel{\textnormal{(\alph{relctr})}}{\mathstrut{#1}}%
    \originallabel{#2}%
  \endgroup
}
\AtBeginDocument{\let\originallabel\label} 
\usepackage{multirow}
\usepackage{pbox}
\usepackage{pgfplots}
\usepackage{fullpage}
\usepackage{subcaption}
\usepackage{authblk}
\input{definitions}

\title{Interaction Hard Thresholding: Consistent Sparse Quadratic Regression in Sub-quadratic Time and Space}
\author[1]{Shuo Yang\thanks{Equal contribution.}}
\author[2]{Yanyao Shen$^*$}
\author[2]{Sujay Sanghavi}
\affil[1]{Department of Computer Science, The University of Texas at Austin}
\affil[2]{Department of ECE,  The University of Texas at Austin}

\begin{document}

\maketitle

\begin{abstract}
Quadratic regression involves modeling the response as a (generalized) linear function of not only the features $\xb^j$, but also of quadratic terms $\xb^{j_1} \xb^{j_2}$. The inclusion of such higher-order ``interaction terms" in regression often provides an easy way to increase accuracy in already-high-dimensional problems. However, this explodes the problem dimension from linear $O(p)$ to quadratic $O(p^2)$, and it is common to look for sparse interactions (typically via heuristics).

In this paper we provide a new algorithm -- Interaction Hard Thresholding (\algname) -- which is the first one to provably accurately solve this problem in {\em sub-quadratic} time and space. It is a variant of Iterative Hard Thresholding; one that uses the special quadratic structure to devise a new way to (approx.) extract the top elements of a $p^2$ size gradient in sub-$p^2$ time and space. 

Our main result is to theoretically prove that, in spite of the many speedup-related approximations, \algname~linearly converges to a consistent estimate under standard high-dimensional sparse recovery assumptions. We also demonstrate its value via synthetic experiments. 

Moreover, we numerically show that \algname~can be extended to higher-order regression problems, and also theoretically analyze  an SVRG variant of \algname.
\end{abstract}

\section{Introduction}

Simple linear regression aims to predict a response $y$ via a (possibly generalized) linear function $\thetab^\top \xb$ of the feature vector $\xb$. {\bf Quadratic regression} aims to predict $y$ as a quadratic function $\xb^\top \Thetab \xb$ of the features $\xb$ 
\begin{eqnarray*}
\text{Linear Model} & \quad & \text{Quadratic Model} \\
y ~ \sim ~ \thetab^\top \xb & & y ~ \sim ~ \xb^\top \Thetab \, \xb
\end{eqnarray*}
The inclusion of such higher-order {\em interaction terms} -- in this case second-order terms of the form $\xb^{j_1} \xb^{j_2}$ -- is common practice, and has been seen to provide much more accurate predictions in several high-dimensional problem settings like
recommendation systems, advertising, social network modeling and computational biology \cite{rendle2010factorization,hao2014interaction,bien2013lasso}. In this paper we consider quadratic regression with an additional (possibly non-linear) link function relating $y$ to $\xb^\top \Thetab \, \xb$.

One problem with explicitly adding quadratic interaction terms is that the dimension of the problem now goes from $p$ to $p^2$. In most cases, the quadratic problem is high-dimensional and will likely overfit the data; correspondingly, it is common to implicitly / explicitly impose low-dimensional structure on the $\Thetab$ -- with sparsity of $\Thetab$ being a natural choice. A concrete example for sparse interaction would be the genome-wide association study, where for a given phenotype, the associated genetic variants are usually a sparse subset of all possible variants. Those genes usually interact with each other and leads to the given phenotype \cite{Li2015ModelingGI}.

The {\bf naive approach} to solving this problem involves recasting this as a big linear model that is now in $p^2$ dimensions, with the corresponding $p^2$ features being all pairs of the form $\xb^{j_1} \xb^{j_2}$. However, this approach takes $\Omega(p^2)$ time and space, since sparse linear regression cannot be done in time and space smaller than its dimension -- which in this case is $p^2$ -- even in cases where statistical properties like restricted strong convexity / incoherence etc. hold. Fundamentally, the problem lies in the fact that one needs to compute a gradient of the loss, and this is an $\Omega(p^2)$ operation.

{\bf Our motivation:} Can we learn a sparse quadratic model with time and space complexity that is sub-quadratic? In particular, suppose we have data which is well modeled by a $\Thetab^*$ that is $K$-sparse, with $K$ being $\Ocal(p^\gamma)$ and $\gamma < 1$. Statistically, this can be possibly recovered from $\Ocal(K\log p)$ samples, each of which is $p$-dimensional. Thus we have a setting where the input is sub-quadratic with size $\Ocal(Kp\log p)$, and the final output is sub-quadratic with size $\Ocal(K)$. Our aim is to have an algorithm whose time and space complexity is also sub-quadratic for this case.

{\bf In this paper}, we develop a new algorithm which has this desired sub-quadratic complexity, and subsequently theoretically establish that it consistently recovers a sparse $\Thetab^*$. We briefly overview our setting and results below.

\subsection{Main Contributions}

Given $n$ samples $\{ (\xb_i, y_i) \}_{i=1}^{n}$, we are interested in minimizing the following loss function corresponding to a quadratic model:
\begin{align} \label{eqt:quadra}
     \mbox{\textbf{ (Quadratic Structure) }}  \quad  \min_{\Thetab : \|\Thetab \|_0 \leq K} ~ \frac{1}{n} \, \sum_{i=0}^{n-1} \,  f\left( \xb_i^\top\Thetab\xb_i, y_i \right) \quad := ~ F_n\left(\Thetab \right ) \quad  
\end{align}

We develop a {\bf new algorithm} -- Interaction Hard Thresholding (\algname), outlined in {\bf \Cref{alg:qht}} -- for this problem, and provide a {\bf rigorous proof of consistency} for it under the standard settings (Restricted strong convexity and smoothness of the loss) for which consistency is established for sparse recovery problems. At a high level, it is based on the following {\bf key ideas:} 

\begin{itemize}[leftmargin=*]
    \item[{\bf (1)}] Because of the special quadratic structure, we show that the top $2k$ entries of the gradient can be found in sub-quadratic time and space, using ideas from \textcolor{\bcolor}{hashing and coding}. The subroutine in \Cref{alg:sketch} for doing this is { based on the idea of \cite{pagh2013compressed}} and \textbf{\Cref{thm:top-recovery}} characterizes its performance and approximation guarantee. 
    \item[{\bf (2)}] We note a simple but key fact: in (stochastic) iterative hard thresholding, the new $k$-sparse $\Thetab_{t+1}$ that is produced has its support inside the union of two sets of size $k$ and $2k$: the support of the previous $\Thetab_t$, and the top-$2k$ elements of the gradient. 
    \item[{\bf (3)}] While we do not find the precise top-$2k$ elements of the gradient, we do find an approximation. Using a new theoretical analysis, we show that this approximate-top-$2k$ is still sufficient to  establish linear convergence to a consistent solution. This is our main result, described in \textbf{\Cref{thm:main2}}. 
    \item[{\bf (4)}] As an extension, we show that our algorithm also works with popular SGD variants like SVRG  (\textbf{\Cref{alg:qht-svrg} in \Cref{appendix:SVRG}}), with provable linear convergence and consistency \textbf{ in \Cref{sec:extension}}. We also demonstrate the extension of our algorithm to estimate higher order interaction terms with a numerical experiment \textbf{in \Cref{sec:experiments} }.
\end{itemize}

\textbf{Notation} 
We use $[n] $ to represent the set $ \{0,\cdots, n-1\}$. 
We use $f_{\Bcal}\left(\Thetab\right)$ to denote the average loss on  batch $\Bcal$, where $\Bcal $ is a subset of $[n]$ with batch size $m$.    
We define $\minnerprod{\Ab,\Bb} = \tr{(\Ab^\top \Bb)}$, and $\supp(\Ab)$ to be the index set of $\Ab$ with non-zero entries. 
We let $\Pcal_S$ to be the projection operator onto the index set $S$. We use standard Big-$\mathcal{O}$ notation for time/space complexity analysis, and Big-$\widetilde{\Ocal}$ notation which ignores log factors.

\section{Related Work}

\textbf{Learning with high-order interactions} Regression with interaction terms has been studied in the statistics community. 
However, many existing results consider under the assumption of strong/weak hierarchical (SH/WH) structure: the coefficient of the interaction term $\xb^{j_1}\xb^{j_2}$  is non-zero only when both coefficients of $\xb^{j_1}$ and $\xb^{j_2}$  are  (or at least one of them is) non-zero. 
Greedy heuristics \cite{wu2010screen,hao2014interaction} and regularization based methods \cite{choi2010variable,bien2013lasso,lim2015learning,she2018group,hao2018model} are proposed accordingly. 
However, they could potentially miss important signals that only contains the effect of interactions. 
Furthermore, several of these methods also suffer from scaling problems due to the quadratic scaling of the parameter size. 
There are also results considering the more general tensor regression,  see, e.g., \cite{yu2016learning,hao2018sparse}, among many others. However, neither do these results focus on solutions with efficient memory usage and time complexity, which may become a potential issue when the dimension scales up. 
From a combinatorial perspective, 
\cite{mansour1995randomized,kocaoglu2014sparse}  learns sparse polynomial in Boolean domain using quite different approaches. 

\textbf{Sparse recovery, IHT and stochastic-IHT}
IHT \cite{blumensath2009iterative} is one type of sparse recovery algorithms that is proved to be effective for  M-estimation  \cite{jain2014iterative} under the regular RSC/RSM assumptions.  
\cite{nguyen2017linear} proposes and analyzes a stochastic  version of IHT.  \cite{li2016nonconvex,shen2017tight} further consider  variance reduced acceleration algorithm under this high dimensional setting. 
Notice that IHT, if used for our quadratic problem, still suffers from quadratic space, similar to other techniques, e.g., the Lasso, basis pursuit, least angle regression \cite{tibshirani1996regression,chen2001atomic,efron2004least}. 
On the other hand,  \cite{murata2018sample} recently considers a variant of IHT, where for each sample, only a random subset of features is observed. This makes each update cheap, but their sample size has linear dependence on the ambient dimension, which is again quadratic. 
Apart from that, \cite{nguyen2017linear,liu2018high} also show that IHT can potentially tolerate a small amount of error per iteration .

\textbf{Maximum inner product search}
One key technique of our method is extracting the top elements (by absolute value) of gradient matrix, which can be expressed as the inner product of two matrices. 
This can be formulated as finding Maximum Inner Product (MIP) from two sets of vectors. 
In practice, algorithms specifically designed for MIP are proposed based on locality sensitive hashing \cite{shrivastava2014asymmetric}, and many other greedy type algorithms \cite{ballard2015diamond, yu2017greedy}. But they either can't fit into the regression setting, or suffers from quadratic complexity.
{In theory,} MIP is treated as a fundamental problem in the recent development of complexity theory \cite{abboud2017distributed,williams2018difference}. 
\cite{abboud2017distributed,chen2018hardness} shows the hardness of MIP, even for Boolean vectors input. While in general hard, there are data dependent approximation guarantees, using the compressed matrix multiplication method \cite{pagh2013compressed}, which inspired our work.

\textbf{Others}
The quadratic problem we study also share similarities with several other problem settings, including factorization machine
\cite{rendle2010factorization} and kernel learning \cite{shawe2004kernel,rahimi2008random}. Different from factorization machine, we do not require the input data to be sparse. While the factorization machine tries to learn a low rank representation,  we are interested in learning a sparse representation. Compared to kernel learning, especially the quadratic / polynomial kernels, our task is to do feature selection and identify the correct interactions. 

\section{Interaction Hard Thresholding}
\label{sec:alg}

We now describe the main ideas motivating our approach, and then formally describe the algorithm. 

% ================================= Begin of Alg. 1 ========================== % 
\begin{algorithm}[t]
\centering 
\begin{algorithmic}[1]
\STATE \textbf{Input:} Dataset $\{\xb_i, y_i\}_{i=1}^{n}$, dimension $p$ 
\STATE \textbf{Parameters:}  Step size $\eta$, estimation sparsity $k$, batch size $m$, round number $T$\\
\STATE \textbf{Output:}  The parameter estimation $\widehat{\Thetab}$
\STATE Initialize $\Thetab^0$ as a $p\times p$ zero matrix. 
\FOR{$t=0$ \textbf{to} $T-1$}
\STATE Draw a subset of indices $\Bcal_t$ from $[n]$ randomly. 
\STATE Calculate the residual $u_i=u(\Thetab^t, \xb_i, y_i)$ based on \cref{eqt:quad-grad}, for every $ i \in \Bcal_t$.
\STATE Set $\Ab_t\in\RR^{p\times m}$ , where each column of $\Ab_t$ is $u_i\xb_i$, $i\in\Bcal_t$.
\STATE Set $\Bb_t\in\RR^{p\times m}$, where each column of $\Bb_t$ is $\xb_i$, $i\in\Bcal_t$. (where $\frac{\Ab_t\Bb_t^\top}{m}$ gives the gradient)
\STATE Compute $\widetilde{S}_t = \textsc{\algsubroute}(\Ab_t, \Bb_t, 2k)$. \textit{\hfill ----/* approximate top elements extraction */----\textcolor{white}{$\frac{X^2}{U^\top}$}}
\STATE Set $S_t =\widetilde{S}_t\cup \supp(\Thetab^t)$. \textit{\hfill ----/* inaccurate hard thresholding update */----\textcolor{white}{$\frac{X^2}{U^\top}$}}
\STATE Compute $\Pcal_{S_t}(\Gb^t)\leftarrow$ the gradient value $\Gb^t = \frac{1}{m} \sum_{i\in\Bcal^t} u_i \xb_i\xb_i^\top$ only calculated on $S_t$.
\STATE Update $\Thetab^{t+1} = \Hcal_{k}\left( \Thetab^{t} - \eta \Pcal_{S_t}(\Gb^t)  \right)$.
\ENDFOR 
\STATE \textbf{Return:} $\widehat{\Thetab} = \Thetab^T$
\end{algorithmic}
\caption{\textsc{Interaction Hard Thresholding (\algname)} }
\label{alg:qht}
\end{algorithm}
% ================================= End of Alg. 1 ========================== % 

\textbf{Naively recasting as a linear model has $p^2$ time and space complexity:} As a first step to our method, let us see what happens with the simplest approach. Specifically, as noted before,  problem (\ref{eqt:quadra}) can be recast as one of finding a sparse (generalized) linear model in the $p^2$ size variable $\Thetab$:
\begin{eqnarray*}
\textbf{(Recasting as linear model)} & \quad  \quad & \min_{\Thetab : \|\Thetab \|_0 \leq K} ~ \frac{1}{n} \, \sum_{i=0}^{n-1} \,  f\left(\, \langle \textbf{X}_i , \Thetab \rangle , y_i \, \right)   
\end{eqnarray*}
where matrix $\textbf{X}_i := \xb_i \xb_i^\top$.  Iterative hard thresholding (IHT) \cite{blumensath2009iterative} is a state-of-the-art method (both in terms of speed and statistical accuracy) for such sparse (generalized) linear problems. This involves the following update rule
\begin{eqnarray*}
\textbf{(standard IHT)} & \quad \quad & \Thetab^{t+1} ~ = ~ \Hcal_k \, \left( \, \Thetab^t \, - \, \eta \, \nabla F_n(\Thetab^t) \, \right ) 
\end{eqnarray*}
where $F_n(\cdot)$ is the average loss defined in (\ref{eqt:quadra}), and $\Hcal_k(\cdot)$ is the hard-thresholding operator that chooses the largest $k$ elements (in terms of absolute value) of the matrix given to it, and sets the rest to 0. \textcolor{\bcolor}{Here, $k$ is the  estimation sparsity parameter.} In this update equation, the current iterate $\Thetab^t$ has $k$ non-zero elements and so can be stored efficiently. But the gradient $\nabla F_n(\Thetab^t)$ is $p^2$ dimensional; this causes IHT to have $\Omega(p^2)$ complexity. This issue remains even if the gradient is replaced by a stochastic gradient that uses fewer samples, since even in a stochastic gradient the number of variables remains $p^2$.

\textbf{A key observation:} We only need to know the top-$2k$ elements of this gradient $\nabla F_n(\Thetab^t)$, because of the following simple fact: if $\Ab$ is a $k$-sparse matrix, and $\Bb$ is any matrix, then
\[
\supp(\Hcal_k(\Ab+\Bb)) \subset \supp(\Ab) \cup \supp(\Hcal_{2k}(\Bb)).
\]
That is, the support of the top $k$ elements of the sum $\Ab+\Bb$ is inside the union of the support of $\Ab$, and the top-$2k$ elements of $\Bb$. The size of this union set is at most $3k$.

Thus, in the context of standard IHT, we do not really need to know the full (stochastic) gradient $\nabla F_n(\Thetab^t)$; instead we only need to know (a) the values and locations of its top-$2k$ elements, and (b) evaluate at most $k$ extra elements of it -- those corresponding to the support of the current $\Thetab^t$.

\begin{algorithm}[t]
\centering 
\begin{algorithmic}[1]
\STATE \textbf{Input:} Matrix $\Ab$, matrix $\Bb$, top selection size $k$
\STATE \textbf{Parameters:}  Output set size upper bound $b$, repetition number $d$, significant level $\Delta$
\STATE \textbf{Expected Output: } Set $\Lambda$:  the top-$k$ elements in $\Ab\Bb^\top$ with absolute value  greater than $\Delta$ 
\STATE \textbf{Output:} Set $\widetilde\Lambda$ of indices, with size at most $b$ (approximately contains $\Lambda$)
\\\hrulefill 
\STATE \textbf{Short Description: }This algorithm is adopted directly from \cite{pagh2013compressed}. It follows from the matrix compressed product via FFT (see section 2.2 of \cite{pagh2013compressed}) and sub-linear result extraction by error-correcting code (see section 4 of \cite{pagh2013compressed}), which drastically reduces the complexity. The whole process is repeated for $d$ times to boost the success probability. The notation here matches \cite{pagh2013compressed} exactly, except that we use $p$ for dimension while $n$ is used in \cite{pagh2013compressed} instead.
\STATE Intuitively, the algorithm will put all the elements of $\Ab\Bb^\top$ into b different "basket"s, with each of the elements assigned a positive or negative sign. It then selects the "basket" whose magnitude is greater than $\Delta$. Further, one large element is recovered from each of the selected baskets. 
\end{algorithmic}
\caption{\textsc{Approximated Top Elements Extraction (\algsubroute)} }
\label{alg:sketch}
\end{algorithm}

The \textbf{key idea} of our method is to exploit the special structure of the quadratic model to find the top-$2k$ elements of the batch gradient $\nabla f_{\Bcal}$ in sub-quadratic time. Specifically, $\nabla f_{\Bcal}$
has the following form:
\begin{align}
     \nabla f_{\Bcal}(\Thetab) \triangleq \frac{1}{m} \sum_{i \in \Bcal} \nabla f\left(\xb_i^\top \Thetab\xb_i, y_i\right) = \frac{1}{m} \sum_{i\in \Bcal} u( \Thetab, \xb_i, y_i) \xb_i\xb_i^\top, \label{eqt:quad-grad} 
\end{align}
where $u( \Thetab, \xb_i, y_i )$ is a scalar related to the residual and the derivative of link function
, and $\Bcal$ represents the mini-batch where $\Bcal \subset \left[n\right], |\Bcal| = m$.
This allows us to approximately find the top-$2k$ elements of the $p^2$-dimensional stochastic gradient in  $\widetilde{\Ocal}(k(p+k))$ time and space, which is sub-quadratic when $k$ is $\Ocal(p^\gamma)$ for $\gamma <1$. 

Our algorithm is formally described in \Cref{alg:qht}.  We use Approximate Top Elements Extraction (\algsubroute) to approximately find the top-$2k$ elements of the gradient, which is briefly summarized in \Cref{alg:sketch}, based on the idea of Pagh \cite{pagh2013compressed}. The full algorithm is re-organized and provided in \Cref{sec:atee-formal} for completeness.   
Our method, Interaction Hard Thresholding (\algname) builds on IHT, but needs a substantially new analysis for proof of consistency. The subsequent section goes into the details of its analysis.

\section{Theoretical Guarantees}\label{sec:guarantees}

In this section, we establish the consistency of Interaction Hard Thresholding, in the standard setting where sparse recovery is established.

Specifically, we establish  convergence results under deterministic assumptions on the data and function, including restricted strong convexity (RSC) and smoothness (RSM). Then, we analyze the sample complexity when features are generated from sub-gaussian distribution in the quadratic regression setting, in order to have well-controlled RSC and RSM parameters. The analysis of required sample complexity yields an overall complexity that is sub-quadratic in time and space. 

\subsection{Preliminaries}

We first describe the standard deterministic setting in which sparse recovery is typically analyzed. Specifically, the samples $(\xb_i,y_i)$ are fixed and known. Our first assumption defines how our intended recovery target $\Thetab^\star$ relates to the resulting loss function $F_n(\cdot)$.

\begin{assumption}[Standard identifiability assumption]\label{as:Noise}
 There exists a $\Thetab^\star$ which is $K$-sparse such that the following holds: given any batch $\Bcal \subset [n]$ of $m$ samples, the norm of batch gradient at $\Thetab^\star$ is bounded by  constant $G$. That is,
$\fnorm{\nabla f_{\Bcal}(\Thetab^\star)}\le G$, and $\left\|{\Thetab^\star}\right\|_{\infty} \le \omega$.
\end{assumption}

In words, this says the the gradient at $\Thetab^\star$ is small. In a noiseless setting where data is generated from $\Thetab^\star$, e.g. when $y_i = \xb_i^\top \Thetab^\star \xb_i$, this gradient is 0; i.e. the above is satisfied with $G=0$, and $\Thetab^\star$ would be the exact sparse optimum of $F_n(\cdot)$. The above assumption generalizes this notion to noisy and non-linear cases, relating our recovery target  $\Thetab^\star$ to the loss function. This is a standard setup assumption in sparse recovery.

Now that we have specified what $\Thetab^\star$ is and why it is special, we specify the properties the loss function needs to satisfy. These are again standard in the sparse recovery literature  \cite{nguyen2017linear,shen2017tight,li2016nonconvex}.  

\begin{assumption}[Standard landscape properties of the loss]\label{as:summary}
For any pair $\Thetab_1, \Thetab_2$ and $s\leq p^2$ such that $|\supp(\Thetab_1-\Thetab_2)| \le s$
\begin{itemize}[leftmargin=*]
    \item The overall loss $F_n$ satisfies $\alpha_s$-Restricted Strong Convexity (RSC): 
    \begin{align*}
        F_n(\Thetab_1) - F_n(\Thetab_2) \ge \minnerprod{  \Thetab_1 - \Thetab_2, \nabla_{\Thetab} F_n(\Thetab_2) } + \frac{\alpha_s}{2} \left\| \Thetab_1 - \Thetab_2 \right\|_F^2
    \end{align*}
    \item The mini-batch loss $f_{\Bcal}$ satisfies $L_s$-Restricted Strong Smoothness (RSM): 
    \begin{align*}
        \fnorm{ \nabla f_{\Bcal}(\Thetab_1) - \nabla f_{\Bcal}(\Thetab_2) } \le L_s \fnorm{ \Thetab_1 - \Thetab_2 },~\forall \Bcal \subset \left[n\right],~|\Bcal|=m
    \end{align*}
    \item  $f_{\Bcal}$ satisfies Restricted Convexity (RC) (but not strong):
    \begin{align*}
        f_{\Bcal}(\Thetab_1) - f_{\Bcal}(\Thetab_2) - \minnerprod{  \nabla f_{\Bcal}(\Thetab_2), \Thetab_1 - \Thetab_2 } \ge 0,~\forall \Bcal \subset \left[n\right],~|\Bcal|=m,~s=3k+K
    \end{align*}
\end{itemize}
\end{assumption}

{\bf Note:} While our assumptions are standard, our result does not follow immediately from existing analyses -- because we cannot find the exact top elements of the gradient. We need to do a new analysis to show that even with our approximate top element extraction, linear convergence to $\Thetab^\star$ still holds.

\subsection{Main Results }

Here we proceed to establish the sub-quadratic complexity and consistency of \algname~for parameter estimation. \Cref{thm:top-recovery} presents the analysis of \algsubroute. It provides the computation complexity  analysis, as well as the statistical guarantee of support recovery. Based on this,  we show the per round convergence property of \Cref{alg:qht} in \Cref{thm:iht}. We then establish our main statistical result, the linear convergence of \Cref{alg:qht} in \Cref{thm:main2}.

Next, we discuss the batch size that guarantees support recovery in \Cref{thm:batchsize}, focusing on the quadratic regression setting, i.e. the model is linear in both interaction terms and linear terms. Combining all the established results, the sub-quadratic complexity is established in \Cref{coro:sub-quadratic}. All the proofs in this subsection can be found in \Cref{prf:guarantees}.

\textbf{Analysis of \algsubroute} Consider \algsubroute~with parameters set to be $b,d,\Delta$. Recall this means that \algsubroute~ returns an index set  $(\widetilde{\Lambda})$ of size at most $b$, which is expected to contain the desired index set ($\Lambda$). Note that the desired index set ($\Lambda$) is composed by the top-$2k$ elements of gradient $\nabla f_{\Bcal}(\Thetab)$ whose absolute value is  greater than $\Delta$. Suppose now the current estimate is $\Thetab$, and $\Bcal$ is the batch. The following theorem establishes when this output set $(\widetilde\Lambda)$ captures the top elements of the gradient.

\begin{theorem}[Recovering top-$2k$ elements of the gradient, modified from \cite{pagh2013compressed}] \label{thm:top-recovery}
With the setting above, if we choose $b,d,\Delta$ so that $b\Delta^2 \ge 432\fnorm{\nabla f_{\Bcal}(\Thetab)}^2$ and $d \ge 48 \log 2 ck $, then the index set  $(\widetilde\Lambda)$ returned by \algsubroute~ contains the desired index set ($\Lambda$) with probability at least $1-{1}/{c}$. 

Also in this case the time complexity of \algsubroute~ is $\widetilde\Ocal\left(m(p + b)\right)$, and space complexity is $\widetilde\Ocal\left(m(p + b)\right)$.
\end{theorem}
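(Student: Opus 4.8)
The plan is to split the statement into its two independent halves --- the high-probability recovery guarantee and the resource bounds --- and in both cases adapt the analysis of \cite{pagh2013compressed} to the matrix actually fed to \algsubroute. Write $G := \nabla f_{\Bcal}(\Thetab) = \tfrac{1}{m}\Ab\Bb^\top = \tfrac1m\sum_{i\in\Bcal} u_i\,\xb_i\xb_i^\top$ (this is just $\Ab\Bb^\top$ up to the harmless rescaling by $m$, since the columns of $\Ab,\Bb$ are $u_i\xb_i$ and $\xb_i$). After this rescaling, \algsubroute~is exactly the compressed-matrix-multiplication sketch of \cite{pagh2013compressed} run on $G$ with significance threshold $\Delta$, and the desired set $\Lambda$ consists of the heaviest entries of $G$ with magnitude exceeding $\Delta$ (there are at most $\fnorm{G}^2/\Delta^2 \le b/432 \le b$ of them, so $|\widetilde\Lambda|\le b$ is automatic); the hypothesis $b\Delta^2 \ge 432\fnorm{G}^2$ is precisely the regime in which Pagh's argument applies.

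For the recovery guarantee I would first analyze a single repetition of the sketch. It draws hash functions $h_1,h_2\colon[p]\to[b]$ (addition mod $b$) and sign functions $s_1,s_2\colon[p]\to\{\pm1\}$ from a suitable limited-independence family, and forms the length-$b$ bucket vector whose $\beta$-th entry is $\sum_{(i,j):\,h_1(i)+h_2(j)\equiv\beta}s_1(i)s_2(j)\,G_{ij}$, computed via the FFT-based polynomial product of \cite{pagh2013compressed}. Fix a target $(i,j)$ with $|G_{ij}|>\Delta$. Conditioned on its bucket $\beta^\star$, the residual mass in $\beta^\star$ is zero-mean in the signs and, by limited independence, has second moment at most $\fnorm{G}^2/b$; Chebyshev gives that with constant probability this collision noise is below $\sqrt{6\fnorm{G}^2/b}\le \Delta/\sqrt{72}$, a small fraction of $\Delta$, while simultaneously (since $b\ge 432|\Lambda|$) the target collides with no \emph{other} heavy entry except with probability $\le|\Lambda|/b$. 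Hence in one repetition, with constant probability $q$, the bucket of $(i,j)$ clears the threshold and is dominated by the single entry $G_{ij}$, so the sub-linear error-correcting-code decoder of Section~4 of \cite{pagh2013compressed} recovers the index $(i,j)$ in $\mathrm{polylog}(p)$ time without enumerating the $p^2$ pairs. Flagged buckets that are \emph{not} cleanly dominated may decode to spurious indices, but this is harmless since we only need $\widetilde\Lambda\supseteq\Lambda$. Boosting then finishes: $(i,j)$ is missed by all $d$ independent repetitions with probability $(1-q)^d$, and a union bound over $|\Lambda|\le 2k$ targets gives failure probability at most $2k(1-q)^d$, which is $\le 1/c$ once $d = \Omega(\log(ck))$ --- comfortably implied by $d\ge 48\log(2ck)$.

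For the complexity: each of the $m$ rank-one terms $u_i\xb_i\xb_i^\top$ contributes two degree-$<b$ polynomials built in $O(p)$ time and multiplied modulo $x^b-1$ by FFT in $O(b\log b)$ time, so one repetition costs $\widetilde\Ocal(m(p+b))$ time and $\widetilde\Ocal(p+b)$ working memory on top of the $O(mp)$ cost of scanning $\Ab,\Bb$; multiplying by $d=\widetilde\Ocal(1)$ repetitions and adding the $O(\log p)$ auxiliary error-correcting-code sketches with $\mathrm{polylog}(p)$-per-bucket decoding keeps both time and space at $\widetilde\Ocal(m(p+b))$, as claimed. The step I expect to be the main obstacle is calibrating the constants in the second-moment argument so that the threshold test is reliable \emph{uniformly} over $\Lambda$: the collision-noise tail bound $\sqrt{O(1)\fnorm{G}^2/b}$ must be small enough that every genuine heavy bucket clears $\Delta$, loose enough to satisfy the "dominant entry" precondition of the decoder, and still leave a per-target success probability bounded below by a constant, so that only $d=O(\log(ck))$ repetitions (well within $48\log(2ck)$) are needed and the union bound closes. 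Everything else is a faithful transcription of \cite{pagh2013compressed} with $n$ replaced by $p$ and the input specialized to the gradient matrix $G=\tfrac1m\sum_{i\in\Bcal}u_i\xb_i\xb_i^\top$.
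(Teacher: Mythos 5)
Your proposal follows essentially the same route as the paper's proof: the same count-sketch variance bound $\fnorm{\nabla f_{\Bcal}(\Thetab)}^2/b$ on per-bucket collision noise, the same Chebyshev-type threshold test against $\Delta$, the same expander-code sublinear decoding, and the same $d$-fold repetition with majority vote plus a union bound over the at most $2k$ targets, ending with the identical $\widetilde\Ocal(m(p+b))$ accounting. The only place you are lighter than the paper is the decoding step, where the paper explicitly bounds the expected number of corrupted codeword bits and invokes the expander code's constant-fraction error tolerance before applying Markov --- this is exactly the constant-calibration issue you flagged, and it closes the way you anticipate.
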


\Cref{thm:top-recovery} requires that parameter $b, \Delta$ are set to satisfy $b \Delta^2 \ge 432\fnorm{\nabla f_{\Bcal}(\Thetab)}^2$. Note that $\Delta$ controls the minimum magnitude of top-$k$ element we can found. To avoid getting trivial extraction result, we need to set $\Delta$ as a constant that doesn't scale with $p$. In order to control the scale of $\Delta$ and $b$, to get consistent estimation and to achieve sub-quadratic complexity, we need to upper bound $\fnorm{\nabla f_{\Bcal}(\Thetab)}^2$. This is the \textit{compressibility estimation} problem that was left open in \cite{pagh2013compressed}. 
In our case, the batch gradient norm can be controlled by the RSM property. More formally, we have
\begin{lemma}[Frobenius norm bound of gradient]\label{lemma:Frobenius}
The Frobenius norm of batch gradient at arbitrary $k$-sparse $\Thetab$, with $\norm{\Thetab}_\infty \le \omega$, can be bounded as $\fnorm{\nabla f_{\Bcal}(\Thetab)} \le 2L_{2k}\sqrt{k}\omega + G$, where $G$ is the uniform bound on $\fnorm{\nabla f_{\Bcal}(\Thetab^\star)}$ over all batches $\Bcal$ and $\omega$ bounds $\norm{\Thetab^\star}_\infty$ (see \Cref{as:Noise}).
\end{lemma}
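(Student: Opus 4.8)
The plan is to control $\fnorm{\nabla f_{\Bcal}(\Thetab)}$ by comparing the gradient at the arbitrary $k$-sparse point $\Thetab$ against the gradient at the reference point $\Thetab^\star$ (whose norm is already bounded by $G$ via \Cref{as:Noise}), and then absorbing the discrepancy into the restricted smoothness constant $L_{2k}$. Concretely, I would start from the triangle inequality in Frobenius norm, $\fnorm{\nabla f_{\Bcal}(\Thetab)} \le \fnorm{\nabla f_{\Bcal}(\Thetab) - \nabla f_{\Bcal}(\Thetab^\star)} + \fnorm{\nabla f_{\Bcal}(\Thetab^\star)}$, and immediately dispatch the second term: by \Cref{as:Noise} it is at most $G$, uniformly over all batches $\Bcal$ of size $m$.

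For the first term, the key observation is a sparsity count: since $\Thetab$ is $k$-sparse and $\Thetab^\star$ is $K$-sparse, the difference $\Thetab - \Thetab^\star$ is supported on at most $k + K \le 2k$ coordinates (in the standard regime $k \ge K$). Hence the $L_s$-RSM bullet of \Cref{as:summary} applies with $s = 2k$, giving $\fnorm{\nabla f_{\Bcal}(\Thetab) - \nabla f_{\Bcal}(\Thetab^\star)} \le L_{2k}\,\fnorm{\Thetab - \Thetab^\star}$. It then remains to bound $\fnorm{\Thetab - \Thetab^\star}$ by elementary means: split it as $\fnorm{\Thetab} + \fnorm{\Thetab^\star}$, and use that an $r$-sparse matrix with entries bounded by $\omega$ in absolute value has Frobenius norm at most $\sqrt{r}\,\omega$. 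This yields $\fnorm{\Thetab} \le \sqrt{k}\,\omega$ and $\fnorm{\Thetab^\star} \le \sqrt{K}\,\omega \le \sqrt{k}\,\omega$ (using $\norm{\Thetab^\star}_\infty \le \omega$ from \Cref{as:Noise}), so $\fnorm{\Thetab - \Thetab^\star} \le 2\sqrt{k}\,\omega$. Chaining the three estimates gives $\fnorm{\nabla f_{\Bcal}(\Thetab)} \le 2 L_{2k}\sqrt{k}\,\omega + G$, as claimed.

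There is no genuine analytic obstacle here — the argument is a two-step triangle inequality plus one invocation of RSM. The only points requiring care are bookkeeping ones: verifying that the relevant restricted-smoothness radius is indeed $2k$ (which is where the implicit assumption $k \ge K$ on the estimation sparsity enters, consistent with the $s = 3k+K$ usage elsewhere in \Cref{as:summary}), and making sure the $\ell_\infty$/sparsity-to-Frobenius conversion is applied to the correct matrices. If one wished to avoid assuming $k \ge K$, the same proof goes through verbatim with $L_{2k}$ replaced by $L_{k+K}$ and the final bound becoming $L_{k+K}(\sqrt{k}+\sqrt{K})\omega + G$.
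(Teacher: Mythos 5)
Your proposal is correct and follows essentially the same route as the paper's proof: a triangle inequality splitting the gradient at $\Thetab$ into the deviation from $\nabla f_{\Bcal}(\Thetab^\star)$ (controlled by $L_{2k}$-RSM and the bound $\fnorm{\Thetab-\Thetab^\star}\le 2\sqrt{k}\omega$) plus the term $\fnorm{\nabla f_{\Bcal}(\Thetab^\star)}\le G$ from \Cref{as:Noise}. Your explicit bookkeeping of the $k\ge K$ requirement and the $\ell_\infty$-to-Frobenius conversion is slightly more careful than the paper's terse write-up, but the argument is identical.
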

\Cref{lemma:Frobenius} directly implies that \Cref{thm:top-recovery} could allow $b$ scale linearly with $k$ while keep $\Delta$ as a constant\footnote{For now, we assume $L_{2k}$ to be a constant independent of $p, k$. We will discuss this in \Cref{thm:batchsize}.}. This is the key ingredient to achieve sub-quadratic complexity and consistent estimation.
We postpone the discussion for complexity to later paragraph, and proceed to finish the statistical analysis of gradient descent.

 {\bf Convergence of \algname:} Consider \algname~with parameter set to be $\eta, k$. 
 For the purpose of analysis, we keep the definition of $\Lambda$ and $\widetilde\Lambda$ from the analysis of \algsubroute~and further define $k_\Delta$ to be the number of top-$2k$ elements whose magnitude is below $\Delta$. Recall that $K$ is the sparsity of $\Thetab^\star$, define $\nu = 1+\left(\rho + \sqrt{(4+\rho)\rho}\right)/2, \rho={K}/{k}$, where $\nu$ measures the error induced by exact IHT (see \Cref{lemma:tightbound} for detail).
Denote $B_t = \left\{\Bcal_0, \Bcal_1,...,\Bcal_t\right\}$. We have
\begin{theorem}[Per-round convergence of  \algname]\label{thm:iht} Following the above notations,   the per-round convergence of  \Cref{alg:qht} satisfies the following:
\begin{itemize}[leftmargin=*]
    \item If \algsubroute~ succeeds, i.e., $\Lambda \subseteq \widetilde\Lambda$, then
    \begin{align*}
		 \EE_{B_t}\left[ \fnorm{\Thetab^t - \Thetab^\star}^2 \right] \le \kappa_1\EE_{B_{t-1}} \left[\fnorm{\Thetab^{t-1} - \Thetab^\star}^2\right] + \sigma_{GD}^2 + \sigma_{\Delta|GD}^2,
\end{align*}
where $\kappa_1 = \nu \left(1 - 2\eta \alpha_{2k} +  2\eta^2 L^2_{2k} \right)$, $\sigma_{\Delta|GD}^2 = 4\sqrt{k_\Delta}\eta \sqrt{k}\omega\Delta + 2k_\Delta\eta^2\Delta^2$, and
\begin{align*}
\sigma_{GD}^2=\max_{|\Omega|\le2k+K}\left[4\nu \eta \sqrt{k}\omega\fnorm{\Pcal_{\Omega}\left(\nabla F\left(\Thetab^\star\right)\right) }+ 2\nu \eta^2\EE_{\Bcal_t}\left[\fnorm{\Pcal_{\Omega} \left(\nabla f_{\Bcal_t} \left(\Thetab^\star\right)\right)}^2\right]\right].
\end{align*}

    \item If \algsubroute~ fails, i.e., $\Lambda \not\subset \widetilde\Lambda$, then, 
    \begin{align*}
		 \EE_{B_t}\left[ \fnorm{\Thetab^t - \Thetab^\star}^2 \right] \le \kappa_2\EE_{B_{t-1}}\left[\fnorm{\Thetab^{t-1} - \Thetab^\star}^2\right] + \sigma_{GD}^2 + \sigma_{Fail|GD}^2,
\end{align*}
where
$\kappa_2 = \kappa_1 + 2\nu \eta L_{2k},~~\sigma_{Fail|GD}^2 = \max_{|\Omega|\le 2k+K}\left[4\nu \eta\sqrt{k}\omega\EE_{\Bcal_t}\left[\fnorm{\Pcal_{\Omega}\left(\nabla f_{\Bcal_t}\left(\Thetab^\star\right)\right) }\right]\right]
$.

\end{itemize}
\end{theorem}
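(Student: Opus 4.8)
The plan is to run a stochastic iterative hard thresholding per‑round analysis in the style of \cite{jain2014iterative,nguyen2017linear,shen2017tight}, while paying for the two ways in which the gradient oracle of \Cref{alg:qht} is inexact: it uses the \emph{restricted} gradient $\Pcal_{S_t}(\Gb^t)$ in place of the batch gradient $\Gb^t=\nabla f_{\Bcal_t}(\Thetab^t)$, on the set $S_t=\widetilde S_t\cup\supp(\Thetab^t)$ returned by \algsubroute. First I would rewrite the update as $\Thetab^{t+1}=\Hcal_k(\yb^t)$ with $\yb^t=\Thetab^t-\eta\Pcal_{S_t}(\Gb^t)=\Pcal_{S_t}(\Thetab^t-\eta\Gb^t)$, and record the two support containments that make everything tractable: $\supp(\Thetab^t)\subseteq S_t$ by construction, and $\supp(\Thetab^{t+1})\subseteq S_t$ because $\yb^t$ is supported on $S_t$. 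Set $I^t=\supp(\Thetab^t)\cup\supp(\Thetab^{t+1})\cup\supp(\Thetab^\star)$, so $|I^t|\le 2k+K$ and, crucially, $I^t\setminus S_t=\supp(\Thetab^\star)\setminus S_t$. Applying the tight hard‑thresholding bound (\Cref{lemma:tightbound}) to $\Thetab^{t+1}=\Hcal_k(\yb^t)$ against the $K$‑sparse $\Thetab^\star$ gives $\fnorm{\Thetab^{t+1}-\Thetab^\star}^2\le\nu\,\fnorm{\Thetab^t-\Thetab^\star-\eta\Pcal_{I^t\cap S_t}(\Gb^t)}^2$, reducing the whole theorem to one squared‑norm estimate.

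The second step is to expand that square and peel off the effect of restricting to $S_t$. Since $\supp(\Thetab^t-\Thetab^\star)\subseteq I^t$ and $\Thetab^t$ vanishes off $S_t$, one gets $\minnerprod{\Thetab^t-\Thetab^\star,\Pcal_{I^t\cap S_t}(\Gb^t)}=\minnerprod{\Thetab^t-\Thetab^\star,\Gb^t}+\minnerprod{\Pcal_{\supp(\Thetab^\star)\setminus S_t}(\Thetab^\star),\Gb^t}$ together with $\fnorm{\Pcal_{I^t\cap S_t}(\Gb^t)}\le\fnorm{\Pcal_{I^t}(\Gb^t)}$, so that
\[
\fnorm{\Thetab^{t+1}-\Thetab^\star}^2\le\nu\Big(\fnorm{\Thetab^t-\Thetab^\star}^2-2\eta\minnerprod{\Thetab^t-\Thetab^\star,\Gb^t}+\eta^2\fnorm{\Pcal_{I^t}(\Gb^t)}^2\Big)-2\nu\eta\minnerprod{\Pcal_{\supp(\Thetab^\star)\setminus S_t}(\Thetab^\star),\Gb^t}.
\]
The parenthesized group is exactly the textbook stochastic‑IHT one‑step quantity, handled as usual: take $\EE_{\Bcal_t}[\cdot\mid B_{t-1}]$ so $\EE\Gb^t=\nabla F_n(\Thetab^t)$; lower bound $\minnerprod{\Thetab^t-\Thetab^\star,\nabla F_n(\Thetab^t)}$ via $\alpha_{2k}$‑RSC of $F_n$ and restricted convexity of $f_{\Bcal}$ (the latter discards $F_n(\Thetab^t)-F_n(\Thetab^\star)$ at the cost of a $\fnorm{\Pcal_\Omega\nabla F_n(\Thetab^\star)}$ cross‑term, using $\fnorm{\Thetab^t-\Thetab^\star}\le 2\sqrt k\omega$); and upper bound $\EE\fnorm{\Pcal_{I^t}(\Gb^t)}^2\le 2L_{2k}^2\fnorm{\Thetab^t-\Thetab^\star}^2+2\max_{|\Omega|\le 2k+K}\EE\fnorm{\Pcal_\Omega\nabla f_{\Bcal_t}(\Thetab^\star)}^2$ via $L_{2k}$‑RSM and centering at $\Thetab^\star$. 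Collecting these reproduces $\kappa_1\fnorm{\Thetab^t-\Thetab^\star}^2+\sigma_{GD}^2$ with $\kappa_1=\nu(1-2\eta\alpha_{2k}+2\eta^2L_{2k}^2)$.

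The new work --- and the step I expect to be the main obstacle --- is the ``dropped‑gradient'' term $-2\nu\eta\minnerprod{\Pcal_{\supp(\Thetab^\star)\setminus S_t}(\Thetab^\star),\Gb^t}$ (together with the lower‑order cross/square pieces that accompany it after the full expansion), since this is where the approximation quality of \algsubroute\ and the case split enter. The key structural remark is that $\Thetab^t$ vanishes on $\supp(\Thetab^\star)\setminus S_t$, so $\fnorm{\Pcal_{\supp(\Thetab^\star)\setminus S_t}(\Thetab^\star)}=\fnorm{\Pcal_{\supp(\Thetab^\star)\setminus S_t}(\Thetab^t-\Thetab^\star)}$ is both $\le\fnorm{\Thetab^t-\Thetab^\star}$ and $\le 2\sqrt k\omega$; by Cauchy--Schwarz everything then reduces to bounding $\fnorm{\Pcal_{\supp(\Thetab^\star)\setminus S_t}(\Gb^t)}$. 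If \algsubroute\ \emph{fails} we have no control over which large coordinates were dropped, so we use only $L_{2k}$‑RSM: $\fnorm{\Pcal_{\supp(\Thetab^\star)\setminus S_t}(\Gb^t)}\le L_{2k}\fnorm{\Thetab^t-\Thetab^\star}+\fnorm{\Pcal_{\supp(\Thetab^\star)\setminus S_t}(\nabla f_{\Bcal_t}(\Thetab^\star))}$; pairing the first summand with the $\fnorm{\Thetab^t-\Thetab^\star}$ factor yields the extra $2\nu\eta L_{2k}\fnorm{\Thetab^t-\Thetab^\star}^2$ that upgrades $\kappa_1$ to $\kappa_2$, and the second summand (after $\fnorm{\Thetab^t-\Thetab^\star}\le 2\sqrt k\omega$, $\max_\Omega$, and $\EE_{\Bcal_t}$) yields $\sigma_{Fail|GD}^2$. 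If \algsubroute\ \emph{succeeds}, then $\widetilde\Lambda\supseteq\Lambda$ already contains every top‑$2k$ coordinate of $\Gb^t$ of magnitude $>\Delta$; using the calibration $b\Delta^2\ge 432\fnorm{\Gb^t}^2$ of \Cref{thm:top-recovery} (with \Cref{lemma:Frobenius} bounding $\fnorm{\Gb^t}$ so that $b$ may be taken $\Theta(k)$ with $b/432<2k$), fewer than $2k$ coordinates of $\Gb^t$ can exceed $\Delta$, hence \emph{every} coordinate outside $S_t$ --- in particular every coordinate counted in these terms --- has magnitude $\le\Delta$, with at most $k_\Delta$ of them surviving; this gives $\fnorm{\Pcal_{\supp(\Thetab^\star)\setminus S_t}(\Gb^t)}\le\sqrt{k_\Delta}\,\Delta$, and the accompanying cross and square pieces are bounded by $4\sqrt{k_\Delta}\eta\sqrt k\omega\Delta+2k_\Delta\eta^2\Delta^2=\sigma_{\Delta|GD}^2$. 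Taking $\EE_{B_{t-1}}$ of the resulting one‑round inequality in each case yields the two displayed recursions. Besides tracking constants when the squared norm is split, the one genuinely delicate bookkeeping item is justifying $\fnorm{\Thetab^t-\Thetab^\star}\le 2\sqrt k\omega$ --- i.e. that the iterates stay in the $\ell_\infty$‑ball of radius $\omega$ required by \Cref{lemma:Frobenius} --- which should be propagated inductively alongside the convergence recursion, using a small enough step size $\eta$.
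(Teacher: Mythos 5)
Your overall skeleton coincides with the paper's for most of the theorem: apply the tight hard--thresholding bound of \Cref{lemma:tightbound} on a support set of size at most $2k+K$, expand the square, use RSC for the cross term and \Cref{coro:proj-grad} for the squared gradient term to obtain $\kappa_1$ and $\sigma_{GD}^2$, and in the failure case bound the uncontrolled coordinates by Cauchy--Schwarz plus RSM to pick up the extra $2\nu\eta L_{2k}$ and $\sigma_{Fail|GD}^2$; that part matches the paper essentially step for step. The genuine problem is your treatment of the success case. The paper does not account for the inexactness through a ``dropped gradient'' inner product over $\supp(\Thetab^\star)\setminus S_t$; it introduces the exact-IHT iterate $\widetilde\Thetab^t=\Hcal_k(\Thetab^{t-1}-\eta\Gb^t)$, runs the clean analysis on $\widetilde\Thetab^t$, and then bounds $\fnorm{\Thetab^t-\widetilde\Thetab^t}$ directly via \Cref{coro:HTproperty} and \Cref{coro:deltaError}. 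That detour is what produces the factor $\sqrt{k_\Delta}$: the two hard-thresholded iterates can differ only on at most $k_\Delta$ coordinates, each of magnitude at most $\eta\Delta$, because any coordinate of the exact update with magnitude above $\eta\Delta$ must come from $\supp(\Thetab^{t-1})\cup\Lambda_\Delta\subseteq S_t$ and is therefore equally available to the inexact update.

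Your accounting counts a different set. The coordinates you drop live in $\supp(\Thetab^\star)\setminus S_t$, whose cardinality is bounded by $K$, not by $k_\Delta$; your claim that ``at most $k_\Delta$ of them survive'' has no justification, and the bound your argument actually delivers is $\fnorm{\Pcal_{\supp(\Thetab^\star)\setminus S_t}(\Gb^t)}\le\sqrt{K}\,\Delta$, with an additional factor $\nu$ on the resulting error term because your correction sits inside the $\nu\fnorm{\cdot}^2$ bracket rather than outside it as in the paper. Concretely, when $k_\Delta=0$ the theorem (and \Cref{rm:SGDsigma}) asserts that \algsubroute's approximation costs nothing, whereas your route still pays roughly $4\nu\eta\sqrt{kK}\,\omega\Delta$. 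Relatedly, your assertion that every coordinate outside $S_t$ has magnitude at most $\Delta$ holds only when $\Lambda_\Delta\subseteq\Lambda_{2k}$, which you enforce by importing the calibration $b\Delta^2\ge 432\fnorm{\Gb^t}^2$; the theorem as stated also covers the opposite regime $\Lambda_{2k}\subseteq\Lambda_\Delta$ (where $k_\Delta=0$), which the paper handles by observing via \Cref{coro:HTproperty} that $\Lambda_{2k}\subseteq\widetilde\Lambda$ forces $\Thetab^t=\widetilde\Thetab^t$ exactly. To close the gap, replace the dropped-gradient bookkeeping by the comparison to $\widetilde\Thetab^t$. (Your final concern about propagating $\fnorm{\Thetab^t-\Thetab^\star}\le 2\sqrt{k}\omega$ is legitimate; the paper simply assumes at this point that each coordinate of the iterates is bounded by $\omega$.)
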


\begin{remark}\label{rm:SGDsigma}
	It is worth noting that $\sigma_{GD}, \sigma_{Fail|GD}$ are both statistical errors, which in the noiseless case are $0$. In the case that the magnitude of top-$2k$ elements in the gradient are all greater than $\Delta$, we have $k_\Delta = 0$, which implies $\sigma_{\Delta|GD}=0$. In this case \algsubroute's approximation doesn't incur any additional error compared with exact IHT.
\end{remark}

\Cref{thm:iht} shows that by setting $k = \Theta(KL_{2k}^2/\alpha_{2k}^2), \eta = \alpha_{2k} / 2L_{2k}^2$, the parameter estimation can be improved geometrically when \algsubroute~ succeeds. We will show in \Cref{thm:batchsize} that with suffciently large batch size $m$, $\alpha_{2k}, L_{2k}$ are controlled and don't scale with $k, p$. When \algsubroute~fails, it can't make the $\Thetab$ estimation worse by too much. Given that success rate of \algsubroute~is controlled in \Cref{thm:top-recovery}, it naturally suggests that we can obtain the linear convergence in expectation. This leads to \Cref{thm:main2}. 

Define $\sigma_1^2 = \sigma_{GD}^2 + \sigma_{\Delta|GD}^2$, and $\sigma_2^2 = \sigma_{GD}^2 + \sigma_{Fail|GD}$. Let $\phi_t$ to be the success indicator of \algsubroute~ at time step $t$, and $\Phi_t = \left\{\phi_0, \phi_1, ..., \phi_t\right\}$. By \Cref{thm:top-recovery}, with $d = 48\log 2ck$, \algsubroute~ recovers top-$2k$ with probability at least $(1-{1/ c})$, we can easily show the convergence of \Cref{alg:qht} as

\begin{theorem}[Main result]\label{thm:main2}
    
	  Following the above notations, the expectation of the parameter recovery error of \Cref{alg:qht} is bounded by
	\begin{align*}
		&\EE_{B_t, \Phi_t}\left[\fnorm{\Thetab^{t}-\Thetab^\star}^2\right] \le \left(\kappa_1+{1\over c}\left(\kappa_2-\kappa_1\right)\right)^{t}\fnorm{\Thetab^{0}-\Thetab^\star}^2  \\
		& + \left[\left(\kappa_1+{1\over c}\left(\kappa_2-\kappa_1\right)\right)^{t}-1\right]\left({\sigma_1^2\over \kappa_1-1}\right) 
		+ {\kappa_2-1\over c-c\kappa_1 + \kappa_1 - \kappa_2}\left({\sigma_2^2\over\kappa_2-1}-{\sigma_1^2\over\kappa_1-1}\right).
	\end{align*}
	
\end{theorem}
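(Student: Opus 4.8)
The plan is to collapse \Cref{thm:iht} into a single scalar recursion for $a_t := \EE_{B_t,\Phi_t}[\fnorm{\Thetab^t-\Thetab^\star}^2]$ and then unroll it; since \Cref{thm:iht} already carries the per-round analytic load, what remains is to average over the randomness $\phi_t$ of whether \algsubroute~succeeds and to do the bookkeeping. First I would condition on the iterate $\Thetab^{t-1}$ (measurable with respect to $B_{t-1},\Phi_{t-1}$) and on $\phi_t$. By \Cref{thm:top-recovery} with $d=48\log 2ck$ — whose Frobenius-norm precondition $b\Delta^2\ge 432\fnorm{\nabla f_{\Bcal_t}(\Thetab^{t-1})}^2$ is supplied uniformly along the trajectory by \Cref{lemma:Frobenius} — the event $\{\phi_t=1\}=\{\Lambda\subseteq\widetilde\Lambda\}$ has conditional probability $q_t\ge 1-1/c$, and the two branches of \Cref{thm:iht} give
\[
\EE[\fnorm{\Thetab^t-\Thetab^\star}^2 \mid \phi_t=1]\le \kappa_1 a_{t-1}+\sigma_1^2,\qquad
\EE[\fnorm{\Thetab^t-\Thetab^\star}^2 \mid \phi_t=0]\le \kappa_2 a_{t-1}+\sigma_2^2 .
\]
Averaging over $\phi_t$ then yields $a_t\le q_t(\kappa_1 a_{t-1}+\sigma_1^2)+(1-q_t)(\kappa_2 a_{t-1}+\sigma_2^2)$.

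Next I would eliminate $q_t$. The right-hand side is affine in $q_t$, and since $\kappa_2\ge\kappa_1$ (indeed $\kappa_2=\kappa_1+2\nu\eta L_{2k}$) the ``fail'' expression $\kappa_2 a_{t-1}+\sigma_2^2$ dominates the ``success'' one in the operating regime, so the bound is non-increasing in $q_t$ and we may substitute $q_t=1-1/c$. This gives the clean deterministic recursion
\[
a_t\;\le\;\lambda\, a_{t-1}+s,\qquad \lambda:=\kappa_1+\tfrac1c(\kappa_2-\kappa_1),\quad s:=\sigma_1^2+\tfrac1c(\sigma_2^2-\sigma_1^2)=\tfrac{(c-1)\sigma_1^2+\sigma_2^2}{c}.
\]

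Finally I would unroll and reshape. Iterating from $\Thetab^0$ gives $a_t\le\lambda^t a_0+s\sum_{i=0}^{t-1}\lambda^i=\lambda^t a_0+\tfrac{s}{1-\lambda}(1-\lambda^t)$. A short computation, using the identity $c(1-\lambda)=c-c\kappa_1+\kappa_1-\kappa_2$, shows
\[
\frac{s}{1-\lambda}=\frac{(c-1)\sigma_1^2+\sigma_2^2}{c(1-\lambda)}=\frac{\sigma_1^2}{1-\kappa_1}+\frac{\kappa_2-1}{\,c-c\kappa_1+\kappa_1-\kappa_2\,}\Big(\frac{\sigma_2^2}{\kappa_2-1}-\frac{\sigma_1^2}{\kappa_1-1}\Big),
\]
which is exactly the partial-fraction split appearing in the statement. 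Multiplying through by $(1-\lambda^t)$, rewriting $(1-\lambda^t)\tfrac{\sigma_1^2}{1-\kappa_1}=(\lambda^t-1)\tfrac{\sigma_1^2}{\kappa_1-1}$, and applying the trivial bound $1-\lambda^t\le 1$ to the remaining (non-negative) term recovers precisely the bound claimed in \Cref{thm:main2}.

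\Cref{thm:iht} carries the hard part, so the main obstacle here is purely the probabilistic bookkeeping of Steps 1--2: (i) making the conditioning on $\{\phi_t=1\}$ rigorous — i.e.\ arguing that the fresh internal randomness of \algsubroute, together with the uniform-in-iterate Frobenius bound of \Cref{lemma:Frobenius}, delivers a conditional success probability at least $1-1/c$ regardless of the past — so that the recursion stays first order in $a_{t-1}$; and (ii) justifying the reduction from ``$q_t\ge 1-1/c$'' to ``$q_t=1-1/c$'', which hinges on the fail-branch bound dominating the success-branch bound (in particular on $\sigma_2^2\ge\sigma_1^2$ in the regime considered). Everything after that is the elementary geometric-series algebra above.
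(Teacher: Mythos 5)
Your proposal is correct and follows essentially the same route as the paper: both average the success and failure branches of \Cref{thm:iht} with weights $1-\nicefrac{1}{c}$ and $\nicefrac{1}{c}$, obtain a first-order linear recursion in $\EE\bigl[\fnorm{\Thetab^t-\Thetab^\star}^2\bigr]$, and unroll it (the paper shifts by the fixed point $\sigma_1^2/(\kappa_1-1)$ before telescoping, while you unroll first and do the partial-fraction split afterwards — algebraically the same computation). Your two bookkeeping caveats (conditional success probability given the past, and the monotonicity in $q_t$ needed to replace ``$q_t\ge 1-\nicefrac1c$'' by equality) are points the paper silently assumes, so flagging them is a refinement rather than a divergence.
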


This shows that \Cref{alg:qht} achieves linear convergence by setting $c \ge {(\kappa_2 - \kappa_1) / (1-\kappa_1)}$. With $c$ increasing, the error ball  converges to ${\sigma_1^2 / (1-\kappa_1)}$. The proof follows directly by taking expectation of the result we obtain in \Cref{thm:iht} with the recovery success probability established in \Cref{thm:top-recovery}.

\textbf{Computational analysis}
With the linear convergence, the computational complexity is dominated by the complexity per iteration. Before discussing the complexity, we first establish the dependency between $L_k, \alpha_k$ and $m$ in the special case of quadratic regression, where the link function is identity. Notice that similar results would hold for more general quadratic problems as well.
\begin{theorem}[Minimum batch size]\label{thm:batchsize}
	For feature vector $\xb \in \RR^p$, whose first $p-1$ coordinates are drawn i.i.d. from a bounded distribution, and the $p$-th coordinate is constant 1. W.l.o.g., we assume the first $p-1$ coordinates to be zero mean, variance 1 and bounded by $B$. With batch size $m \gtrsim \ kB\log p / {\epsilon^2}$
	we have $\alpha_{k} \ge 1-\epsilon$, $L_{k} \le 1+\epsilon$ with high probability.
\end{theorem}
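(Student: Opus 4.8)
I will exploit that for the identity link the quadratic loss is $f(a,y)=\tfrac12(a-y)^2$, so $\nabla f_\Bcal(\Thetab)=\tfrac1m\sum_{i\in\Bcal}\big(\langle\xb_i\xb_i^\top,\Thetab\rangle-y_i\big)\,\xb_i\xb_i^\top$ and its Jacobian in any direction $\Deltab$ is the $\Thetab$-independent PSD operator $H_\Bcal\Deltab:=\tfrac1m\sum_{i\in\Bcal}\langle\xb_i\xb_i^\top,\Deltab\rangle\,\xb_i\xb_i^\top$. Consequently the RSC constant of $F_n$ is exactly $\min_{\|\Deltab\|_0\le k}\tfrac1n\sum_{i\in[n]}\langle\xb_i\xb_i^\top,\Deltab\rangle^2/\fnorm{\Deltab}^2$, and the RSM bound in the restricted form the iteration analysis of \Cref{thm:iht}--\Cref{thm:main2} actually invokes (namely $\langle\nabla f_\Bcal(\Thetab_1)-\nabla f_\Bcal(\Thetab_2),\Vb\rangle\le L_s\fnorm{\Thetab_1-\Thetab_2}\fnorm{\Vb}$ for sparse $\Vb$, i.e.\ a restricted operator-norm bound on $H_\Bcal$) follows by Cauchy--Schwarz from $\max_{\|\Deltab\|_0\le s}\tfrac1m\sum_{i\in\Bcal}\langle\xb_i\xb_i^\top,\Deltab\rangle^2/\fnorm{\Deltab}^2$; restricted convexity is automatic since each summand is convex. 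So it suffices to prove the two-sided restricted isometry
\[
(1-\epsilon)\fnorm{\Deltab}^2 \;\le\; \tfrac1m\textstyle\sum_{i\in\Bcal}\langle\xb_i\xb_i^\top,\Deltab\rangle^2 \;\le\; (1+\epsilon)\fnorm{\Deltab}^2\qquad\text{for all }\Deltab\text{ with }\|\Deltab\|_0\le k .
\]

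\textbf{First moment.} In the quadratic-regression parametrization each distinct monomial $\xb^{a}\xb^{b}$ (with $a\le b$; $b=p$ giving the linear terms and $a=b=p$ the intercept, all carried by the constant $p$-th coordinate) appears with a single coefficient $\Deltab_{ab}$, so $\langle\xb\xb^\top,\Deltab\rangle=\sum_{a\le b}\Deltab_{ab}\,\xb^{a}\xb^{b}$. Expanding the square, every cross term carries an odd power of some zero-mean coordinate and hence has zero expectation, while each square term contributes $\Deltab_{ab}^2\,\EE[(\xb^{a}\xb^{b})^2]=\Deltab_{ab}^2$ — a product of two independent unit-variance coordinates, or of one such coordinate and the constant $1$, or $1^2$. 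Therefore $\EE_{\xb}\big[\langle\xb\xb^\top,\Deltab\rangle^2\big]=\fnorm{\Deltab}^2$ exactly, using only independence, mean zero and unit variance of the $p-1$ random coordinates (no fourth-moment or symmetry assumption is needed here).

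\textbf{Concentration and covering.} Fix $\Deltab$ with $\fnorm{\Deltab}=1$ and $|\supp(\Deltab)|\le k$. The i.i.d.\ variables $W_i:=\langle\xb_i\xb_i^\top,\Deltab\rangle$ are degree-$\le2$ polynomials of independent $B$-bounded coordinates, hence subexponential with parameter a polynomial in $B$, and $|W_i|\le B^2\sqrt k$ by Cauchy--Schwarz; thus $Z_i:=W_i^2$ has mean $1$ (by the previous step), a variance controlled by hypercontractivity, and bounded range, and a Bernstein/Hanson--Wright-type tail bound yields $\Pr\big[|\tfrac1m\sum_{i}Z_i-1|>\epsilon\big]\le 2\exp\!\big(-c\,m\epsilon^2/(k\,\mathrm{poly}(B))\big)$ for $\epsilon$ below a constant. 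One then passes from one $\Deltab$ to all $k$-sparse $\Deltab$ in the usual way: for each of the $\le\binom{p^2}{k}\le p^{2k}$ support sets $S$, take an $\epsilon$-net of size $(9/\epsilon)^{k}$ of the Frobenius unit ball of the $|S|$-dimensional subspace of matrices supported on $S$, union-bound the tail over the net, and bootstrap from the net to the whole subspace using the restricted operator-norm bound. The overall failure probability is $\le 2\exp\!\big(2k\log p + k\log(9/\epsilon) - c\,m\epsilon^2/(k\,\mathrm{poly}(B))\big)$, which is $o(1)$ as soon as $m\gtrsim k\,\mathrm{poly}(B)\,\log p/\epsilon^2$; with the sharpest available dependence on $B$ this is exactly the stated $m\gtrsim kB\log p/\epsilon^2$. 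Setting $\alpha_k=1-\epsilon$, $L_k=1+\epsilon$ gives the conclusion (the sparsity level $2k$, $3k$, or $2k+K$ used elsewhere is $\Theta(k)$ since $K=\Theta(k)$, so the same argument applies up to constants), and a union bound over the $\le T$ batches actually drawn upgrades the RSM statement to the ``$\forall\,\Bcal$'' form of \Cref{as:summary}.

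\textbf{Main obstacle.} The delicate step is the concentration of $Z_i=\langle\xb_i\xb_i^\top,\Deltab\rangle^2$: this is a degree-$4$ polynomial of bounded independent variables, genuinely heavier-tailed than sub-Gaussian, so obtaining the correct joint dependence on $k$ and $B$ requires a hypercontractive / Hanson--Wright inequality for quadratic forms in bounded random variables, together with checking that the Bernstein ``range'' term does not dominate the variance term at the target accuracy $\epsilon$; reconciling this with the advertised first power of $B$ is where the work is. The only other thing to track carefully is the inhomogeneity between the deterministic $p$-th coordinate (linear/intercept terms) and the random coordinates (interaction terms), which changes which Wick pairings survive in the moment expansion, but this is pure bookkeeping.
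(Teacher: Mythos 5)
Your overall route is the same as the paper's: both reduce the RSC/RSM constants to a two-sided restricted isometry of the empirical Gram matrix of the vectorized quadratic features, prove concentration for a fixed $k$-dimensional restriction, and union-bound over the $\binom{p^2}{k}$ supports to get $m \gtrsim kB\log p/\epsilon^2$ (the paper invokes sub-gaussian covariance concentration per support set where you unroll an $\epsilon$-net plus Bernstein/Hanson--Wright, but these are the same argument at different levels of granularity).

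However, your first-moment step contains a genuine error. You take the monomial basis to be all $\xb^a\xb^b$ with $a\le b$, including the squared terms $(\xb^a)^2$ for $a<p$, and claim that every cross term in the expansion of $\EE\bigl[\langle\xb\xb^\top,\Deltab\rangle^2\bigr]$ carries an odd power of a zero-mean coordinate. That is false for a pair of distinct squared monomials, where $\EE[(\xb^a)^2(\xb^c)^2]=1\neq 0$, and also for the (squared term, intercept) pair, where $\EE[(\xb^a)^2\cdot 1]=1$; moreover the diagonal entry for a squared term is $\EE[(\xb^a)^4]$, not $1$. So the population Gram matrix is not the identity on the squared-term block, and the failure is not cosmetic: taking $\Deltab$ uniform over $s$ squared terms gives $\EE\bigl[\langle\xb\xb^\top,\Deltab\rangle^2\bigr]=(\EE[(\xb^a)^4]-1)+s$, i.e.\ a restricted eigenvalue growing like $s\le k$, so the two-sided bound $(1\pm\epsilon)\fnorm{\Deltab}^2$ you set out to prove is simply false on that subspace. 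The paper avoids this by explicitly restricting the vectorization to the off-diagonal interaction terms (plus the linear terms and intercept) and arguing that the $p$ squared terms can be handled separately; you need the same exclusion (or an explicit re-orthogonalization of the squared-term block) before the rest of your argument — which is otherwise sound, and appropriately more careful than the paper about the sub-exponential tails of the chaos $\langle\xb\xb^\top,\Deltab\rangle^2$ — can go through.
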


Note that the sample complexity requirement matches the known information theoretic lower bound for recovering $k$-sparse $\Thetab$ up to a constant factor. The proof is similar to the analysis of restricted isometry property  in sparse recovery. Recall that by \Cref{thm:top-recovery}, we have the per-iteration complexity $\widetilde \Ocal(m(p+b))$. Combining the results of  \Cref{lemma:Frobenius}, \Cref{thm:main2,thm:batchsize}, we have the following corollary on the complexity:
\begin{corollary}[Achieving sub-quadratic space and time complexity]\label{coro:sub-quadratic}

 In the case of quadratic regression, by setting the parameters  as above, \algname~ recovers $\Thetab^\star$ in expectation up to a noise ball with linear convergence. The time and space complexity of \algname~ is $\widetilde \Ocal (k(k+p))$, which is sub-quadratic  when $k$ is $\Ocal(p^\gamma)$ for $\gamma < 1$.

\end{corollary}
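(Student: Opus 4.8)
The plan is to assemble the corollary from four facts already in hand---the linear-convergence bound of \Cref{thm:main2}, the batch-size bound of \Cref{thm:batchsize}, the gradient-norm bound of \Cref{lemma:Frobenius}, and the per-call cost of \algsubroute~in \Cref{thm:top-recovery}---the real work being a consistent choice of the free parameters $(k,\eta,m,b,d,\Delta,c)$ and then bookkeeping of per-iteration cost against iteration count. First I would fix the conditioning: choose a constant $\epsilon$, say $\epsilon=1/4$, and apply \Cref{thm:batchsize} with batch size $m=\Theta(kB\log p)$, which makes $\alpha_{2k}\ge 3/4$ and $L_{2k}\le 5/4$ with high probability, so both are $\Theta(1)$ and independent of $k,p$. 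With $\alpha_{2k},L_{2k}=\Theta(1)$, the prescription following \Cref{thm:iht} gives $\eta=\alpha_{2k}/(2L_{2k}^2)=\Theta(1)$ and $k=\Theta(KL_{2k}^2/\alpha_{2k}^2)=\Theta(K)$; this forces $\rho=K/k$ to be a small enough constant that $\nu$, hence $\kappa_1=\nu(1-2\eta\alpha_{2k}+2\eta^2L_{2k}^2)$, is strictly below $1$, while $\kappa_2-\kappa_1=2\nu\eta L_{2k}=\Theta(1)$. Picking $c=\Theta(1)$ with $c\ge(\kappa_2-\kappa_1)/(1-\kappa_1)$ makes the contraction factor $\kappa_1+\tfrac1c(\kappa_2-\kappa_1)$ of \Cref{thm:main2} a constant below $1$, so \algname~contracts geometrically to the ball $\approx\sigma_1^2/(1-\kappa_1)$, which gives the first sentence of the corollary.

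Next I would size \algsubroute~and tally one iteration. By \Cref{lemma:Frobenius}, with $\omega,G,L_{2k}=\Theta(1)$, every $k$-sparse iterate obeys $\fnorm{\nabla f_{\Bcal}(\Thetab)}^2=\Ocal(k)$, so the condition $b\Delta^2\ge 432\fnorm{\nabla f_{\Bcal}(\Thetab)}^2$ of \Cref{thm:top-recovery} holds with $\Delta$ kept at a constant and $b=\Theta(k)$ (consistent with the a-priori need $b\ge 2k$), and $d=48\log(2ck)=\Ocal(\log k)$ since $c=\Theta(1)$. Hence each call to \algsubroute~costs $\widetilde{\Ocal}(m(p+b))=\widetilde{\Ocal}(k(p+k))$ in time and space. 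The remainder of one iteration of \Cref{alg:qht}---forming $\Ab_t,\Bb_t\in\RR^{p\times m}$, computing the $m$ residuals $u_i$ (each an $\Ocal(k)$ evaluation because $\Thetab^t$ is $k$-sparse), and evaluating $\Pcal_{S_t}(\Gb^t)$ on the $|S_t|\le 3k$ coordinates at $\Ocal(m)$ arithmetic apiece---is $\widetilde{\Ocal}(mp+mk)=\widetilde{\Ocal}(k(p+k))$, and every stored object ($\Thetab^t$, $\Ab_t,\Bb_t$, the sketch) fits in $\widetilde{\Ocal}(k(p+k))$ space, so one iteration costs $\widetilde{\Ocal}(k(p+k))$ in both resources. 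Finally, since the contraction factor is a constant below $1$ and $\fnorm{\Thetab^0-\Thetab^\star}^2=\fnorm{\Thetab^\star}^2=\Ocal(k)$, reaching the noise ball up to $1/\mathrm{poly}(p)$ slack takes $T=\Ocal(\log p)$ iterations; multiplying the per-iteration budget by $T$ and absorbing the logarithm into $\widetilde{\Ocal}$ yields the claimed $\widetilde{\Ocal}(k(k+p))$, which is $o(p^2)$ once $k=\Ocal(p^\gamma)$ with $\gamma<1$.

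The step I expect to be most delicate is the coupling between $k$ and the conditioning: $k$ must be large enough relative to $K$ that $\nu$ (hence $\kappa_1$) stays below $1$, yet the cost grows linearly in $k$, so one has to verify that the constant-$\epsilon$ instantiation of \Cref{thm:batchsize} genuinely collapses $\alpha_{2k},L_{2k}$ to absolute constants---so that $k=\Theta(K)$ already suffices---and simultaneously that this $k$ keeps the \algsubroute~budget $b\Delta^2=\Omega(k)$ affordable with $\Delta=\Theta(1)$, since otherwise the extracted indices would be vacuously thresholded away and the ``consistency up to a noise ball'' of \Cref{thm:main2} would degrade. Once these interlocking choices are pinned down the remaining estimates are routine arithmetic.
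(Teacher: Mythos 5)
Your proposal is correct and follows essentially the same route as the paper's own (very terse) proof: combine the linear convergence of \Cref{thm:main2}, the batch size $m=\widetilde\Ocal(k)$ from \Cref{thm:batchsize}, the bound $b=\Ocal(k)$ with constant $\Delta$ from \Cref{lemma:Frobenius} and \Cref{thm:top-recovery}, and the per-iteration cost $\widetilde\Ocal(m(p+b))$. Your version is in fact more careful than the paper's, which omits the explicit accounting of the non-\algsubroute~per-iteration work and the $T=\Ocal(\log p)$ iteration count that you spell out.
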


Note that  the optimal time and space complexity is $\Omega(kp)$, since a minimum of $\Omega(k)$ samples are required for recovery, and $\Omega(p)$ for reading all entries. \Cref{coro:sub-quadratic} shows the time and space complexity of \algname~is $\widetilde \Ocal (k(k+p))$, which is nearly optimal.

\section{Synthetic Experiments} \label{sec:experiments}

To examine the sub-quadratic time and space complexity, we design three tasks to answer the following three questions: (i) Whether \Cref{alg:qht} maintains linear convergence   despite the hard thresholding not being accurate? (ii) What is the dependency between $b$ and $k$ to guarantee successful recovery? (iii) What is the dependency between $m$ and $p$ to guarantee successful recovery? 
Recall that the per-iteration complexity of \Cref{alg:qht} is $\widetilde O(m(p+b))$, where $b$ upper bounds the size of \algsubroute's output set, $p$ is the dimension of features and $m$ is batch size and $k$ is the sparsity of estimation.
It will be clear as we proceed how the three questions can support sub-quadratic complexity. 

\textbf{Experimental setting}  
We generate feature vectors $\xb_i$, whose coordinates follow i.i.d. uniform distribution on $[-1, 1]$. Constant $1$ is appended to each feature vector to model the linear terms and intercept. 
The true support is uniformly selected from all the interaction and linear terms, where the non-zero parameters are then generated uniformly on $[-20,-10]\cup [10, 20]$. 
Note that for the experiment concerning  minimum batch size $m$, we instead use Bernoulli distribution to generate both the features and the parameters, which reduces the variance for multiple random runs and makes our phase transition plot clearer. 
The output $y_i$s, are generated following $\xb_i^\top \Thetab^\star \xb_i$. 
On the algorithm side, by default, we set $p=200$, $d=3$, $K=20$, $k=3K$, $\eta = 0.2$. Support recovery results with different $b$-$K$ combinations are averaged over $3$ independent runs, results for $m$-$p$ combinations are averaged over $5$ independent runs. All experiments are terminated after 150 iterations.

\begin{figure}[ht]
	\begin{subfigure}{0.32\columnwidth}
		\centering
		\includegraphics[width=\linewidth]{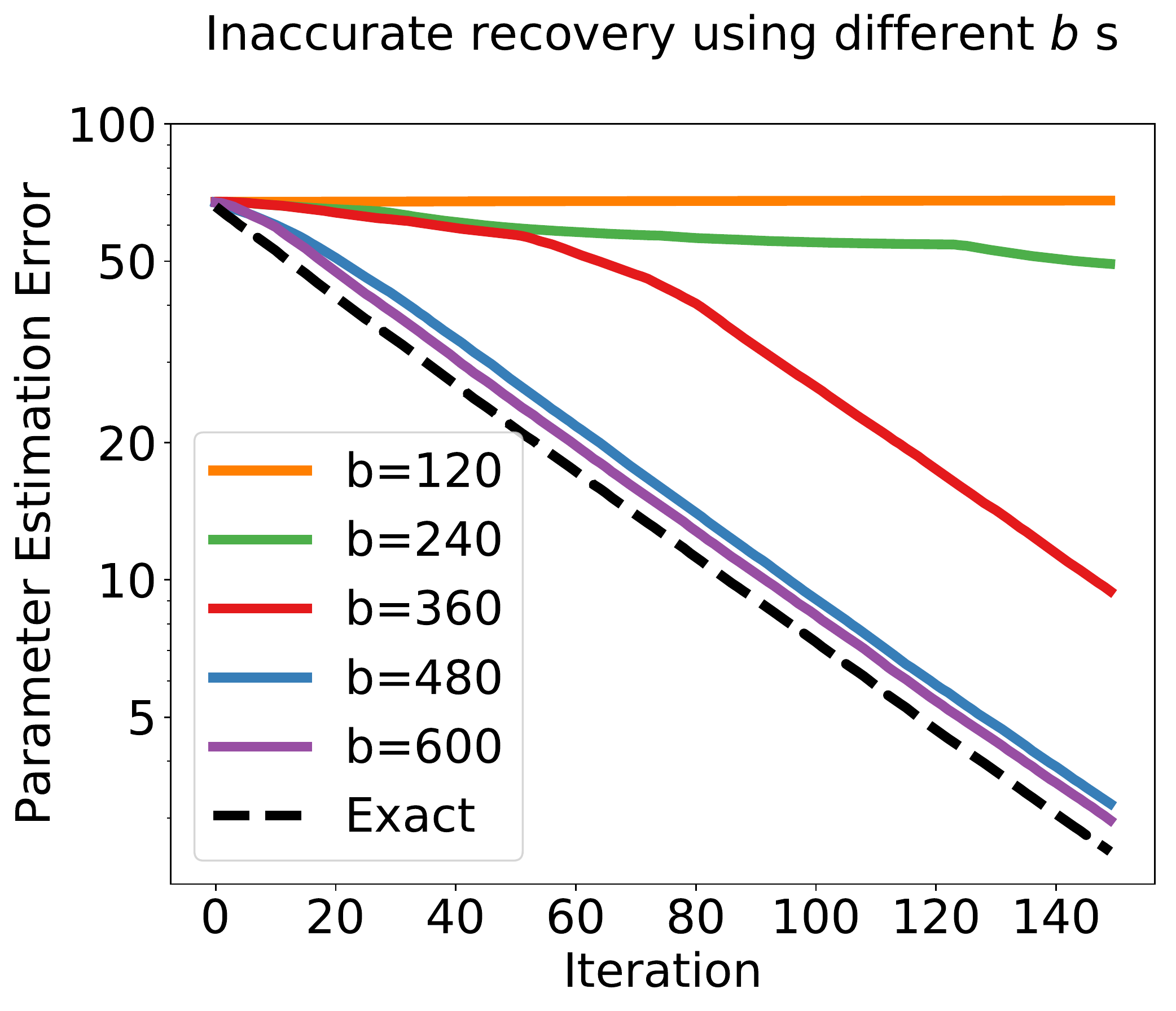}
		\caption*{(a) Inaccurate recovery using different \algsubroute's output set sizes $b$}
	\end{subfigure}
	\hfill 
	\begin{subfigure}{0.32\columnwidth}
		\centering
		\includegraphics[width=\linewidth]{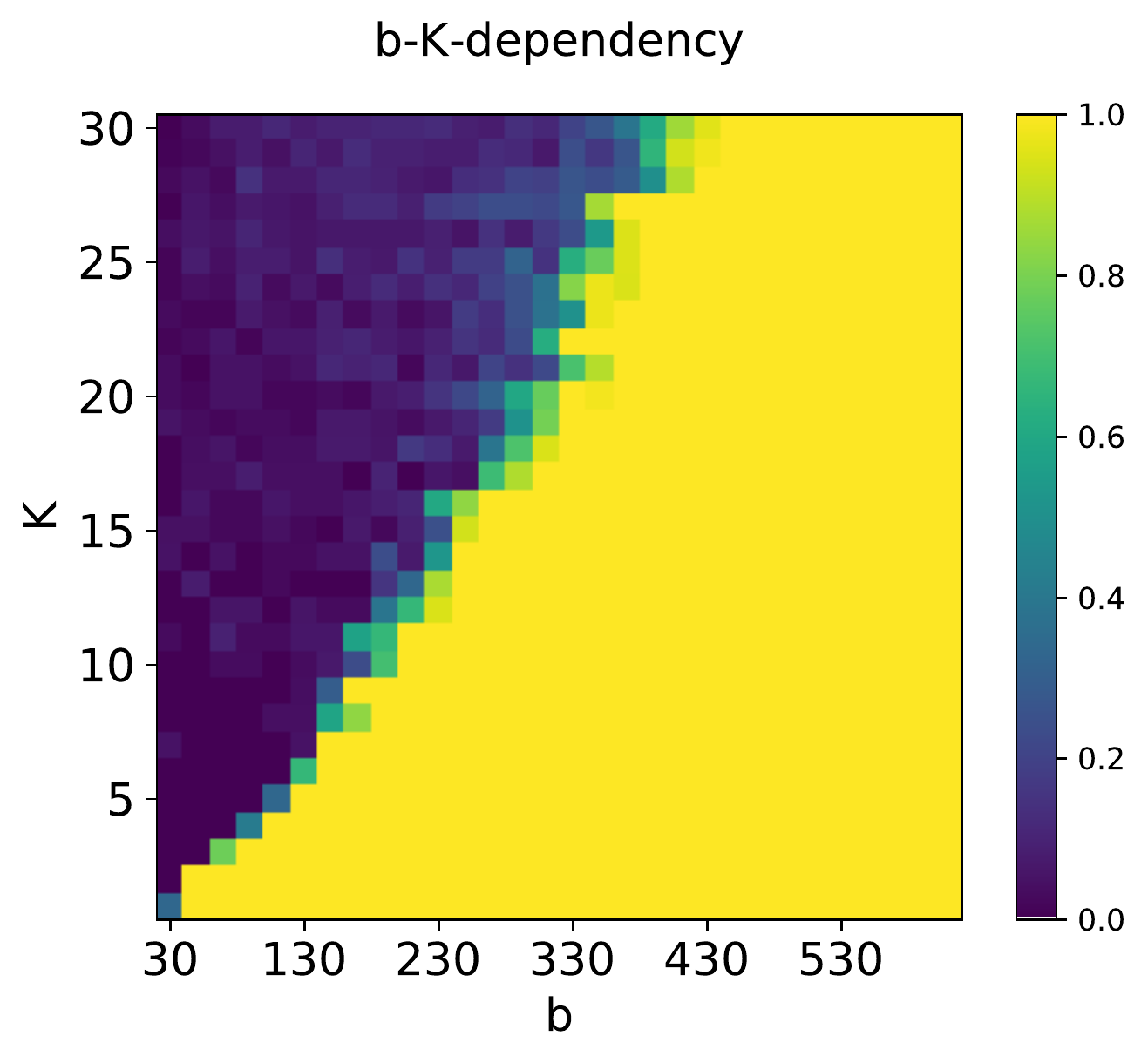}
		\caption*{(b) Support recovery results with different $b$ and $K$}
	\end{subfigure}
	\hfill 
	\begin{subfigure}{0.32\columnwidth}
		\centering
		\includegraphics[width=\linewidth]{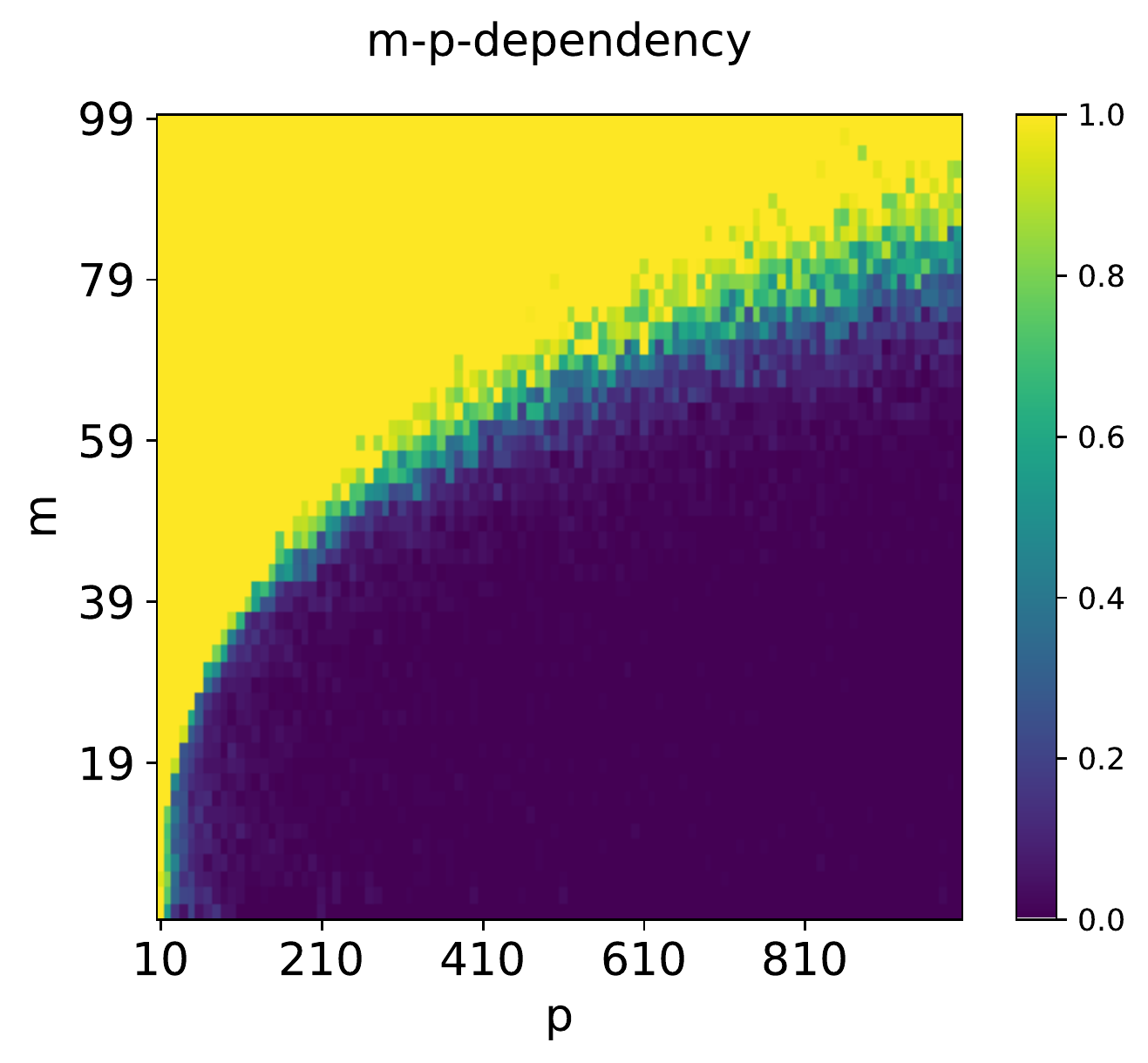}
		\caption*{(c) Support recovery results with different $m$ and $p$}
	\end{subfigure}

	\caption{Synthetic experiment results: note $b, m$ are the parameters we used for \algname and \algsubroute, where $b$ upper bounds the size of \algsubroute's output set and $m$ is the batch size used for \algname. Recall $p$ is the dimension of features and $K$ is the sparsity of $\Thetab^\star$. {\bf (a)} the convergence behavior with different choices of $b$. Linear convergence holds for small $b$, e.g., $360$, when the parameter space is around $20,000$. {\bf (b)} Support recovery results with different choices of $(b, K)$. We observe a linear dependence between $b$ and $K$.  {\bf (c)} Support recovery results with different choices of $(m, p)$. $m$ scales sub-linearly with $p$ to ensure a success recovery.}
	\label{fig:sim}
\end{figure}

\textbf{Inaccurate support recovery with different $b$'s} 
\Cref{fig:sim}-(a) demonstrates  different convergence results, measured by $\fnorm{\Thetab - \Thetab^\star}$ with multiple choices of $b$ for \algsubroute~ in \Cref{alg:qht}. 
The dashed curve is obtained by replacing  \algsubroute~  with exact top elements extraction (calculates the gradient exactly and picks the top elements).  
This is statistically optimal, but comes with quadratic complexity. 
By choosing a moderately large $b$, the inaccuracy induced by \algsubroute~ has negligible impact on the convergence. 
Therefore, \Cref{alg:qht} can maintain the linear convergence despite the support recovery in each iteration is inaccurate. This aligns with \Cref{thm:iht}. With linear convergence, the per iteration complexity will dominate the overall complexity.

\textbf{Dependency between $b$ and sparsity $k$} 
We proceed to see the proper choice of $b$ under different sparsity $k$ (we use $k=3K$). 
We vary the sparsity $K$ from $1$ to $30$, and apply \Cref{alg:qht} with $b$ ranges from $30$ to $600$.  As shown in \Cref{fig:sim}-(b), the minimum proper choice of $b$ scales no more than linearly with $k$. This agrees with our analysis in \Cref{thm:top-recovery}. The per-iteration complexity then collapse to $\widetilde\Ocal({m(p+k)})$.

\begin{wrapfigure}{r}{0.33\columnwidth}
	\centering
	\includegraphics[width=\linewidth]{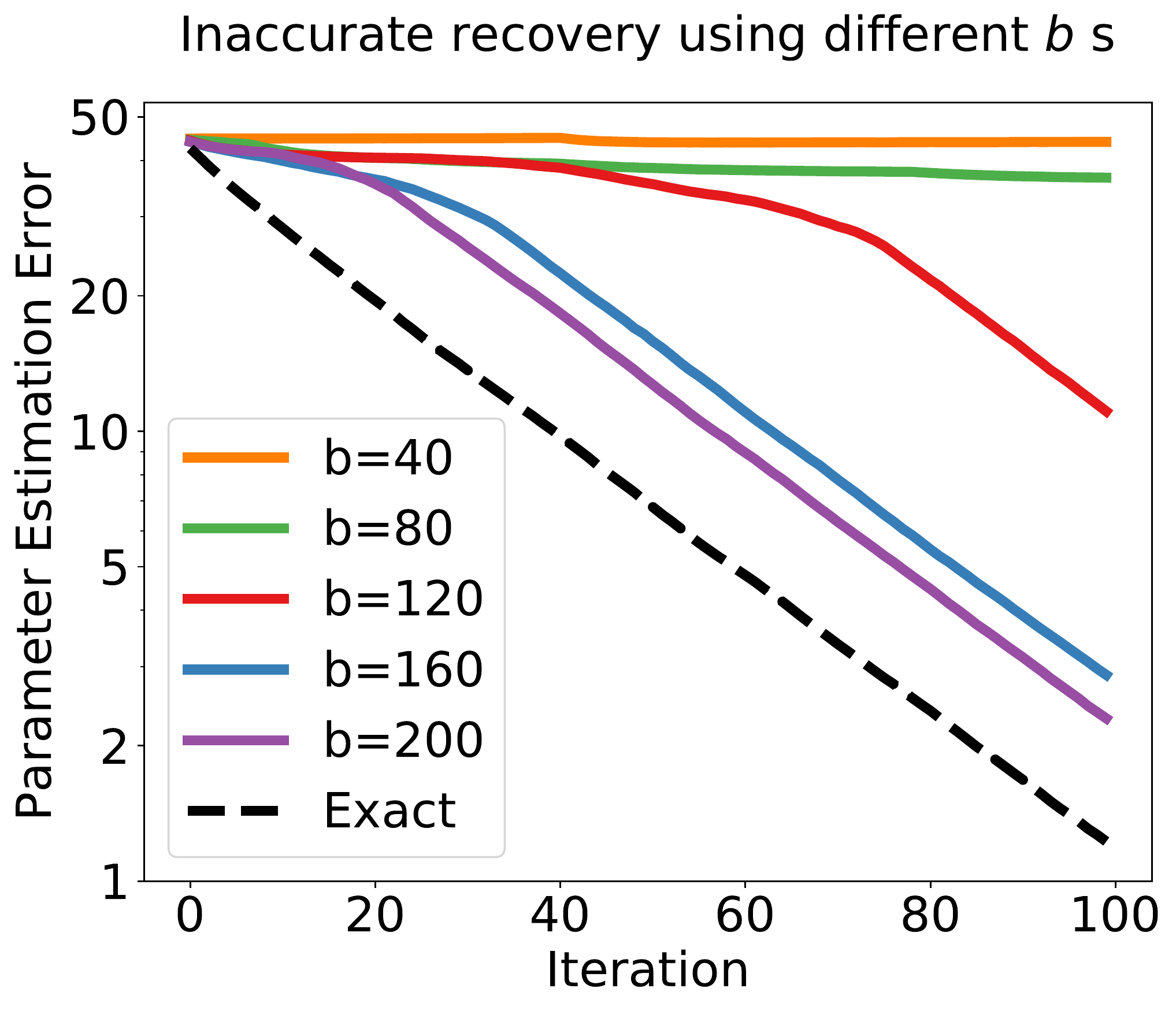}
	\caption{3-order regression support recovery using different \algsubroute's output set sizes $b$. The algorithm still remains linear convergence with proper setting of $b$.}
\label{fig:tensor}
\end{wrapfigure}

\textbf{Dependency between batch size $m$ and dimension $p$} 
Finally, we characterize the dependency between minimum batch size $m$ and the input dimension $p$. This will complete our discussion on the per-iteration complexity. 
The batch size varies from $1$ to $99$, and the  input dimension varies from $10$ to $1000$. 
In this experiment, we employ the \Cref{alg:qht} with \algsubroute~ replaced by exact top-$k$ elements extraction. 
\Cref{fig:sim}-(c) demonstrates the support recovery success rate of each $(k, p)$ combination. It shows  the minimum batch size scales in logarithm with dimension $p$, as we proved in \Cref{thm:batchsize}. Together with the previous experiment, it establishes the sub-quadratic complexity.

\textbf{Higher order interaction} 
\algname~ is also extensible to higher order interactions. Specifically, by exploiting similar gradient structure $\sum r_i\xb_i \otimes \xb_i \otimes \xb_i$, where $r_i$ denotes the residual for $(\Xb_i, y_i)$, $\otimes$ denotes the outer product of vector, we can again combine sketching with high-dimensional optimization to achieve nearly linear time and space (for constant sparsity).
\newline

For the experiment, we adopt the similar setting as for the \textbf{Inaccurate support recovery with different $b$'s} experiment. The main difference is that we change from $y_i = \xb_i^\top \Thetab^\star \xb_i$ to $y_i = \sum\Thetab_{i,j,k}\xb_i\xb_j\xb_k$, where $\Thetab$ is now a three dimension tensor. Further, we set the dimension of $\xb$ to $30$ and the sparsity $K = 20$. \Cref{fig:tensor} demonstrates the result of support recovering of 3-order interaction terms with different setting of $b$, where $b$ still bounds the size of \algsubroute's output set. We can see that \algname~still maintains the linear convergence in the higher order setting.

\section*{Acknowledgement}
We would like to acknowledge NSF grants 1302435 and
1564000 for supporting this research.

\clearpage 
\newpage 

\bibliography{ref}
\bibliographystyle{plain}

\clearpage
\newpage

\appendix

\section{Details of \algsubroute}
\label{sec:atee-formal}

In this section, we provide the formal algorithm for \algsubroute, stated in \Cref{alg:sketch-formal}. \algsubroute~ consists of two sub-routines: an efficient sketching operation (\textcolor{\bcolor}{line 5-11}), and  an efficient extraction operation (\textcolor{\bcolor}{line 12-15}). 

\begin{algorithm}[h]
\centering 
\begin{algorithmic}[1]
\STATE \textbf{Input:} Matrix $\Ab$, matrix $\Bb$, top selection size $k$
\STATE \textbf{Parameters:}  Output set size limit $b$, repetition number $d$, significant level $\Delta$
\STATE \textbf{Expected Output: } A set $\Lambda$, which is the top-$k$ elements in $\Ab\Bb^\top$ whose absolute value is also greater than $\Delta$ 
\STATE \textbf{Output:} Set $\widetilde\Lambda$ of indices, with size at most $b$ and approximately contains $\Lambda$
 \FOR{$t=0$ \textbf{to} $d-1$}  
 \STATE Construct expander code table $\Eb \in \{0, 1\}^{p\times l}$. Let $\eb_r$ be the $r$-th column of $\Eb$
\STATE $\Ib_{\eb_r} = \diag(\eb_r) $, $\Cb_r =\Ib_{\eb_r} \Ab, \Cb_{r+l} = \Ib_{\eb_r} \Bb$, $\forall r \in [l]$. Init $\Sbb$ as a $2l\times b$ matrix
\STATE Generate pairwise independent hash functions $h_1, h_2: [p] \rightarrow [b]$
\FOR{$r=0$ \textbf{to} $l-1$}
\STATE $\Sbb[r,:]=\textsc{Compressed-Product}(\Cb_r, \Bb, b, h_1, h_2)$
\STATE $\Sbb[r+l, :]=\textsc{Compressed-Product}(\Ab, \Cb_{r+l}, b, h_1, h_2)$  
\ENDFOR
\FOR{$q=0$ \textbf{to} $b-1$}
\STATE $\ob^{q} = \one(\Sbb[:, q] > \frac{\Delta}{2})$
\STATE $(i,j) = \textsc{Decode}(\ob^{q};~\Eb)$
\STATE $S = S\cup \{(i,j)\}$
\ENDFOR 
\ENDFOR 
\STATE \textbf{Return:} $\{ (i,j) | \# (i,j)\in S \ge \frac{d}{2} \}$ 
\\\hrulefill 
\STATE \textbf{function}~ \textsc{Compressed-Product}$(\Ab, \Bb, b, h_1, h_2)$:

\STATE ~~~~ Generate random sign functions $s_1, s_2: [p]\rightarrow \{-1, +1\}$ , column length of $\Ab, \Bb$ are $m$
\STATE	 ~~~~ \textbf{for} $i=0$ \textbf{to} $m-1$ \textbf{do}
\STATE ~~~~~~~~ $\pb_{\ab_i} \leftarrow $\textsc{Count-Sketch}$(\ab_i, h_1, s_1, b)$ , $\pb_{\bb_i} \leftarrow $\textsc{Count-Sketch}$(\bb_i, h_2, s_2, b)$
\STATE ~~~~~~~~  $\sbb_i = \mathtt{IFFT}\left( \mathtt{FFT}\left( \pb_{\ab_i} \right) \circ \mathtt{FFT}\left(  \pb_{\bb_i} \right) \right)$
\STATE ~~~~ \textbf{Return:} $\sbb = \sum_{i=0}^{m-1} \sbb_i$ 
\\\hrulefill
\STATE \textbf{function}~ \textsc{Count-Sketch}$(\xb, h, s, b)$
\STATE ~~~~ Init $\pb$ as a length $b$ vector
\STATE ~~~~ \textbf{{for} $i=0$ \textbf{to} $p-1$} \textbf{do}
\STATE ~~~~~~~~ $\pb[h(i)]=\pb[h(i)] + s(i)\xb[i]$
\STATE ~~~~ \textbf{Return:} $\pb$
\end{algorithmic}
\caption{\textsc{Approximate Top Elements Extraction (\algsubroute-Formal)} }
\label{alg:sketch-formal}
\end{algorithm}

For the sketching part, the algorithm first generate expander code which maps $[p]$ to $\{0,1\}^l$, where $l$ is the length of the codeword. Based on this encoding, we construct a table $\Eb \in \{0, 1\}^{p\times l}$ where the $i$-th row is the codeword which encoded from $i$. Denote $\eb_r$ as the $r$-th column of $\Eb$, we construct diagonal matrix $\Ib_{\eb_r}=\text{diag}(\eb_r)$. Then $2l$ different sub-matrices of $\Ab$ and $\Bb$ is constructed by $\Cb_r = \Ib_{\eb_r}\Ab, \Cb_{r+l}=\Ib_{\eb_r}\Bb, \forall r \in \left[l\right]$. It then sketches $\Ab \Cb_{r+l}^\top$ and $\Cb_r\Bb$, each of the matrices into a $b$ length vector, where $b \ll p^2$. The result is stored in $\Sbb \in \RR^{2l\times b}$. By exploiting the factorization of this matrix, the matrix outer product can be sketched in $\Ocal({b\log(b)})$ using fast Fourier transform (FFT) as used in  \cite{pagh2013compressed}(\textcolor{\bcolor}{line 18 - 21}).

For the sub-linear extraction, we first binarify the matrix $\Sbb$ with threshold $\Delta / 2$. Then each column of $\Sbb$ becomes a codeword, where the first $l$ bits encodes the row index of the elements whose absolute value is greater than $\Delta / 2$, the last $l$ bits encode its column index. By using expander code, it takes linear time $\Ocal(l)$ to finish decoding. The whole process will be repeated for $d$ times and only the elements that are recovered for more than $d/2$ times will be recorded for output. This can boost the success probability of top-$k$ support recovery.

\section{\algname-VR Algorithm} \label{appendix:SVRG}

The application of \algname~ to SVRG follows a very similar path as we apply it to SGD. The only trick is to ultilize the linearity of sketching. For SVRG, we will generate the hash function $h_1, h_2, s_1, s_2$ as described in SGD case at the begining of each outer iteration. They will be kept same through all the inner iterations. The skeching result $\sbb^i$ of the full gradient at the beginning of $i$-th outer loop will add up with the sketching result $\widetilde\sbb^i_j$ of the corresponding $j$-th inner loop. The summation $\sbb^i_j$ then goes through the decoding process, which is the same as SGD. For psedocode, see \Cref{alg:qht-svrg}.

% ================================= Begin of Alg. svrg ========================== % 
\newcommand{\outerindex}{i}
\newcommand{\innerindex}{j}
\newcommand{\outerround}{T}
\newcommand{\innerround}{t}

\begin{algorithm}[htbp]
\centering 
\begin{algorithmic}[1]
\STATE \textbf{Input:} Dataset $\{\xb_{i'}, y_{i'}\}_{i'=0}^{n-1}$,  threshold $\Delta$, dimension $p$, outer / inner round number $\outerround / \innerround$
\STATE \textbf{Output:}  $\widehat{\Thetab}$
\STATE \textbf{Parameters:} Codeword length $l$, sketch size $b$, repetition number $d$
\STATE Initialize $\Thetab^0$ as a $p\times p$ zero matrix. 
\FOR{$\outerindex=0$ \textbf{to} $\outerround-1$}
\STATE $\Thetab^\outerindex_0 = \Thetab^\outerindex$, construct expander code table $\Eb^i \in \{0, 1\}^{p\times l}$. 
\STATE Generate hash functions $h_1, h_2: \left[p\right] \rightarrow \left[b\right]$, and $s_1, s_2: \left[p\right]\rightarrow \{-1,1\}$
\STATE  $\sbb^\outerindex = \textsc{Interaction-Sketch}(\{\xb_{i'}, u_{i'}\}_{i'=0}^{n-1}, \Eb^i, h_1, h_2, s_1, s_2)$
\STATE $\Gb^i := \frac{1}{n}\sum_{i'=0}^{n-1} u_{i'}\xb_{i'}\xb_{i'}^\top$ 
\FOR{$\innerindex=0$ \textbf{to} $\innerround-1$}
\STATE Randomly pick a sample (min-batch) $\Bcal_\innerindex^\outerindex$
\STATE $\widetilde{u}_{i'} = u_{i'}(\Thetab_\innerindex^{\outerindex}, \xb_{i'}, y_{i'}) - u_{i'}(\Thetab_0^\outerindex, \xb_{i'}, y_{i'})$, $\forall i'\in \Bcal_\innerindex^\outerindex$, $\widetilde{\Gb}_\innerindex^\outerindex := \frac{1}{|\Bcal_{\innerindex}^{\outerindex}|}\sum_{i\in \Bcal_\innerindex^\outerindex} \widetilde{u}_{i'} \xb_{i'}\xb_{i'}^\top$
\STATE  $\widetilde{\sbb}^\outerindex_\innerindex= \textsc{Interaction-Sketch}(\{\xb_{i'}, \widetilde{u}_{i'}\}_{i'\in \Bcal_\innerindex^\outerindex}, \Eb^i, h_1, h_2, s_1, s_2)$
\STATE $\sbb^\outerindex_\innerindex = \sbb^\outerindex + \widetilde{\sbb}^\outerindex_\innerindex$
\STATE $\widetilde{S}^\outerindex_\innerindex = \textsc{Interaction-Decode}(\sbb^\outerindex_\innerindex, \Delta, \Eb^i)$ 
\STATE $S^\outerindex_\innerindex = \widetilde{S}^\outerindex_\innerindex\cup \supp(\Thetab^\outerindex_\innerindex)$
\STATE $\Pcal_{S^\outerindex_\innerindex}(\Gb^\outerindex_\innerindex)\leftarrow$ the gradient value $\Gb_j^i := \Gb^\outerindex + \widetilde{\Gb}_{\innerindex}^{\outerindex}$ calculated only on index set $S^\outerindex_\innerindex$, 
\STATE $\Thetab^{\outerindex}_{\innerindex+1} = \Hcal_{k}\left( \Thetab^{\outerindex}_{\innerindex} - \eta \Pcal_{S^\outerindex_\innerindex}(\Gb^\outerindex_\innerindex)  \right)$
\ENDFOR 
\STATE $\Thetab^{\outerindex+1} = \Thetab^{\outerindex}_{\innerindex'}$, for $\innerindex'\sim \mathtt{Unif}(\{0,\cdots, \innerround -1\})$
\ENDFOR 
\STATE \textbf{Return:} $\widehat{\Thetab}= \Thetab^{\outerround}$ 
\\\hrulefill 
\STATE \textbf{function} \textsc{Interaction-Sketch}$(\{\xb_i, u_i\}_{i\in \Bcal}, \Eb, h_1, h_2, s_1, s_2):$
\STATE ~~~~ Set $\Ab\in\RR^{p\times |\Bcal|}$, where each column of $\Ab$($\Bb$) is $u_i\xb_i$, $i\in \Bcal$. 
\STATE ~~~~ Set $\Bb\in\RR^{p\times |\Bcal|}$, where each column of $\Ab$($\Bb$) is $\xb_i$, $i\in \Bcal$. 
\STATE ~~~~ Set $\sbb$ as a $d\times 2l\times b$ tensor.
 \STATE ~~~~ \textbf{{for} $t=0$ \textbf{to} $d-1$}  \textbf{do}
 \STATE ~~~~~~~~ Let $\eb_r$ be the $r$-th column of $\Eb$, $\Ib_{\eb_r} = \diag(\eb_r) $, $\Cb_r =\Ib_{\eb_r} \Ab, \Cb_{r+l} = \Ib_{\eb_r} \Bb$, $\forall r \in [l]$.
\STATE ~~~~~~~~ \textbf{{for} $r=0$ \textbf{to} $l-1$}\textbf{do}
\STATE ~~~~~~~~~~~~ $\sbb[t, r,:]=\textsc{Compressed-Product}(\Cb_r, \Bb, b, h_1, h_2)$ 
\STATE ~~~~~~~~~~~~ $\sbb[t, r+l, :]=\textsc{Compressed-Product}(\Ab, \Cb_{r+l}, b, h_1, h_2)$  
\STATE ~~~~ \textbf{Return:} $\sbb$
\\\hrulefill 
\STATE \textbf{function} \textsc{Interaction-Decode}$(\sbb, \Delta, \Eb):$
\STATE ~~~~ Set $S = \emptyset$
\STATE ~~~~ \textbf{{for} $t=0$ \textbf{to} $d-1$ } \textbf{do}
\STATE ~~~~~~~~ \textbf{{for} $q=0$ \textbf{to} $b-1$} \textbf{do}
\STATE ~~~~~~~~~~~~ $\ob^{t,q} = \one(\sbb^{t, :, q} > \frac{\Delta}{2})$ 
\STATE ~~~~~~~~~~~~ $(i,j) = \textsc{Decode}(\ob^{t,q}, \Eb)$
\STATE ~~~~~~~~~~~~ $S = S\cup \{(i,j)\}$
\STATE ~~~~ \textbf{Return:} $\{ (i,j) | \# (i,j)\in S \ge \frac{d}{2} \}$
\end{algorithmic}
\caption{\textsc{\algname~ with variance reduction (\algname-VR)}}
\label{alg:qht-svrg}
\end{algorithm}
% ================================= End of Alg. svrg ========================== % 

\section{\algname-VR Analysis}\label{sec:extension}

Here we proceed to provide theoretical guarantee for \Cref{appendix:SVRG}.

Similar to the definitions for \Cref{thm:iht} and \Cref{thm:main2}, we define $\Lambda_{2k}$ to be the set of top $2k$ elements in $\Gb^i_{j+1}$, $\Lambda_\Delta$ to be the set of elements in $\Gb^i_{j+1}$ whose magnitude is greater than $\Delta$ and the output set of ATEE to be $\widetilde\Lambda$. We have the support of interest $\Lambda = \Lambda_{2k}\cap\Lambda_{\Delta}$ and the number of top-$2k$ elements whose magnitude is below $\Delta$ $k_\Delta =|\Lambda_{2k}\backslash\Lambda_{\Delta}|$. Define $\Bcal_i$ to be the set of samples used during $i$-th outer loop. Recall that $\eta$ is the step size and $m$ is the batch size. We then have the following result:
\begin{theorem}[Per-round Convergence of \Cref{alg:qht-svrg}]\label{thm:SVRGPerRound} If $\Lambda\subseteq\widetilde\Lambda$, the per-round convergence of \Cref{alg:qht-svrg} is as follows:
	\begin{align*}
		\EE_{\Bcal_i}\left[F(\Thetab^{i+1}_0)-F(\Thetab^\star))\right] \le \kappa_{SVRG}\left[F(\Thetab^{i}_0)-F(\Thetab^\star)\right]+\sigma_{SVRG}
	\end{align*}
	where $\nu = 1+\frac{\rho + \sqrt{(4+\rho)\rho}}{2}, \rho={K}/{k}$, $\innerround$ is the inner round number, and 
	\begin{align*}
		\kappa_{SVRG} &= {1\over\alpha_{2k}\nu\eta(1-2\eta L_{2k})\innerround}+{2\eta L_{2k}\over1-2\eta L_{2k}}, \\
		\sigma_{SVRG} &= {4\nu\eta \sigma'(4L_{2k}\sqrt{k}\omega+\sigma')\innerround + {2\over \alpha_{2k}}\sqrt{k}\omega\sigma'+ \innerround\sigma^2_{\Delta|SVRG}\over  2 \nu\eta(1-2\eta L_{2k})\innerround}, \\
		\sigma' = \max_{|\Omega|=3k+K}&\fnorm{\Pcal_{\Omega}F(\Thetab^*)},~~~\sigma^2_{\Delta|SVRG}=4\sqrt{k_\Delta}\eta \sqrt{k}\omega\Delta + 2k_\Delta\eta^2\Delta^2. 
	\end{align*}
	To ensure the convergence, it requires that
	\begin{align*}
		\eta < {1\over 4L_{2k}},~~~\nu < {1\over 1-\eta\alpha_{2k}}.
	\end{align*}
\end{theorem}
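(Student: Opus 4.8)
The plan is to adapt the inexact-IHT per-round argument behind \Cref{thm:iht}, replacing the mini-batch stochastic gradient by the variance-reduced estimate and then averaging over the inner loop. Fix an outer index $\outerindex$, write $\Thetab_0 := \Thetab^\outerindex_0$ for the snapshot, and for an inner step $\innerindex$ set $\Thetab := \Thetab^\outerindex_\innerindex$, $\Thetab^+ := \Thetab^\outerindex_{\innerindex+1}$, and $\Gb := \Gb^\outerindex_\innerindex = \nabla f_{\Bcal^\outerindex_\innerindex}(\Thetab) - \nabla f_{\Bcal^\outerindex_\innerindex}(\Thetab_0) + \nabla F(\Thetab_0)$; taking expectation over the mini-batch, $\Gb$ is entrywise unbiased for $\nabla F(\Thetab)$ given $\Thetab$. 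Let $\Omega := \supp(\Thetab^+) \cup \supp(\Thetab) \cup \supp(\Thetab^\star)$, so $|\Omega| \le 3k+K$ and the RSC/RSM/RC inequalities of \Cref{as:summary} all apply to $\Omega$-supported differences. First I would isolate one inexact inner step. By the support-union fact $\supp(\Hcal_k(\Ab+\Bb)) \subseteq \supp(\Ab) \cup \supp(\Hcal_{2k}(\Bb))$ with $\Ab=\Thetab$ and $\Bb=-\eta\Gb$, together with the assumed success $\Lambda \subseteq \widetilde\Lambda$ (hence $S^\outerindex_\innerindex \supseteq \Lambda \cup \supp(\Thetab)$), the only entries of the ideal top-$2k$ gradient step that the update can drop lie in $\Lambda_{2k} \setminus \Lambda_\Delta$ — at most $k_\Delta$ of them, each with $|\Gb|\le\Delta$. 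Handling this perturbation exactly as in the proof of \Cref{thm:iht} — bounding its cross terms against $\fnorm{\Thetab^+ - \Thetab^\star}$, which has at most $k$ nonzero entries, and against $\norm{\Thetab^\star}_\infty \le \omega$ — yields the same additive error $\sigma^2_{\Delta|SVRG} = 4\sqrt{k_\Delta}\eta\sqrt{k}\omega\Delta + 2k_\Delta\eta^2\Delta^2$, while the exact part of the thresholding step contributes the factor $\nu$ through \Cref{lemma:tightbound}, and one application of RSM/RC converts $\Thetab - \eta\Pcal_{S^\outerindex_\innerindex}(\Gb)$ into a one-step descent on $F$.

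Next I would plug in the standard variance-reduction estimate, restricted to $\Omega$: using RSM with constant $L_{2k}$, RC, and $\sigma'$ as defined in the statement, the corrector variance $\EE_{\Bcal^\outerindex_\innerindex}[\fnorm{\Pcal_\Omega(\Gb - \nabla F(\Thetab))}^2]$ is bounded by a constant multiple of $L_{2k}$ times $\big(F(\Thetab)-F(\Thetab^\star)\big) + \big(F(\Thetab_0)-F(\Thetab^\star)\big)$ plus a remainder in $\sigma'$ and $\omega$; after the normalization below, this remainder becomes the $4\nu\eta\sigma'(4L_{2k}\sqrt{k}\omega+\sigma')\innerround$ and $\frac{2}{\alpha_{2k}}\sqrt{k}\omega\sigma'$ pieces of $\sigma_{SVRG}$. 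Then I would sum the one-step inequality over $\innerindex = 0, \dots, \innerround-1$: the $F(\Thetab^\outerindex_\innerindex)-F(\Thetab^\star)$ terms generated on the right cancel against those produced on the left by RSC with constant $\alpha_{2k}$, whereas the snapshot gaps $F(\Thetab_0)-F(\Thetab^\star)$ accumulate $\innerround$ times. Since $\Thetab^{\outerindex+1}_0$ is a uniformly random inner iterate and $\Thetab^\outerindex_0 = \Thetab^\outerindex$, dividing through by $\innerround$ and by $\alpha_{2k}\nu\eta(1-2\eta L_{2k})$ produces exactly $\kappa_{SVRG} = \frac{1}{\alpha_{2k}\nu\eta(1-2\eta L_{2k})\innerround} + \frac{2\eta L_{2k}}{1-2\eta L_{2k}}$ together with the stated $\sigma_{SVRG}$, and the positivity/contractivity of these factors is what calls for $\eta < 1/(4L_{2k})$ and $\nu < 1/(1-\eta\alpha_{2k})$.

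The hard part will be the single-step analysis: every gradient difference must be kept restricted to a support of size $\le 3k+K$ for RSM/RC/RSC to apply, while simultaneously (i) cleanly separating the exact-IHT error (the $\nu$ factor from \Cref{lemma:tightbound}) from the \algsubroute~inexactness error $\sigma^2_{\Delta|SVRG}$ — these interact because both feed into the same $\fnorm{\Thetab^+ - \Thetab^\star}^2$ — and (ii) arranging the algebra so that, after dividing by $\innerround$ in the telescoping step, the $1/\innerround$ factor multiplies the leading term while the residual coefficient $\frac{2\eta L_{2k}}{1-2\eta L_{2k}}$ stays strictly below $1$, which is precisely what forces the stated restrictions on $\eta$ and $\nu$. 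Once this restricted-support bookkeeping is in place, the variance step and the telescoping are routine, just as in the SGD analysis behind \Cref{thm:iht,thm:main2}.
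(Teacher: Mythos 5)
Your proposal follows essentially the same route as the paper's proof: decompose each inner step into the ideal hard-thresholding update (controlled by the $\nu$ factor from \Cref{lemma:tightbound}) plus the \algsubroute~inexactness (controlled by \Cref{coro:deltaError}, giving $\sigma^2_{\Delta|SVRG}$), bound the second moment of the variance-reduced gradient restricted to $\Omega$ by function-value gaps at the iterate and the snapshot (this is exactly \Cref{coro:proj-grad}), then telescope over the inner loop, absorb the snapshot distance via RSC, and divide by $\innerround$ using the uniformly random choice of the next snapshot. The approach and the origin of the conditions on $\eta$ and $\nu$ match the paper's argument in \Cref{prf:svrgPerRound}.
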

The proof can be found in \Cref{prf:svrgPerRound}.
\begin{remark}
	Similar to the \Cref{rm:SGDsigma} case, $\sigma'$ is statistical error, which in noiseless case are 0. In the case that the magnitude of top-$2k$ elements in the gradient are all greater than $\Delta$, we have $\Lambda_{2k}\subseteq\Lambda_{\Delta}$ and $k_\Delta = 0$, which implies $\sigma_{\Delta|SVRG}=0$.
\end{remark}
To obtain the convergence result over all iterations, we adopt the same definition and assumption as in \Cref{thm:SVRGPerRound}.  By setting $c=\Theta({p})$, $d = 48\log(ck)$, we have that the inner loop of \Cref{alg:qht-svrg} succeeds with high probability (recall that $c$ was used to control the failure probability in \Cref{thm:top-recovery}, and it is not hard to see that the property in  \Cref{thm:top-recovery} still holds for \algname-VR). Then we have the following result: 
\begin{theorem}[Convergence of \algname-VR]\label{thm:mainSVRG} Under the same parameter setting as in \Cref{thm:top-recovery}, with $d$ specifically defined as above, 
	the convergence of \Cref{alg:qht-svrg} is given by
	\begin{align*}
		\EE_{B_t}\left[F(\Thetab^{t}_0)-F(\Thetab^\star))\right] \le \kappa_{SVRG}^s\left[F(\Thetab^0_0)-F(\Thetab^\star)\right]+{\sigma_{SVRG}\over 1-\kappa_{SVRG}},
	\end{align*}
    where the definitions of $\kappa_{SVRG}$ and $\sigma_{SVRG}$ follows from \Cref{thm:SVRGPerRound}. 
\end{theorem}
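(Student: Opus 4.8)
The plan is to combine the per-round guarantee of \Cref{thm:SVRGPerRound} with the success probability of \algsubroute~from \Cref{thm:top-recovery}, and then unroll a scalar geometric recursion over the outer loop. First I would note that, by linearity of Count-Sketch and of the FFT-based compressed product, the \textsc{Interaction-Sketch} routine used in \Cref{alg:qht-svrg} produces precisely the sketch of the full SVRG gradient $\Gb^i_j = \Gb^i + \widetilde\Gb^i_j$ (this is exactly the ``only trick'' mentioned in \Cref{appendix:SVRG}); consequently the analysis underlying \Cref{thm:top-recovery} applies verbatim to each \textsc{Interaction-Decode} call. Hence, with the parameter choices $c=\Theta(p)$ and $d = 48\log(ck)$, each inner-loop call of \algsubroute~recovers the target index set $\Lambda = \Lambda_{2k}\cap\Lambda_{\Delta}$ with probability at least $1-1/c$, and a union bound over all inner-loop iterations (of which there are polynomially many in $p$) shows that the event $\Ecal$ that \emph{every} \algsubroute~call succeeds holds with high probability once $c$ is a large enough multiple of $p$.

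Next I would condition on $\Ecal$ and apply \Cref{thm:SVRGPerRound} at each outer round. Writing $a_i := \EE\left[F(\Thetab^i_0) - F(\Thetab^\star)\right]$ for the expectation over all batch randomness through the first $i$ outer rounds (on $\Ecal$), the conditional per-round inequality $\EE_{\Bcal_i}[F(\Thetab^{i+1}_0)-F(\Thetab^\star)] \le \kappa_{SVRG}(F(\Thetab^i_0)-F(\Thetab^\star)) + \sigma_{SVRG}$ combined with the tower property of conditional expectation yields the linear recursion $a_{i+1}\le \kappa_{SVRG}\,a_i + \sigma_{SVRG}$. Unrolling from $i=0$ through $i=t$ gives $a_t \le \kappa_{SVRG}^{\,t} a_0 + \sigma_{SVRG}\sum_{j=0}^{t-1}\kappa_{SVRG}^{\,j}$. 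Under the stated conditions $\eta < 1/(4L_{2k})$ and $\nu < 1/(1-\eta\alpha_{2k})$ — with the inner round number taken large enough that the term $1/\big(\alpha_{2k}\nu\eta(1-2\eta L_{2k})\,\innerround\big)$ in $\kappa_{SVRG}$ is strictly below $1 - 2\eta L_{2k}/(1-2\eta L_{2k})$ — we have $\kappa_{SVRG}<1$, so the geometric sum is bounded by $\sigma_{SVRG}/(1-\kappa_{SVRG})$, delivering the claimed bound $a_t \le \kappa_{SVRG}^{\,t}\,(F(\Thetab^0_0)-F(\Thetab^\star)) + \sigma_{SVRG}/(1-\kappa_{SVRG})$. (In the noiseless case, or when all top-$2k$ gradient entries exceed $\Delta$, the remark after \Cref{thm:SVRGPerRound} gives $\sigma' = 0$ and $\sigma_{\Delta|SVRG}=0$, so the error ball collapses to $0$.)

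The main obstacle I anticipate is the bookkeeping around conditioning: \Cref{thm:SVRGPerRound} is stated only in the successful branch $\Lambda\subseteq\widetilde\Lambda$, so one must verify that conditioning on $\Ecal$ does not bias the fresh mini-batch draws $\Bcal_i$ used inside each per-round step (it does not, since the ATEE randomness — expander code, hash and sign functions — is drawn independently of the data samples and only affects which coordinates are updated, not the gradient estimate itself), and that the union-bound slack is absorbed into the ``with high probability'' qualifier rather than the error ball, which is exactly why $c$ must scale with $p$. A secondary check is that $\alpha_{2k}$ and $L_{2k}$, which enter both $\kappa_{SVRG}$ and $\sigma_{SVRG}$, are controlled by the same batch-size condition as in \Cref{thm:batchsize}, so that $\kappa_{SVRG}$ is bounded away from $1$ by an absolute constant and $t = O(\log(1/\epsilon))$ outer rounds suffice to reach accuracy $\epsilon$ above the noise floor.
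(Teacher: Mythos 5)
Your proposal is correct and follows essentially the same route as the paper: condition on the high-probability event that every \algsubroute~call succeeds (which is exactly why $c=\Theta(p)$ is chosen), apply \Cref{thm:SVRGPerRound} at each outer round, and unroll the resulting geometric recursion to get the $\kappa_{SVRG}^t$ contraction plus the $\sigma_{SVRG}/(1-\kappa_{SVRG})$ noise floor. The additional care you take about the shared hash functions across inner iterations and the independence of the sketch randomness from the mini-batch draws is precisely the point the paper addresses in the remark following the theorem, so nothing is missing.
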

\begin{proof}
	Given that \algsubroute~ succeeds with high probability, the contraction of each iteration is characterized by \Cref{thm:SVRGPerRound}. By solving the recursion, we have the desired convergence.
\end{proof}

\begin{remark}
	Here we set \algsubroute~ to succeed with high probability, where in \Cref{thm:main2} it only requires \algsubroute~ to succeed with constant probability. This is because in each inner loop, the iterations share the same hash function $s, h$ as specified in \algsubroute, which removes the independence of \algsubroute~ for each iteration. Intuitively,  once \algsubroute~ fails, it could fail on the entire inner loop and ruined the estimation for $\Thetab$. By setting $c=\Theta({p})$, the high probability statement can be obtained without incurring more than $\Ocal(\log p)$ factor  higher complexity.
\end{remark}

\section{Technical Lemmas and Corollaries}\label{sec:analysis}

\begin{lemma}[Tight Bound for Hard Thresholding \cite{shen2017tight}] \label{lemma:tightbound}
Let $\Bb\in \RR^{p\times p}$ be an arbitrary matrix and $\Thetab \in\RR^{p\times p}$ be any $K$-sparse signal. For any $k\ge K$, we have the following bound:
\begin{align*}
    \fnorm{\Hcal_k(\Bb) - \Thetab} \le \sqrt{\nu} \fnorm{\Bb - \Thetab}, ~~~~ \nu = 1+\frac{\rho + \sqrt{(4+\rho)\rho}}{2}, ~~~~ \rho=\frac{\min\{K, p^2-k\} }{k-K+\min\{K, p^2-k\} }
\end{align*}
\end{lemma}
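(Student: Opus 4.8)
The plan is to reduce this matrix statement to a one‑dimensional quadratic inequality by combining two orthogonal (Pythagorean) decompositions with a single exchange argument on the supports. Write $I=\supp(\Thetab)$, so $|I|\le K$, and $S=\supp(\Hcal_k(\Bb))$, so $|S|=k$; all sets are index sets of the $p^2$ entries. Since $\Hcal_k(\Bb)=\Pcal_S(\Bb)$, splitting coordinates along $S$ and its complement gives
\begin{align*}
\fnorm{\Hcal_k(\Bb)-\Thetab}^2 &= \fnorm{\Pcal_S(\Bb-\Thetab)}^2 + \fnorm{\Pcal_{S^c}(\Thetab)}^2, \\
\fnorm{\Bb-\Thetab}^2 &= \fnorm{\Pcal_S(\Bb-\Thetab)}^2 + \fnorm{\Pcal_{S^c}(\Bb-\Thetab)}^2 ,
\end{align*}
since $\Hcal_k(\Bb)$ vanishes off $S$ and $\Thetab$ off $S$ is exactly $\Pcal_{S^c}(\Thetab)$. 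Writing $x:=\fnorm{\Pcal_S(\Bb-\Thetab)}^2$, it then suffices to prove $\fnorm{\Pcal_{S^c}(\Thetab)}^2 \le (\nu-1)\,x + \nu\,\fnorm{\Pcal_{S^c}(\Bb-\Thetab)}^2$, because adding $x$ to both sides yields exactly $\fnorm{\Hcal_k(\Bb)-\Thetab}^2 \le \nu\fnorm{\Bb-\Thetab}^2$.

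Next I would localize everything to the ``missed support'' $T:=I\setminus S$ (if $T=\emptyset$ then $\Pcal_{S^c}(\Thetab)=0$ and the bound is immediate, so assume $T\neq\emptyset$). Then $\Pcal_{S^c}(\Thetab)$ is supported on $T$, so $\fnorm{\Pcal_{S^c}(\Thetab)}^2=\sum_{i\in T}\Thetab_i^2$ and $\fnorm{\Pcal_{S^c}(\Bb-\Thetab)}^2 \ge \sum_{i\in T}(\Bb_i-\Thetab_i)^2$. The key estimate is a lower bound on $x$: on $R:=S\setminus I$ the signal $\Thetab$ vanishes, so $x \ge \sum_{j\in R}\Bb_j^2$. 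Because $\Hcal_k$ keeps the $k$ largest entries in magnitude, $|\Bb_j|\ge|\Bb_i|$ for every $j\in R\subseteq S$ and $i\in T\subseteq S^c$ (regardless of how ties are broken), and the counting identity $|R|=|T|+(k-|I|)\ge |T|+(k-K)$ together with $|T|\le\min\{|I|,\,p^2-k\}\le\min\{K,\,p^2-k\}$ gives $\sum_{j\in R}\Bb_j^2 \ge |R|\max_{i\in T}\Bb_i^2 \ge \tfrac{|R|}{|T|}\sum_{i\in T}\Bb_i^2 \ge \tfrac1\rho\sum_{i\in T}\Bb_i^2$. Hence $x\ge\tfrac1\rho\sum_{i\in T}\Bb_i^2$.

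Plugging these three bounds into the target inequality makes it separable over $i\in T$, so it reduces to the scalar claim $a^2 \le \tfrac{\nu-1}{\rho}\,b^2 + \nu(b-a)^2$ for all real $a,b$ — equivalently, that the binary quadratic form $(\nu-1)a^2 - 2\nu ab + \big(\nu+\tfrac{\nu-1}{\rho}\big)b^2$ is positive semidefinite. Since $\nu>1$, its diagonal entries are positive, so PSD is equivalent to the determinant condition $(\nu-1)\big(\nu+\tfrac{\nu-1}{\rho}\big)\ge \nu^2$, i.e. $(\nu-1)^2\ge \nu\rho$, i.e. $\nu^2-(2+\rho)\nu+1\ge 0$; the larger root of this quadratic in $\nu$ is exactly $1+\tfrac{\rho+\sqrt{(4+\rho)\rho}}{2}$, the value in the statement, so at that $\nu$ the form is PSD (determinant zero) and the scalar inequality holds. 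Running the chain back up gives $\fnorm{\Hcal_k(\Bb)-\Thetab}\le\sqrt{\nu}\fnorm{\Bb-\Thetab}$.

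I expect the exchange/counting step to be the delicate point: one must (i) confirm the magnitude dominance $|\Bb_j|\ge|\Bb_i|$ over the correct pair of index sets even under arbitrary tie‑breaking inside $\Hcal_k$, and (ii) extract the factor $1/\rho$ with the right $\min\{K,p^2-k\}$ cap and the surplus $k-K$ in $|R|$, checking the degenerate regimes ($|I|<K$, or $k$ close to $p^2$) where the coefficient $|R|/|T|$ only increases so the bound is unaffected. The closing scalar step is a one‑line discriminant computation, and the matching lower bound — tightness of $\nu$, so the constant cannot be improved — comes from choosing $\Bb,\Thetab$ that turn every inequality above into an equality, which also explains why exactly this $\nu$ appears.
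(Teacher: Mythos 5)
Your proof is correct and follows essentially the same route as the paper's: a Pythagorean decomposition of both squared distances along the supports, the magnitude dominance of kept entries over discarded ones combined with the counting bound $|S\setminus I|/|I\setminus S|\ge 1/\rho$ (with $S=\supp(\Hcal_k(\Bb))$, $I=\supp(\Thetab)$), and a closing quadratic $\nu^2-(2+\rho)\nu+1\ge 0$ whose larger root is exactly the stated constant. Your reduction to a separable per-coordinate positive-semidefiniteness check is a cleaner way to finish than the paper's worst-case ratio maximization, but the decomposition, the exchange/counting step, and the final quadratic in $\nu$ are the same.
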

The provide a short proof in \Cref{prf:tightbound}.

\begin{corollary}[similar to co-coercivity]\label{coro:co-cercivity}
For a given support set $\Omega$, assume that the continuous function $f(\cdot)$ is $L_{|\Omega|}$-RSS and $K$-RC. Then, for all matrices $\Thetab, \Thetab'$ with $|\supp(\Thetab-\Thetab') \cup \Omega| \le K$, 
\begin{align*}
    \fnorm{ \Pcal_{\Omega}\left( \nabla f(\Thetab') - \nabla f(\Thetab) \right)}^2 \le 2L_{|\Omega|} \left( f(\Thetab') - f(\Thetab) - \minnerprod{ \nabla f(\Thetab), \Thetab'-\Thetab } \right).
\end{align*}
\end{corollary}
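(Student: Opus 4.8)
The plan is to run the classical co-coercivity argument, adapted to the restricted-sparsity setting, around the linearized surrogate $g(\Zb) := f(\Zb) - \minnerprod{\nabla f(\Thetab), \Zb}$. First I would record the two facts that make this surrogate convenient: $\nabla g(\Zb) = \nabla f(\Zb) - \nabla f(\Thetab)$, so in particular $\nabla g(\Thetab) = \zero$; and $g$ differs from $f$ only by a linear (in $\Zb$) term $\minnerprod{\nabla f(\Thetab),\Zb} = \tr(\nabla f(\Thetab)^\top \Zb)$, hence $g$ inherits verbatim the $L_{|\Omega|}$-RSS and $K$-RC properties of $f$. Applying $K$-RC of $g$ between $\Thetab$ and any $\Zb$ with $|\supp(\Zb-\Thetab)|\le K$, and using $\nabla g(\Thetab)=\zero$, gives $g(\Zb)\ge g(\Thetab)$, i.e. $\Thetab$ is a restricted minimizer of $g$.

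Next I would take the ``gradient-step'' point $\Zb := \Thetab' - \tfrac{1}{L_{|\Omega|}}\,\Pcal_\Omega\!\left(\nabla g(\Thetab')\right)$, whose displacement from $\Thetab'$ is supported on $\Omega$. The $L_{|\Omega|}$-RSS bound in descent-lemma form then yields $g(\Zb)\le g(\Thetab') + \minnerprod{\nabla g(\Thetab'),\Zb-\Thetab'} + \tfrac{L_{|\Omega|}}{2}\fnorm{\Zb-\Thetab'}^2$; using idempotence and self-adjointness of $\Pcal_\Omega$ to write $\minnerprod{\nabla g(\Thetab'),\Pcal_\Omega(\nabla g(\Thetab'))} = \fnorm{\Pcal_\Omega(\nabla g(\Thetab'))}^2$, the right-hand side collapses to $g(\Thetab') - \tfrac{1}{2L_{|\Omega|}}\fnorm{\Pcal_\Omega(\nabla g(\Thetab'))}^2$. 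Combining this with $g(\Thetab)\le g(\Zb)$ from the restricted minimality (applicable because $\supp(\Zb-\Thetab)\subseteq \supp(\Thetab'-\Thetab)\cup\Omega$ has size $\le K$ by hypothesis) gives $\tfrac{1}{2L_{|\Omega|}}\fnorm{\Pcal_\Omega(\nabla g(\Thetab'))}^2 \le g(\Thetab') - g(\Thetab)$. Finally I would unwind $g(\Thetab')-g(\Thetab) = f(\Thetab')-f(\Thetab) - \minnerprod{\nabla f(\Thetab),\Thetab'-\Thetab}$ and $\nabla g(\Thetab') = \nabla f(\Thetab')-\nabla f(\Thetab)$, and multiply through by $2L_{|\Omega|}$, to obtain the stated inequality.

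The only delicate part — and where I expect to spend the care — is the support bookkeeping that legitimizes each invocation of RSS and RC: one must check both $|\supp(\Zb-\Thetab')| \le |\Omega|$ (so the $L_{|\Omega|}$-RSS step is valid) and $|\supp(\Zb-\Thetab)| \le K$ (so the $K$-RC step is valid), the latter using exactly the hypothesis $|\supp(\Thetab-\Thetab')\cup\Omega|\le K$. A secondary routine point is upgrading the gradient-Lipschitz form of RSS to the quadratic upper bound: integrate $\nabla f$ along the segment $\Thetab' + t(\Zb-\Thetab')$, whose increments all lie in the coordinate subspace indexed by $\Omega$, so the Lipschitz estimate $\fnorm{\nabla f(\Thetab'+t(\Zb-\Thetab'))-\nabla f(\Thetab')}\le L_{|\Omega|}\,t\,\fnorm{\Zb-\Thetab'}$ applies and yields the $\tfrac{L_{|\Omega|}}{2}\fnorm{\cdot}^2$ term. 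Throughout, one must keep straight that the symbol $K$ here denotes a generic restricted-sparsity level, not the sparsity of $\Thetab^\star$.
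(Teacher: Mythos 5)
Your proposal is correct and follows essentially the same route as the paper's proof: the linearized surrogate $g(\Zb)=f(\Zb)-\minnerprod{\nabla f(\Thetab),\Zb}$, the $\Omega$-restricted gradient step from $\Thetab'$ bounded via the RSS descent lemma, and the restricted minimality of $\Thetab$ for $g$ supplied by RC. Your extra attention to the support bookkeeping is, if anything, more careful than the paper's own write-up.
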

The proof can be found in \Cref{prf:co-cercivity}.

\begin{corollary}[bounding $\|\Pcal_{\Omega}(\Gb^t) \|_2^2$] \label{coro:proj-grad}Let $\Omega = \supp(\Thetab^{t-1}) \cup \supp(\Thetab^{t}) \cup \supp(\Thetab^\star)$. 
For SGD and SVRG, we have the following bound:
\begin{enumerate}
	\item SGD: $\Gb^t=\nabla f_{\iota_t} \left(\Thetab^{t-1}\right)$
	\begin{align*}
    	\EE_{\iota_t}\fnorm{\Pcal_{\Omega}(\Gb^t) }^2 \le 2L_{2k}^2\fnorm{\Thetab^{t-1}-\Thetab^\star}^2+2\fnorm{\Pcal_\Omega\left(\nabla f_{\iota_t}(\Thetab^\star)\right)}^2
    \end{align*}
	\item SVRG: $\Gb^i_j=\nabla f_{b_j} \left(\Thetab^{i}_j\right)-\nabla f_{b_j} \left(\Thetab^{i}_0\right)+\nabla F \left(\Thetab^{i}_0\right)$
	\begin{align*}
		\EE_{b_j}\fnorm{\Pcal_\Omega(\Gb^i_j)}^2 \le & 4L_{2k}\left[F(\Thetab^i_j)-F(\Thetab^\star)\right] + 4L_{2k}\left[F(\Thetab^i_0)-F(\Thetab^\star)\right] \\
		& -4L_{2k}\minnerprod{\nabla F(\Thetab^\star), \Thetab^i_j+\Thetab^i_0-2\Thetab^\star} + 4\fnorm{\Pcal_\Omega(\nabla F(\Thetab^\star))}
	\end{align*}
\end{enumerate}   
\end{corollary}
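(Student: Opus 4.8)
Both bounds are obtained by the same device: split the (stochastic) gradient evaluated at the current iterate into a ``gradient difference against $\Thetab^\star$'' piece plus the gradient $\nabla F(\Thetab^\star)$ (resp.\ $\nabla f_{\Bcal}(\Thetab^\star)$) — the latter being the statistical error — and then turn each gradient-difference piece into a function-value gap, either directly by restricted smoothness (for the one-point term) or by the co-coercivity estimate of \Cref{coro:co-cercivity}. For the \textbf{SGD case} this is essentially immediate: write $\Pcal_\Omega(\Gb^t) = \Pcal_\Omega\!\left(\nabla f_{\iota_t}(\Thetab^{t-1}) - \nabla f_{\iota_t}(\Thetab^\star)\right) + \Pcal_\Omega\!\left(\nabla f_{\iota_t}(\Thetab^\star)\right)$, apply $\fnorm{A+B}^2 \le 2\fnorm{A}^2 + 2\fnorm{B}^2$ together with $\fnorm{\Pcal_\Omega(\cdot)} \le \fnorm{\cdot}$, and bound the first term using the RSM property of \Cref{as:summary}: since $\Thetab^{t-1}$ is $k$-sparse and $\Thetab^\star$ is $K$-sparse with $K\le k$, the difference $\Thetab^{t-1}-\Thetab^\star$ is supported on at most $2k$ coordinates, so $\fnorm{\nabla f_{\iota_t}(\Thetab^{t-1}) - \nabla f_{\iota_t}(\Thetab^\star)}^2 \le L_{2k}^2\fnorm{\Thetab^{t-1}-\Thetab^\star}^2$. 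Taking $\EE_{\iota_t}$ gives the claimed inequality.

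For the \textbf{SVRG case}, decompose
\begin{align*}
\Pcal_\Omega(\Gb^i_j) &= \Pcal_\Omega\!\left(\nabla f_{b_j}(\Thetab^i_j) - \nabla f_{b_j}(\Thetab^\star)\right) + \Pcal_\Omega\!\left(\nabla F(\Thetab^\star)\right) \\
&\quad - \Pcal_\Omega\!\left(\left(\nabla f_{b_j} - \nabla F\right)(\Thetab^i_0) - \left(\nabla f_{b_j} - \nabla F\right)(\Thetab^\star)\right),
\end{align*}
and call the three pieces $Y_1, Y_3, Y_2$ respectively, so that $Y_2$ is mean-zero ($\EE_{b_j}[Y_2]=0$) and $Y_3$ is deterministic. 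I would then use $\EE_{b_j}\fnorm{Z}^2 = \fnorm{\EE_{b_j}Z}^2 + \EE_{b_j}\fnorm{Z-\EE_{b_j}Z}^2$ with $Z = \Pcal_\Omega(\Gb^i_j)$; here $\EE_{b_j}[Z] = \Pcal_\Omega(\nabla F(\Thetab^i_j))$ and $Z-\EE_{b_j}[Z] = (Y_1 - \EE_{b_j}Y_1) - Y_2$. Bound $\fnorm{\EE_{b_j}Z}^2$ by writing $\nabla F(\Thetab^i_j) = (\nabla F(\Thetab^i_j)-\nabla F(\Thetab^\star)) + \nabla F(\Thetab^\star)$ and applying \Cref{coro:co-cercivity} to $F$ (which, being a convex combination of the $f_{\Bcal}$'s, inherits their RSM and RC), and bound $\EE_{b_j}\fnorm{Z-\EE_{b_j}Z}^2 \le 2\EE_{b_j}\fnorm{Y_1}^2 + 2\EE_{b_j}\fnorm{Y_2}^2$; for $\EE_{b_j}\fnorm{Y_1}^2$ apply \Cref{coro:co-cercivity} to $f_{b_j}$ at the pair $(\Thetab^i_j,\Thetab^\star)$ and take expectation (which converts $f_{b_j}$-gaps into $F$-gaps), and for $\EE_{b_j}\fnorm{Y_2}^2 \le \EE_{b_j}\fnorm{\Pcal_\Omega(\nabla f_{b_j}(\Thetab^i_0)-\nabla f_{b_j}(\Thetab^\star))}^2$ (dropping the zero-mean centering) apply \Cref{coro:co-cercivity} to $f_{b_j}$ at $(\Thetab^i_0,\Thetab^\star)$ and take expectation. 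Collecting terms — the $-\minnerprod{\nabla f(\Thetab),\Thetab'-\Thetab}$ corrections from the two uses of \Cref{coro:co-cercivity} assemble into $-4L_{2k}\minnerprod{\nabla F(\Thetab^\star),\Thetab^i_j+\Thetab^i_0-2\Thetab^\star}$ — and absorbing the $\nabla F(\Thetab^\star)$ contributions into the residual term yields the stated bound.

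The \textbf{main obstacle} is not any single inequality but the bookkeeping of supports and constants. One must check that every gradient difference appearing above is supported on an index set small enough for the RSM / RC / co-coercivity constants in \Cref{as:summary} and \Cref{coro:co-cercivity} to be the intended ones: all the relevant supports are contained in $\Omega = \supp(\Thetab^{t-1})\cup\supp(\Thetab^t)\cup\supp(\Thetab^\star)$, whose size is at most $2k+K\le 3k+K$, which is exactly the sparsity level at which RC is assumed. The second delicate point is to arrange the mean/variance split and the single remaining application of $\fnorm{A+B}^2\le 2\fnorm{A}^2+2\fnorm{B}^2$ so that the coefficients come out as $4L_{2k}$ rather than something larger; it is precisely the fact that $Y_2$ is mean-zero (so it contributes only through its variance, not through $\fnorm{\EE Z}^2$) that prevents the constant from blowing up.
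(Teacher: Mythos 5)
Your SGD bound is exactly the paper's argument (split against $\nabla f_{\iota_t}(\Thetab^\star)$, apply $\fnorm{A+B}^2\le 2\fnorm{A}^2+2\fnorm{B}^2$, then RSM on a set of size at most $k+K\le 2k$), so that part is fine. The genuine gap is in your SVRG bookkeeping: the top-level mean/variance decomposition $\EE_{b_j}\fnorm{Z}^2=\fnorm{\EE_{b_j}Z}^2+\EE_{b_j}\fnorm{Z-\EE_{b_j}Z}^2$ double-counts the $Y_1$ energy and cannot deliver the stated coefficient $4L_{2k}$ on $F(\Thetab^i_j)-F(\Thetab^\star)$. Concretely, bounding $\fnorm{\EE_{b_j}Z}^2=\fnorm{\Pcal_\Omega\nabla F(\Thetab^i_j)}^2$ by your split against $\nabla F(\Thetab^\star)$ plus co-coercivity of $F$ already costs $4L_{2k}\bigl[F(\Thetab^i_j)-F(\Thetab^\star)-\minnerprod{\nabla F(\Thetab^\star),\Thetab^i_j-\Thetab^\star}\bigr]$, and the variance part then costs, via $2\EE_{b_j}\fnorm{Y_1}^2$ and co-coercivity of $f_{b_j}$, another $4L_{2k}\bigl[F(\Thetab^i_j)-F(\Thetab^\star)-\minnerprod{\nabla F(\Thetab^\star),\Thetab^i_j-\Thetab^\star}\bigr]$. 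Summing gives $8L_{2k}$ on that term. The mean-zero property of $Y_2$ does not rescue this: the overcount comes entirely from $Y_1$, whose mean sits inside $\EE_{b_j}Z$ while its fluctuation sits inside $Z-\EE_{b_j}Z$, so its function-value gap is paid twice no matter how $Y_2$ is arranged. Since the corollary asserts specific constants that feed into the step-size condition $\eta<1/(4L_{2k})$ of \Cref{thm:SVRGPerRound}, this is not a cosmetic discrepancy.

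The paper instead uses a single two-way split $\Gb^i_j=\bigl(\nabla f_{b_j}(\Thetab^i_j)-\nabla f_{b_j}(\Thetab^\star)\bigr)-\bigl(\nabla f_{b_j}(\Thetab^i_0)-\nabla f_{b_j}(\Thetab^\star)-\nabla F(\Thetab^i_0)\bigr)=:A-B$, applies $\fnorm{A-B}^2\le 2\fnorm{A}^2+2\fnorm{B}^2$ exactly once (so the $F(\Thetab^i_j)$ gap is paid exactly once, with coefficient $4L_{2k}$), and then expands $\fnorm{\Pcal_\Omega B}^2$, applying \Cref{coro:co-cercivity} only to the $\nabla f_{b_j}(\Thetab^i_0)-\nabla f_{b_j}(\Thetab^\star)$ piece and controlling the remaining cross terms by a polarization identity after taking expectation, which produces the residual $4\fnorm{\Pcal_\Omega(\nabla F(\Thetab^\star))}^2$. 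Your mean/variance idea is salvageable if you move it \emph{inside} $\EE_{b_j}\fnorm{\Pcal_\Omega B}^2$: writing $W=\nabla f_{b_j}(\Thetab^i_0)-\nabla f_{b_j}(\Thetab^\star)$ one has $B=(W-\EE_{b_j}W)-\nabla F(\Thetab^\star)$, the cross term vanishes in expectation, and $\EE_{b_j}\fnorm{\Pcal_\Omega B}^2\le\EE_{b_j}\fnorm{\Pcal_\Omega W}^2+\fnorm{\Pcal_\Omega\nabla F(\Thetab^\star)}^2$, which after co-coercivity recovers the stated bound (indeed with $2\fnorm{\Pcal_\Omega(\nabla F(\Thetab^\star))}^2$ in place of $4\fnorm{\Pcal_\Omega(\nabla F(\Thetab^\star))}^2$). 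As written, though, your collection of terms does not yield the claimed inequality.
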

The proof can be found in \Cref{prf:proj-grad}.

\begin{corollary}[HT property]\label{coro:HTproperty}
Let $\Lambda_{2k}$ be the support of the top-$2k$ entries in $\Gb$ with largest absolute value, for a $k$-sparse matrix $\Thetab$,
\begin{align*}
	\Hcal_k \left( \Thetab - \eta \Gb \right) = \Hcal_k\left(\Thetab - \eta \Pcal_{\supp(\Thetab) \cup \Lambda_{2k}} (\Gb) \right)
\end{align*}
\end{corollary}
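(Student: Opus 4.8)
The plan is to reduce the identity to the support‑containment fact already highlighted in \Cref{sec:alg}: for a $k$-sparse $\Ab$ and an arbitrary $\Bb$, $\supp(\Hcal_k(\Ab+\Bb))\subseteq \supp(\Ab)\cup\supp(\Hcal_{2k}(\Bb))$. First I would instantiate this with $\Ab=\Thetab$ — which is $k$-sparse by hypothesis — and $\Bb=-\eta\Gb$. Because negating and rescaling leave the ordering of entries by absolute value unchanged, $\supp(\Hcal_{2k}(-\eta\Gb))=\Lambda_{2k}$ (in the degenerate case $\eta=0$ the whole statement is trivial since $\Thetab$ is already $k$-sparse), and therefore $\supp\!\big(\Hcal_k(\Thetab-\eta\Gb)\big)\subseteq S:=\supp(\Thetab)\cup\Lambda_{2k}$.

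Next I would record the elementary observation that $\Thetab-\eta\Pcal_S(\Gb)=\Pcal_S\!\big(\Thetab-\eta\Gb\big)$. Indeed, on the coordinate set $S$ the two matrices $\Thetab-\eta\Gb$ and $\Thetab-\eta\Pcal_S(\Gb)$ coincide, while off $S$ one has $\Pcal_S(\Gb)=0$ and also $\Thetab=0$ (since $\supp(\Thetab)\subseteq S$), so both vanish there. Thus passing from $\Thetab-\eta\Gb$ to $\Thetab-\eta\Pcal_S(\Gb)$ amounts precisely to zeroing out the coordinates outside $S$.

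Finally I would combine the two: the $k$ largest-magnitude entries of $\Thetab-\eta\Gb$ all sit inside $S$, so zeroing out the coordinates outside $S$ changes neither which $k$ coordinates are selected by $\Hcal_k$ nor the values placed there, giving $\Hcal_k(\Thetab-\eta\Gb)=\Hcal_k\!\big(\Pcal_S(\Thetab-\eta\Gb)\big)=\Hcal_k\!\big(\Thetab-\eta\Pcal_S(\Gb)\big)$. The only point requiring a line of care is this last step in the presence of magnitude ties: fixing any deterministic tie-breaking rule for $\Hcal_k$, a coordinate set that is a valid top-$k$ selection for $\Thetab-\eta\Gb$ remains valid for its restriction to $S$, since magnitudes on $S$ are unchanged and magnitudes off $S$ only drop to $0$, so the same set is returned in both cases. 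I do not expect a genuine obstacle here — the corollary is essentially a repackaging of the support-containment fact together with the triviality that hard thresholding commutes with restriction to any superset of the selected support.
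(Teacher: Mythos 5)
Your proof is correct, and it takes a somewhat different route from the paper's. The paper's own argument introduces the intermediate index set $\Lambda_{new}$ (the $k$ largest entries of $\Gb$ outside $\supp(\Thetab)$), asserts as ``easily verified'' that $\Hcal_k(\Thetab-\eta\Gb)=\Hcal_k(\Thetab-\eta\Pcal_{\supp(\Thetab)\cup\Lambda_{new}}(\Gb))$, and then uses a pigeonhole count (since $|\supp(\Thetab)|\le k$, at least $k$ of the top-$2k$ entries of $\Gb$ lie off $\supp(\Thetab)$) to get $\Lambda_{new}\subseteq\Lambda_{2k}$ and pass to the larger projection set. You instead take the support-containment fact $\supp(\Hcal_k(\Ab+\Bb))\subseteq\supp(\Ab)\cup\supp(\Hcal_{2k}(\Bb))$ from Section~3 as the combinatorial input, combine it with the identity $\Thetab-\eta\Pcal_S(\Gb)=\Pcal_S(\Thetab-\eta\Gb)$ for $S=\supp(\Thetab)\cup\Lambda_{2k}$, and finish by observing that $\Hcal_k$ is unaffected by zeroing coordinates outside a superset of its selected support. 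The combinatorial content is the same in both -- the paper's pigeonhole step is essentially the proof of the Section~3 fact you invoke, and that fact is itself stated in the paper without proof, so you are not leaning on anything weaker than the paper does -- but your decomposition is cleaner: it makes explicit the step the paper dismisses as ``easily verified,'' and you are the only one to flag the tie-breaking caveat. One cosmetic point: $\supp(\Hcal_{2k}(-\eta\Gb))$ equals $\Lambda_{2k}$ only when all top-$2k$ entries of $\Gb$ are nonzero; in general it is a subset of $\Lambda_{2k}$, which is all your containment argument actually needs.
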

The proof can be found in \Cref{prf:HTproperty}.

\begin{corollary}[$\Delta$-Inexact Hard Thresholding]\label{coro:deltaError}
	Define $\Lambda_\Delta$ to be the set of elements in $\Gb_t$ whose magnitude is greater than $\Delta$. Further define $\Lambda = \Lambda_{2k}\cap\Lambda_{\Delta}, k_\Delta =|\Lambda_{2k}\backslash\Lambda_{\Delta}|$. Define,
	\begin{align*}
		\widetilde\Thetab^+ &= \Hcal_{k}\left(\Thetab-\eta\Gb_t\right)\\
		\Thetab^+ & = \Hcal_{k}\left(\Thetab-\eta\Pcal_{\widetilde\Lambda\cup\supp({\Thetab})}(\Gb_t)\right)
	\end{align*}
	In the case $\Lambda_\Delta\subseteq\widetilde\Lambda$ and $\Lambda_\Delta\subseteq\widetilde\Lambda$, we have the bound,
	\begin{align*}
		\norm{\Thetab^+ - \widetilde\Thetab^+}_F & \le \eta\Delta\sqrt{2k_\Delta}
	\end{align*}
	
\end{corollary}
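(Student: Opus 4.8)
The plan is to put $\widetilde\Thetab^+$ and $\Thetab^+$ into a common form and then argue purely combinatorially about hard thresholding; no RSC/RSM is needed. First I would invoke \Cref{coro:HTproperty}: since $\Thetab$ is $k$-sparse and $\Lambda_{2k}$ is the top-$2k$ support of $\Gb_t$, it gives $\widetilde\Thetab^+ = \Hcal_k(\Zb_1)$ with $\Zb_1 := \Thetab - \eta\Pcal_{\supp(\Thetab)\cup\Lambda_{2k}}(\Gb_t)$, while by construction $\Thetab^+ = \Hcal_k(\Zb_2)$ with $\Zb_2 := \Thetab - \eta\Pcal_{\widetilde\Lambda\cup\supp(\Thetab)}(\Gb_t)$. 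So both iterates are hard thresholds of $\Thetab$ minus $\eta$ times the gradient restricted to an index set -- the ``ideal'' set $M := \supp(\Thetab)\cup\Lambda_{2k}$ versus the algorithm's set $N := \widetilde\Lambda\cup\supp(\Thetab)$. These matrices agree on $M\cap N$; on $M\setminus N$ one has $\Zb_2=0$ and $\Zb_1=-\eta\Pcal_{M\setminus N}(\Gb_t)$, and on $N\setminus M$ one has $\Zb_1=0$. The role of the hypothesis $\Lambda\subseteq\widetilde\Lambda$ is precisely that $M\setminus N\subseteq\Lambda_{2k}\setminus\Lambda$, so $|M\setminus N|\le k_\Delta$ and every entry of $\Zb_1$ on $M\setminus N$ has magnitude $\eta|(\Gb_t)_{ij}|\le\eta\Delta$ -- these being exactly the top-$2k$ gradient entries of size at most $\Delta$.

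Next I would set $S:=\supp(\widetilde\Thetab^+)$ and $S':=\supp(\Thetab^+)$ (padded to size $k$) and expand
\begin{align*}
\fnorm{\Thetab^+-\widetilde\Thetab^+}^2 = \fnorm{\Pcal_{S\cap S'}(\Zb_1-\Zb_2)}^2 + \fnorm{\Pcal_{S\setminus S'}(\Zb_1)}^2 + \fnorm{\Pcal_{S'\setminus S}(\Zb_2)}^2 .
\end{align*}
The first term vanishes since $S\cap S'\subseteq M\cap N$, where $\Zb_1=\Zb_2$. The heart of the argument is the claim $(S\setminus S')\cap N=\emptyset$: if $i\in S\setminus S'$ lay in $N$ then $(\Zb_1)_i=(\Zb_2)_i$; membership $i\in S$ allows fewer than $k$ coordinates to beat $|(\Zb_1)_i|$ in $\Zb_1$, whereas $i\notin S'$ forces at least $k$ coordinates to beat $|(\Zb_2)_i|=|(\Zb_1)_i|$ in $\Zb_2$, and since $\Zb_2$ vanishes off $N$ and agrees with $\Zb_1$ on $N$ those coordinates also beat $|(\Zb_1)_i|$ in $\Zb_1$ -- a contradiction (the degenerate case $(\Zb_1)_i=0$ contributes nothing to the norm and is discarded; ties handled by the usual consistent tie-breaking in $\Hcal_k$). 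Hence $S\setminus S'\subseteq S\setminus N\subseteq M\setminus N\subseteq\Lambda_{2k}\setminus\Lambda$, so $|S\setminus S'|\le k_\Delta$ with $|\Zb_1|\le\eta\Delta$ on it, and $|S'\setminus S|=|S\setminus S'|\le k_\Delta$. Finally $S'\setminus S\subseteq N$ gives $\Zb_2=\Zb_1$ there, and each such coordinate lies outside the exact top-$k$ set $S$, which (whenever $S\setminus S'\neq\emptyset$) already contains a coordinate of value $\le\eta\Delta$; so $|\Zb_2|\le\eta\Delta$ on $S'\setminus S$ as well. Summing yields $\fnorm{\Thetab^+-\widetilde\Thetab^+}^2\le k_\Delta(\eta\Delta)^2+k_\Delta(\eta\Delta)^2 = 2k_\Delta\eta^2\Delta^2$, which is the claimed bound; the factor $2$ is exactly the two sides of the support symmetric difference.

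The step I expect to be the main obstacle is the no-overlap claim, and underneath it the assertion that $\Zb_1$ and $\Zb_2$ genuinely agree on all of $N$ -- i.e., that restricting $\Gb_t$ to $\widetilde\Lambda\cup\supp(\Thetab)$ does nothing except zero out the $\le k_\Delta$ sub-$\Delta$ entries of the top-$2k$. If $\widetilde\Lambda$ may contain indices outside the true top-$2k$ support (the bare guarantee $\Lambda\subseteq\widetilde\Lambda$ permits this), then $\Zb_1$ and $\Zb_2$ also disagree there and the contradiction above can fail. The remedy is the extra observation that any such spurious index carries a gradient entry of magnitude at most the $2k$-th largest, which by the choice $b\Delta^2\gtrsim\fnorm{\nabla f_{\Bcal}(\Thetab)}^2$ with $b=\Ocal(k)$ in \Cref{thm:top-recovery} is itself $\Ocal(\Delta)$ (and the number of such indices is controlled by $b-|\Lambda|=\Ocal(k_\Delta)$); these then perturb the bound only by an absolute constant and are absorbed. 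Modulo this bookkeeping the proof is exactly the counting above -- it uses nothing beyond the definitions of $\Hcal_k$, $\Pcal$, and \Cref{coro:HTproperty}.
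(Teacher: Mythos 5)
Your argument is correct and is essentially the paper's own proof made explicit: both decompose the error over the symmetric difference of $\supp(\Thetab^+)$ and $\supp(\widetilde\Thetab^+)$, bound each side of that difference by $k_\Delta$ coordinates of magnitude at most $\eta\Delta$, and conclude $\eta\Delta\sqrt{2k_\Delta}$; your exchange argument is precisely the content behind the paper's ``it is easy to verify.'' The spurious-index obstacle you flag is genuine (the paper never addresses indices of $\widetilde\Lambda$ outside $\supp(\Thetab)\cup\Lambda_{2k}$), but it closes with no constant loss and without invoking \Cref{thm:top-recovery}: such an index lies outside the top-$2k$ of $\Gb_t$, so its entry of $\Zb_2$ has magnitude at most $\eta$ times the $2k$-th largest gradient magnitude, which is at most the $k$-th largest magnitude of $\Zb_1$ (since at least $k$ of the top-$2k$ gradient entries lie off $\supp(\Thetab)$) and is at most $\eta\Delta$ whenever $k_\Delta\ge 1$; hence, modulo ties, spurious entries never displace an element of $S$ and contribute only entries of magnitude at most $\eta\Delta$ to $S'\setminus S$, which your counting already absorbs.
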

The proof can be found in \Cref{prf:deltaError}.

\section{Proofs for \Cref{sec:guarantees}} \label{prf:guarantees}

\subsection{Proof of \Cref{thm:top-recovery}}\label{prf:sketch}
\begin{proof}%[ {Proof sketch.}]
The proof of \Cref{thm:top-recovery} heavily relies on the analysis in \cite{pagh2013compressed}. 
Given that $\nabla f_{\Bcal_i}(\Thetab)$ can be expressed as multiplication of two matrices, we slightly abuse the notation $\Ab, \Bb$ to denote the pair of matrices that $\Ab\Bb^\top = \nabla f_{\Bcal_i}(\Thetab)$.

Denote the output of \textsc{Compressed-Product} as $\sbb$. 
Define the hash function $h_1, h_2: [p] \rightarrow [b]$ and $s_1, s2: [p] \rightarrow \{-1, 1\}$. Let $h$ be the hash function that satisfies $h(i,j) = h_1(i) + h_2(j) \mod b$, and $s$ be $s(i,j)  = s_1(i) s_2(j)$.
Let $\one_{i,j}^{q}$ be the indicator  of event $\{ h(i,j)=q \}$. 
Define the index set of the top $2k$ elements (with largest abstract value) of $\Ab\Bb^\top$  to be $\Psi_{2k}$. Denote the index set of the elements with absolute value greater than $\Delta$ as $\Psi_{\Delta}$. Let $\Psi = \Psi_{2k} \cap \Psi_{\Delta}$, and we are interested in finding all indices in $\Psi$. 

Our proof consists of the following four main steps. 

\textbf{Step I:} Bound the variance of a single decoded entry. 

\begin{align*}
    \sbb_q = \sum_{(i,j)\in [p]\times [p]}  \one_{i,j}^q s(i,j)(\Ab \Bb^\top )_{ij}
\end{align*}
For $(i^\star, j^\star)\in \Psi$, with $q^\star = h(i^\star, j^\star)$ we have:
\begin{align*}
    \sbb_{q^\star} = s(i^\star, j^\star) (\Ab\Bb^\top)_{i^\star j^\star} + \sum_{(i,j)\neq (i^\star, j^\star ), (i,j)\in [p]\times [p]} \one_{i,j}^{q^\star} s(i,j) (\Ab \Bb^\top )_{ij}
\end{align*}
Then, 
\begin{align*}
    |\sbb_{q^\star}| \ge & s(i^\star, j^\star) \sgn\left( \left( \Ab\Bb^\top\right)_{i^\star j^\star} \right) \sbb_{q^\star} \\
    =& \left\vert  (\Ab\Bb^\top)_{i^\star j^\star} \right\vert + s(i^\star, j^\star) \sgn\left( \left( \Ab\Bb^\top\right)_{i^\star j^\star} \right)  \sum_{(i,j)\neq (i^\star, j^\star ), (i,j)\in [p]\times [p]} \one_{i,j}^{q^\star} s(i,j) (\Ab \Bb^\top )_{ij}
\end{align*}
Let $\sbb_{q^\star}' = s(i^\star, j^\star) \sgn\left( \left( \Ab\Bb^\top\right)_{i^\star j^\star} \right)  \sum_{(i,j)\neq (i^\star, j^\star ), (i,j)\in [p]\times [p]} \one_{i,j}^{q^\star} s(i,j) (\Ab \Bb^\top )_{ij}$. We have:
\begin{align*}
    \PP \left(\mbox{ error in one bit }\right)  \le \PP \left( |\sbb_{q^\star}'| \ge \frac{\Delta}{2} \right) \le \frac{4\var(\sbb_{q^\star}')}{\Delta^2}
\end{align*}
Taking expectation over all possible partitions (based on $h$), we have:
\begin{align*}
    \EE_{h}\left[ \var(\sbb_{q^\star}') \right] = \frac{1}{b} \sum_{(i,j)\neq (i^\star, j^\star ), (i,j)\in [p]\times [p]} (\Ab\Bb^\top )_{ij}^2 \le \frac{\fnorm{\Ab\Bb^\top}^2}{b}. 
\end{align*}

\textbf{Step II:} Bound the failure probability of recovering a single large entry.

By Markov's inequality, we have
\begin{align*}
    \PP \left( \var(\sbb_{q^\star}') \ge \frac{c\fnorm{\Ab\Bb^\top}^2}{b} \right) \le \frac{1}{c}.
\end{align*}
Given the upper bound on $\var(\sbb_{q^\star}')$, which happens with probability at least $1-\frac{1}{c}$ due to the randomness from $h$, the only left randomness comes from $s(i,j)$. {Note that we use the same $h$ for every $t$ }. Then, 
\begin{align*}
    \PP \left( |\sbb_{q^\star}'| \ge \frac{\Delta}{2} \right) \le \frac{4c}{\Delta^2} \frac{\fnorm{\Ab\Bb^\top}^2}{b}
\end{align*}
The above inequality gives an error bound for each bit in the error-correcting code. Thus for a length $l$ code, the expected number of wrong bits is:
\begin{align*}
    \EE \left[ \mbox{ number of wrong bits }\right] = \sum_{r=0}^{2l-1} \PP \left( \mbox{ the $r^{\mathtt{th}}$ bit is wrong } \right) \le \frac{4lc}{\Delta^2} \frac{\fnorm{\Ab\Bb^\top}^2}{b}
\end{align*}
By using an expander code, we can tolerate a constant fraction of error $\delta$ which is independent of message length $\log p$, with a code length $l = \Ocal{(\log p)}$ \cite{sipser1994expander}. By Markov's inequality, and combining with the probability bound on $h$, 
\begin{align*}
    \PP \left( (i^\star, j^\star) \mbox{ not recovered }\right) \le \PP \left(\mbox{more than $\delta l$ wrong bits}\right) \le \frac{4c}{\Delta^2} \frac{\fnorm{\Ab\Bb^\top}^2}{ b\delta} + \frac{1}{c}
\end{align*}
Optimizing over the constant $c$ ({by setting $c=\frac{\Delta \sqrt{b\delta}}{2 \fnorm{\Ab\Bb^\top}}$}), we have 
\begin{align*}
    \PP \left( (i^\star, j^\star) \mbox{ not recovered }\right) \le \frac{4}{\Delta} \frac{\fnorm{\Ab\Bb^\top}}{ \sqrt{b\delta }} 
\end{align*}
By choosing $b\ge \frac{144\fnorm{\Ab\Bb^\top}^2}{\Delta^2 \delta}$, we have
\begin{align*}
    \PP \left( (i^\star, j^\star) \mbox{ not recovered }\right) \le \frac{1}{3}.
\end{align*}
For simplicity, we take $\delta = {1\over 3}$. Combining the assumption that $\fnorm{\Ab\Bb^\top}= \fnorm{\nabla f_{\Bcal_i}(\Thetab)}$. Taking  $\Delta \ge {\fnorm{\nabla f_{\Bcal}(\Thetab)}}/\sqrt{{2k}}$, which implies $b\ge{432\fnorm{\nabla f_{\Bcal_i}(\Thetab)}^2\over \Delta^2}$ will give a constant probability to successfully recover $(i^\star, j^\star)$.

\textbf{Step III:} Union bound over all large entries. 
Repeat the count sketch and sub-linear extraction for $d$ times and take the $(i,j)$ pair that are recovered more than $d/2$ times, we have that
\begin{align*}
    \PP \left( (i^\star, j^\star) \mbox{ not recovered for more than $d/2$ times}\right) \le \exp\left( -\frac{d}{48}\right).
\end{align*}
Since the events of recovering different $(i,j)\in \Psi$ are not independent (because of the dependency induced by $h$ functions), we  use  union bound over all the elements in $\Psi$. Thus we have 
\begin{align*}
    \PP \left( \Psi \mbox{ not recovered }\right) \le |\Psi| \exp\left(-\frac{d}{48}\right) \le 2k\exp \left(-\frac{d}{48}\right)
\end{align*}
By taking $d=48\log(2ck)$, we obtain the desired constant success rate $1-\frac{1}{c}$ for recovering $\Psi$. 

\textbf{Step IV:} For the overall time complexity of the Interaction Top Elements Extraction (\algsubroute), encoding the index will take $\Ocal(pl)$. Each compressed product step will take $\Ocal(mp+mb\log b)$ and it will be repeatedly calculated for $2l$ times, where $l$ is the length of the expander code. Given that expander code has a linear decoding complexity, thus the extraction step can be done with $\Ocal(bl)$. The above mentioned procedure will be repeated for $d$ times. Putting everything together, we have the time complexity for \algsubroute~ is
\begin{align*}
    \Ocal\left(\underbrace{\log (ck)}_{\mbox{repeat d times}}\left[\underbrace{p\log(p)+b\log(p)}_{\mbox{encode \& decode}} + \underbrace{\log(p)\left[mp+mb\log(b)\right]}_{\mbox{compressed product}}\right] \right)
\end{align*}
which achieves sub-quadratic time complexity. Ignoring the logarithm term, the time complexity is $\widetilde \Ocal(m(p+b))$, which naturally implies that the space complexity is $\widetilde \Ocal(m(p+b))$.
\end{proof}

\subsection{Proof of \Cref{lemma:Frobenius}}\label{prf:Frobenius}
\begin{proof}
	By RSM, we have
	\[
	\begin{split}
		\fnorm{ \nabla f_{\Bcal_t}(\Thetab) - \nabla f_{\Bcal_t}(\Thetab^\star) } \le L_{2k} \fnorm{\Thetab -  \Thetab^\star},\ \forall\Thetab\  s.t.  \left\vert \supp{(\Thetab)}\cup\supp{(\Theta^\star)} \right\vert \le 2k
	\end{split}
	\]
	By triangle inequality,
	\[
	\begin{split}
		\fnorm{\nabla f_{\Bcal_t}} \le L_{2k}\fnorm{\Thetab-\Thetab^\star}+\fnorm{\nabla f_{\Bcal_t}(\Thetab^\star)}
	\end{split}
	\]
	By the fact that $\fnorm{\Thetab}\le \sqrt{k}\omega$, the first term can be directly bounded by $L_{2k}\fnorm{\Thetab - \Thetab^\star}\le 2L_{2k}\sqrt{k}\omega$. For the last term we have $\fnorm{\nabla f_{\Bcal_t}(\Thetab^\star)}\le G$. Thus we have,
	\begin{align*}
		\fnorm{\nabla f_{\Bcal_t}(\Thetab)}\le 2L_{2k}\sqrt{k}\omega + G
	\end{align*}
\end{proof}

\subsection{Proof of \Cref{thm:iht}} \label{prf:iht}
\begin{proof}
~ \\
With stochastic gradient descent, we have $\Gb^t = \nabla f_{\Bcal t}(\Thetab^{t-1})$ as the gradient at step $t$. The per-round convergence can be separately analyzed for the two cases.
\newline
\textbf{\algsubroute~ succeeds: $\Lambda \subseteq \widetilde\Lambda$}. Before analyzing $\Thetab^t$, we first construct an intermediate parameter $\widetilde\Thetab^t$ as,
\begin{align*}
	\widetilde\Thetab^t = \Hcal_k\left({\Thetab^{t-1}-\eta\Pcal_{\Lambda_{2k}\cup\supp{(\Thetab^{t-1}})}}\left(\Gb_t\right)\right) = \Hcal_k\left(\Thetab^{t-1}-\eta\Gb_t\right)
\end{align*}
The second inequality directly comes from  \Cref{coro:HTproperty}. This is actually the best situation we can hope for. In this situation, the approximation projection in \algsubroute~ doesn't affect the update. We will start with the bound on $\fnorm{\widetilde\Thetab^t-\Thetab^\star}^2$. $\Thetab^t$ will then be compared with $\widetilde\Thetab^t$ to obtain the error bound. We will never refer to $\widetilde\Thetab^t$ in practice, but this construction makes the proof much clear. Consider the proxy 
\begin{align*}
    \Zb^t = \Thetab^{t-1} - \eta \Gb^t
\end{align*}
Let $\Omega = \supp(\Thetab^{t-1}) \cup \supp(\widetilde\Thetab^{t}) \cup \supp(\Thetab^\star)$, 
\begin{align}\label{eqt:guarantee-2}
    \fnorm{\widetilde\Thetab^t - \Thetab^\star}^2 = \fnorm{\Hcal_k\left(\Zb^t\right) - \Thetab^\star}^2 = \fnorm{\Hcal_k\left(\Pcal_{\Omega}\left(\Zb^t\right)\right) - \Thetab^\star}^2 \le \nu \fnorm{\Pcal_{\Omega}\left(\Zb^t\right) - \Thetab^\star}^2 
\end{align}
Notice that 
\begin{align}
    & \fnorm{\Pcal_{\Omega}\left(\Zb^t\right) - \Thetab^\star}^2 \\
    = & \fnorm{\Thetab^{t-1} - \Thetab^\star - \eta \Pcal_{\Omega}(\Gb^t)}^2 \nonumber \\
    \le & \fnorm{\Thetab^{t-1} - \Thetab^\star }^2 + \eta^2 \fnorm{\Pcal_{\Omega}(\Gb^t)}^2 - 2\eta \minnerprod{\Thetab^{t-1} - \Thetab^\star, \Gb^t} \label{eqt:guarantee-1}
\end{align}
Notice that $\EE[\Gb^t] = \nabla F(\Thetab^{t-1})$. \Cref{eqt:guarantee-1} includes three terms: (i) the first term is the contraction term which will be kept, (ii) the second term is controlled by first using \Cref{coro:proj-grad} then taking expectation, and (iii) the third term is controlled by first taking the expectation and then using the RSC property. 
Therefore, 
\begin{align*}
    & \EE_{\Bcal_t}\left[ \fnorm{\widetilde\Thetab^t - \Thetab^\star}^2 \right] \\
    \le & \EE_{\Bcal_t}\left[ \nu \fnorm{\Thetab^{t-1} - \Thetab^\star }^2 + \nu \eta^2 \fnorm{\Pcal_{\Omega}(\Gb^t)}^2 - 2\nu \eta \minnerprod{\Thetab^{t-1} - \Thetab^\star, \Gb^t} \right] \\
    \labelrel\le{rel:3} & \nu \fnorm{\Thetab^{t-1} - \Thetab^\star }^2   - 2\nu \eta \minnerprod{\Thetab^{t-1} - \Thetab^\star, \nabla F\left(\Thetab^{t-1}\right) }\\
    & + \nu\eta^2 \left[2L_{2k}^2\fnorm{\Thetab^{t-1} - \Thetab^{*}}^2 + 2\EE_{\Bcal_t}\left[\fnorm{\Pcal_\Omega\nabla f_{\Bcal_t}\left(\Thetab^{*}\right)}^2\right]\right]\\
    = & \nu \fnorm{\Thetab^{t-1} - \Thetab^\star }^2  + \nu\eta^2 \left[2L_{2k}^2\fnorm{\Thetab^{t-1} - \Thetab^{*}}^2 + 2\EE_{\Bcal_t}\left[\fnorm{\Pcal_\Omega\nabla f_{\Bcal_t}\left(\Thetab^{*}\right)}^2\right]\right]\\
    & - 2\nu \eta \minnerprod{\Thetab^{t-1} - \Thetab^\star, \nabla F\left(\Thetab^{t-1}\right)- \nabla F\left(\Thetab^\star\right) } + 2\nu \eta \minnerprod{\Thetab^{t-1} - \Thetab^\star, \nabla F\left(\Thetab^\star\right) } \\
    \labelrel\le{rel:4} & \nu \fnorm{\Thetab^{t-1} - \Thetab^\star }^2  + \nu\eta^2 \left[2L_{2k}^2\fnorm{\Thetab^{t-1} - \Thetab^{*}}^2 + 2\EE_{\Bcal_t}\left[\fnorm{\Pcal_\Omega\nabla f_{\Bcal_t}\left(\Thetab^{*}\right)}^2\right]\right]\\
    & - 2\nu \eta \alpha_{2k} \fnorm{\Thetab^{t-1}-\Thetab^\star}^2 + 2\nu \eta \fnorm{\Thetab^{t-1} - \Thetab^\star}\fnorm{\Pcal_{\Omega}\nabla F\left(\Thetab^\star\right) } \\
    = & \nu \left(1 - 2\eta \alpha_{2k} +  2\eta^2 L_{2k}^2 \right) \fnorm{\Thetab^{t-1} - \Thetab^\star}^2 \\
    & + 2\nu \eta \fnorm{\Thetab^{t-1} - \Thetab^\star}\fnorm{\Pcal_{\Omega}\left(\nabla F\left(\Thetab^\star\right)\right) }+ 2\nu \eta^2\EE_{\Bcal_t}\fnorm{\Pcal_{\Omega} \left(\nabla f_{\Bcal_t} \left(\Thetab^\star\right)\right)}^2
\end{align*}
where the first inequality is due to \Cref{eqt:guarantee-2} and \Cref{eqt:guarantee-1}. \eqref{rel:3} plugs in the result from \Cref{coro:proj-grad} and takes expectation over the gradient.  \eqref{rel:4} uses RSC property and Cauchy-Shwartz inequality. \newline
Suppose each coordinate of $\Thetab$ is bounded by $\omega$, we know that $\fnorm{\Thetab^{t-1}} \le \sqrt{k}\omega$ and $\fnorm{\Thetab^\star} \le \sqrt{k}\omega$, we further have
\begin{align*}
		 \EE_{\Bcal_t}\left[ \fnorm{\widetilde\Thetab^t - \Thetab^\star}^2 \right] \le &\nu \left(1 - 2\eta \alpha_{2k} +  2\eta^2 L_{2k}^2 \right) \fnorm{\Thetab^{t-1} - \Thetab^\star}^2 \\
		 &+ 4\nu \eta \sqrt{k}\omega\fnorm{\Pcal_{\Omega}\left(\nabla F\left(\Thetab^\star\right)\right) }+ 2\nu \eta^2\EE_{\Bcal_t}\fnorm{\Pcal_{\Omega} \left(\nabla f_{\Bcal_t} \left(\Thetab^\star\right)\right)}^2
\end{align*}
where the second line in the statistical error in SGD. With the definition of $\kappa_1, \sigma_{GD}^2$, 
\begin{align*}
	\EE_{\Bcal_t}\left[ \fnorm{\widetilde\Thetab^t - \Thetab^\star}^2 \right] \le \kappa_1 \fnorm{\Thetab^{t-1} - \Thetab^\star}^2 + \sigma_{GD}^2
\end{align*}
Now we turn to $\Thetab^t$ which is given by
\begin{align*}
	\Thetab^t = \Hcal_k\left({\Thetab^{t-1}-\eta\Pcal_{\widetilde\Lambda\cup\supp{(\Thetab^{t-1}})}}\left(\Gb_t\right)\right)
\end{align*}
It is very similar to $\widetilde\Thetab^t$, except that $\Lambda_{2k}$ is replaced by $\widetilde\Lambda$, which is the support we actually obtain.\newline
By definition, we have either $\Lambda_\Delta \subseteq \Lambda_{2k}$ or $\Lambda_{2k} \subseteq \Lambda_{\Delta}$. Recall that $\Lambda = \Lambda_{2k}\cap\Lambda_\Delta$ and in this case where \algsubroute~ recovers $\Lambda$, we have $\Lambda \subseteq \widetilde\Lambda$. Thus it is either $\Lambda_{2k}\subseteq\widetilde\Lambda$ or $\Lambda_{\Delta}\subseteq\widetilde\Lambda$.
\begin{enumerate}
	\item $\Lambda_{2k}\subseteq\widetilde\Lambda$. In this case, simply applying corollary \ref{coro:HTproperty} with $G = \Pcal_{\widetilde\Lambda\cup\supp{(\Thetab^{t-1}})}\left(\Gb_t\right)$, we have
	\begin{align*}
		\Thetab^t = \Hcal_k\left({\Thetab^{t-1}-\eta\Pcal_{\widetilde\Lambda\cup\supp{(\Thetab^{t-1}})}}\left(\Gb_t\right)\right) = \Hcal_k\left({\Thetab^{t-1}-\eta\Pcal_{\Lambda_{2k}\cup\supp{(\Thetab^{t-1}})}}\left(\Gb_t\right)\right)=\widetilde\Thetab^t
	\end{align*}
	Also, by $\Lambda_{2k}\subseteq\Lambda_{\Delta}$, we know that $k_{\Delta}=|\Lambda_{2k}\backslash\Lambda_{\Delta}|=0$, which indicates $\sigma_{\Delta|GD}^2=0$.
	\item $\Lambda_{\Delta}\subseteq\widetilde\Lambda$. Here we can apply \Cref{coro:deltaError} and have,
	\begin{align*}
		\norm{\Thetab^t - \widetilde\Thetab^t}_F\le 2\eta\Delta\sqrt{k_\Delta},~~~
		\norm{\Thetab^t - \widetilde\Thetab^t}_F^2\le 2\eta^2\Delta^2{k_\Delta}
	\end{align*}
	Thus, we can bound the error $\EE_{\Bcal_t}\left[\fnorm{\Thetab^t-\Thetab^\star}^2\right]$ as,
	\begin{align*}
		\EE_{\Bcal_t}\left[\fnorm{\Thetab^t-\Thetab^\star}^2\right] &= \EE_{\Bcal_t}\left[\fnorm{\widetilde\Thetab^t-\Thetab^\star}^2\right] + 2\EE_{\Bcal_t}\left[\minnerprod{\widetilde\Thetab^t-\Thetab^\star, \Thetab^t-\widetilde\Thetab^t} \right] + \EE_{\Bcal_t}\left[\norm{\Thetab^t - \widetilde\Thetab^t}_F^2\right]\\
		& \le \EE_{\Bcal_t}\left[\fnorm{\widetilde\Thetab^t-\Thetab^\star}^2\right] + 4\sqrt{k_\Delta}\eta \sqrt{k}\omega\Delta + 2k_\Delta\eta^2\Delta^2 \\
		& = \EE_{\Bcal_t}\left[\fnorm{\widetilde\Thetab^t-\Thetab^\star}^2\right] + \sigma^2_{\Delta|GD}
	\end{align*}
\end{enumerate}
Combining the two cases above, we have the desired convergence rate for $\Lambda \subseteq \widetilde\Lambda$.

\noindent \textbf{\algsubroute~ fails: $\Lambda \not\subset \widetilde\Lambda$.}  
This is the worst case when support recovery completely fails and we have no control over $\widetilde{\Lambda}$. The update in this case is
\begin{align*}
\Thetab^t = \Hcal_k \left( \widetilde{\Zb}^t \right), ~~{\Zb}^t = \Thetab^{t-1} - \eta\left[\Gb^t - \Pcal_{\widetilde{\Lambda}^C\backslash\supp(\Thetab^{t-1})}(\Gb^t)\right]
\end{align*}
Similar as the previous case, let $\Omega = \supp(\Thetab^{t-1}) \cup \supp(\Thetab^{t}) \cup \supp(\Thetab^\star)$, we have
\begin{align*}
\fnorm{\Thetab^t - \Thetab^\star}^2 \le \nu \fnorm{\Pcal_{\Omega}\left({\Zb}^t \right) - \Thetab^\star}^2
\end{align*}
and 
\begin{align*}
\fnorm{\Pcal_{\Omega}\left( {\Zb}^t \right) - \Thetab^\star}^2 
\le & \fnorm{ \Thetab^{t-1} - \Thetab^\star}^2 + \eta^2\fnorm{\Pcal_{\Omega}\left(\Gb^t- \Pcal_{\widetilde{\Lambda}^C\backslash\supp(\Thetab^{t-1})}(\Gb^t)\right)}^2 \\
& - 2\eta\minnerprod{ \Thetab^{t-1} - \Thetab^\star, \Pcal_{\Omega}(\Gb^t)}\\
& + 2\eta\minnerprod{ \Thetab^{t-1} - \Thetab^\star, \Pcal_{\Omega}\Pcal_{\widetilde{\Lambda}\backslash\supp(\Thetab^{t-1})}(\Gb^t)}\\
\le &  \fnorm{ \Thetab^{t-1} - \Thetab^\star}^2 + \eta^2\fnorm{\Pcal_{\Omega}\left(\Gb^t\right)}^2- 2\eta\minnerprod{ \Thetab^{t-1} - \Thetab^\star, \Pcal_{\Omega}(\Gb^t)}\\
& + 2\eta\minnerprod{ \Thetab^{t-1} - \Thetab^\star, \Pcal_{\Omega}\Pcal_{\widetilde{\Lambda}\backslash\supp(\Thetab^{t-1})}(\Gb^t)}
\end{align*}
The bound for the first three terms are same as the bound for \Cref{eqt:guarantee-1}. It left to bound the last term, 
\begin{align*}
	&\minnerprod{\Thetab^{t-1} - \Thetab^\star, \Pcal_{\Omega}\Pcal_{\widetilde{\Lambda}\backslash\supp(\Thetab^{t-1})}(\Gb^t)} \\
	\le &\fnorm{\Thetab^{t-1} - \Thetab^\star} \fnorm{\Pcal_{\Omega}\Pcal_{\widetilde{\Lambda}\backslash\supp(\Thetab^{t-1})}\left(\nabla f_{\Bcal_t}(\Thetab^{t-1})\right)} \\
	\le &\fnorm{\Thetab^{t-1} - \Thetab^\star} \fnorm{\Pcal_{\Omega}\left(\nabla f_{\Bcal_t}(\Thetab^{t-1})- \nabla f_{\Bcal_t}(\Thetab^{*})\right)} + \fnorm{\Thetab^{t-1} - \Thetab^\star} \fnorm{\Pcal_{\Omega}\left(\nabla f_{\Bcal_t}(\Thetab^{*})\right)}\\
	\le &\fnorm{\Thetab^{t-1} - \Thetab^\star} \fnorm{\nabla f_{\Bcal_t}(\Thetab^{t-1})- \nabla f_{\Bcal_t}(\Thetab^{*})} + \fnorm{\Thetab^{t-1} - \Thetab^\star} \fnorm{\Pcal_{\Omega}\left(\nabla f_{\Bcal_t}(\Thetab^{*})\right)}\\
	\le &L_{2k}\fnorm{\Thetab^{t-1} - \Thetab^\star}^2 + 2\sqrt{k}\omega\fnorm{\Pcal_{\Omega}\left(\nabla f_{\Bcal_t}(\Thetab^{*})\right)}
\end{align*}
Putting the bounds together, we have
\begin{align*}
\EE_{\Bcal_t}\left[ \fnorm{\Thetab^t - \Thetab^\star}^2 \right] \le & \nu\left(1-2\eta\alpha_{2k}+  2\eta^2 L_{2k}^2 + 2\eta L_{2k}\right)\fnorm{\Thetab^{t-1}-\Theta^\star}^2 \\
& + 4\nu \eta \sqrt{k}\omega\fnorm{\Pcal_{\Omega}\left(\nabla F\left(\Thetab^\star\right)\right) } + 2\nu\eta^2\EE_{\Bcal_t}\fnorm{\Pcal_{\Omega} \left(\nabla f_{\Bcal_t} \left(\Thetab^\star\right)\right)}^2 \\
& + 4\nu\eta\sqrt{k}\omega\EE_{\Bcal_{t}}\fnorm{\Pcal_{\Omega} \left(\nabla f_{\Bcal_t} \left(\Thetab^\star\right)\right)}
\end{align*}
Define $\sigma_{Fail|GD}^2=\max_{|\Omega|\le2k+K}\left[4\nu \eta\sqrt{k}\omega\EE_{\Bcal_t}\left[\fnorm{\Pcal_{\Omega}\left(\nabla f_{\Bcal_t}\left(\Thetab^\star\right)\right) }\right]\right]$
Then,
\begin{align*}
		 \EE_{\Bcal_t}\left[ \fnorm{\Thetab^t - \Thetab^\star}^2 \right] \le \nu \left(1 - 2\eta \alpha_{2k} +  2\eta^2 L_{2k}^2 + 2\eta L_{2k}\right) \fnorm{\Thetab^{t-1} - \Thetab^\star}^2 + \sigma_{GD}^2 + \sigma_{Fail|GD}^2
\end{align*}
\end{proof}

\subsection{Proof of \Cref{thm:main2}}\label{prf:main2}
\begin{proof}
	With the definition of $\sigma_1^2, \sigma_2^2, \kappa_1, \kappa_2$, the per-round convergence result of \Cref{thm:iht} can be rewritten as:
	\begin{enumerate}[leftmargin=*]
		\item Success Case:
		\begin{align*}
			\EE_{B_{t+1}}\left[\fnorm{\Thetab^{t+1}-\Thetab^\star}^2+{\sigma_1^2\over \kappa_1-1}\right]\le \kappa_1\EE_{B_{t}}\left[\fnorm{\Thetab^t - \Thetab^\star}^2+{\sigma_1^2\over \kappa_1-1}\right]
		\end{align*}
		\item Failure Case:
		\begin{align*}
			\EE_{B_{t+1}}\left[\fnorm{\Thetab^{t+1}-\Thetab^\star}^2+{\sigma_1^2\over \kappa_1-1}\right]\le \kappa_2\EE_{B_{t}}\left[\fnorm{\Thetab^t - \Thetab^\star}^2+{\sigma_1^2\over \kappa_1-1}\right] + \left(\kappa_2-1\right)\left({\sigma_2^2\over \kappa_2-1}-{\sigma_1^2\over\kappa_1-1}\right)
		\end{align*}
	\end{enumerate}
	For each iteration, the count sketch succeeds with probability $1-{1\over c}$. Denote the success indicator at iteration $t$ as $\phi_t$, and let $\Phi_t = \left\{\phi_0,\phi_1, ..., \phi_t\right\}$, we can combine those two cases and obtain,
	\begin{align*}
			\EE_{B_{t+1}, \Phi_{t+1}}\left[\fnorm{\Thetab^{t+1}-\Thetab^\star}^2+{\sigma_1^2\over \kappa_1-1}\right]\le &\left(\kappa_1 + {1\over c}\left(\kappa_1-\kappa_2\right)\right)\EE_{B_{t}, \Phi_{t}}\left[\fnorm{\Thetab^t - \Thetab^\star}^2+{\sigma_1^2\over \kappa_1-1}\right] \\
			&+ {1\over c}\left(\kappa_2-1\right)\left({\sigma_2^2\over \kappa_2-1}-{\sigma_1^2\over\kappa_1-1}\right)
		\end{align*}
	With a telescope sum, we have the desired error bound.
\end{proof}

\subsection{Proof of \Cref{thm:batchsize}}\label{prf:batchsize}
\begin{proof}
	We first vectorize the quadratic features. For an arbitrary data point ${\xb, y}$, we know that $\xb \in \RR^{p}$, where the first $p-1$ coordinates independently come from a zero mean, bounded distribution and the last coordinate is a constant 1. Denote the $i$-th coordinate of $\xb$ as $x_i$, we first vectorize the quadratic features, define
	\begin{align*}
		\vb = [x_1x_2, x_1x_3, ..., x_{p-1}x_p]^\top
	\end{align*}
	Here, for the quadratic terms, we only consider the interaction terms with no squared terms like $x_i^2$. Given that there will only be $p$ different squared terms, one can regress with the squared terms first, and the residual model will have no dependency on the squared terms. Replace $x_p$ with 1, we have
	\begin{align*}
		\vb = [x_1x_2, x_1x_3, ..., x_{p-2}x_{p-1}, x_1, ..., x_{p-1}]^\top
	\end{align*}
	Given that $x_1, ..., x_{p-1}$ are all i.i.d. and zero mean, after normalizing the variance of $x_i$, it is not hard to verify that
	\begin{align*}
		\EE[\vb\vb^\top] = \Ib
	\end{align*}
	Given that $\alpha, L$ are the smallest and largest eigenvalue of $\EE[\vb\vb^\top]$, asymptotically, we have the strong convexity and smoothness parameter $\alpha = L = 1$, which directly implies that the restricted version $\alpha_k = L_k = 1$. Thus the deterministic requirement can be easily satisfied with infinite sample. Now we turn to the minimum sample we need to have the desired $\alpha_k, L_k$.
	
	To show the $k$-restricted strong convexity and smoothness, we will first focus on an arbitrary $k$ sub-matrix of $\EE[\vb\vb^\top]$ and show the concentration. Then the desired claim will follow by applying a union bound over all $k$ sub-matrices.
	
	Denote a set of indices $\Scal \subseteq \{1, ..., |\vb|\}$, where $|\Scal| = k$. Define the corresponding sub-vector drawn from $\vb$ as $\zb = \vb_\Scal$. Define the restricted expected Hessian matrix as $\Hb = \EE[\zb\zb^\top]$, let the finite sample Hessian matrix as $\widehat\Hb = \frac{1}{m}\sum_{i=1}^m\zb_i\zb_i^\top$. Denote the difference as
	\begin{align*}
		\Db_i &= \frac{1}{m}\zb_i\zb_i^\top - \frac{1}{m}\EE[\zb\zb^\top] \\
		\Db &= \sum_{i=1}^m\Db_i = \frac{1}{m}\sum_{i=1}^m\zb_i\zb_i^\top - \EE[\zb\zb^\top] = \widehat\Hb - \Hb 
	\end{align*}
	
	Given that $\Hb = \Ib$, we can show the concentration of $\alpha_k, L_k$ as long as we can control $\Db$. Given that $\vb$ is bounded, we know that $\vb$ is sub-gaussian. Thus, bounding $\norm{\Db}$ is equivalent to showing concentration of the covariance matrix estimation of sub-gaussian random vectors. Using the Corollary 5.50 in \cite{vershynin2010introduction}, we have that with $m \ge C(t/\epsilon)^2k$, where $C$ depends only on the sub-gaussian norm $\norm{\vb}_{\psi_2}$. It's not hard to verify that $\norm{\vb}_{\psi_2} \le B$. Then we have that
	\begin{align*}
	    \PP(\norm{\Db} \ge \epsilon) \le 2\exp(-t^2k)
	\end{align*}
	
	Thus we obtain the bound for one particular $k$-sub-matrix. Taking an union bound over all $k$-sub-matrices, we have that
	\begin{align*}
		\PP(L_k \ge 1+\epsilon) &\le {|\vb|\choose k}\PP(\norm{\Db}\ge \epsilon)\\
		& \le \exp\left(k\log(2ep^2/k)-t^2k\right)
	\end{align*}
	
	By choosing $t^2 \ge \log(p)$, which implies that $m \gtrsim B k\log(p)/\epsilon^2$, we have $L_k\le 1+\epsilon$ with high probability. With a symmetric argument, we know that under same condition, we have $\alpha_k \ge 1-\epsilon$ with high probability.

\end{proof}

\subsection{Proof of \Cref{coro:sub-quadratic}}

\begin{proof}
    With \Cref{thm:main2} showing the linear convergence, we know that the per iteration complexity dominates the overall complexity of \algname. By \Cref{thm:top-recovery} and \Cref{lemma:Frobenius}, we show setting $b$ to $\Ocal(k)$ is sufficient for \algsubroute~to recover the support. \Cref{thm:batchsize} provides that the minimum batch size $m$ required for quadratic regression is $\Ocal({k\log p})$. Combining those results, we conclude that the complexity of \algname~is $\widetilde \Ocal (k(k+p))$. In the regime when $k$ is $\Ocal(p^\gamma)$ for $\gamma < 1$, \algname~achieves sub-quadratic complexity.
\end{proof}

\section{Proofs for \Cref{sec:extension}}

\subsection{Proof of \Cref{thm:SVRGPerRound}}\label{prf:svrgPerRound}
\begin{proof}
	Similar to the {\bf{\algsubroute~ succeeds}} case, define
	\begin{align*}
		\widetilde\Thetab^i_{j+1} = \Hcal_k\left({\Thetab^{i}_{j}-\eta\Pcal_{\Lambda_{2k}\cup\supp{(\Thetab^{i}_{j}})}}\left(\Gb^i_{j}\right)\right) = \Hcal_k\left(\Thetab^i_{j}-\eta\Gb^i_{j}\right)
	\end{align*}
	The second equality is given by \Cref{coro:HTproperty}. By applying \Cref{coro:deltaError}, we can also have
	\begin{align*}
		\norm{\Thetab^i_{j+1} - \widetilde\Thetab^i_{j+1}}_F \le 2\eta\Delta\sqrt{k_\Delta},~~~
		\norm{\Thetab^i_{j+1} - \widetilde\Thetab^i_{j+1}}_F^2 \le 2\eta^2\Delta^2{k_\Delta}
	\end{align*}
	which further implies
	\begin{align*}
		\fnorm{\Thetab^i_{j+1}-\Thetab^\star}^2 & =\fnorm{\Thetab^i_{j+1}-\widetilde\Thetab^i_{j+1}}^2+\fnorm{\widetilde\Thetab^i_{j+1}-\Thetab^\star}^2+2\minnerprod{\Thetab^i_{j+1}-\widetilde\Thetab^i_{j+1}, \widetilde\Thetab^i_{j+1}-\Thetab^\star}\\
		& \le \fnorm{\widetilde\Thetab^i_{j+1}-\Thetab^\star}^2 + 4\sqrt{k_\Delta}\eta \sqrt{k}\omega\Delta + 2k_\Delta\eta^2\Delta^2 \\
		& = \fnorm{\widetilde\Thetab^i_{j+1}-\Thetab^\star}^2 + \sigma^2_{\Delta|SVRG}
	\end{align*}
	The last equality defines $\sigma'^2_{\Delta|SVRG}$. To bound $\fnorm{\widetilde\Thetab^i_{j+1}-\Thetab^\star}^2$, the high level idea is similar to the proof of Theorem 10 in  \cite{shen2017tight}. We first define,
	\begin{align*}
    \Zb^i_{j+1} = \Thetab^{i}_j - \eta \Gb^i_j
	\end{align*}
	Let $\Omega = \supp(\Thetab^i_{j}) \cup \supp(\widetilde\Thetab^{i}_{j+1}) \cup \supp(\Thetab^\star)$, 
	\begin{align*}
    	\fnorm{\widetilde\Thetab^i_{j+1} - \Thetab^\star}^2 = \fnorm{\Hcal_k\left(\Zb^i_{j+1}\right) - \Thetab^\star}^2 = \fnorm{\Hcal_k\left(\Pcal_{\Omega}\left(\Zb^i_{j+1}\right)\right) - \Thetab^\star}^2 \le \nu \fnorm{\Pcal_{\Omega}\left(\Zb^i_{j+1}\right) - \Thetab^\star}^2 
	\end{align*}
	where the last inequality follows from \Cref{lemma:tightbound}. Thus we have
	\begin{align*}
    & \EE_{b_j}\left[ \fnorm{\widetilde\Thetab^i_{j+1} - \Thetab^\star}^2 \right] \\
    \le &  \EE_{b_j}\left[ \nu  \fnorm{\Pcal_{\Omega}\left(\Zb^i_{j+1}\right) - \Thetab^\star}^2 \right]  \\
    = & \EE_{b_j}\left[ \nu \fnorm{\Thetab^i_{j} - \Thetab^\star }^2 + \nu \eta^2 \fnorm{\Pcal_{\Omega}(\Gb^i_{j})}^2 - 2\nu \eta \minnerprod{\Thetab^i_{j} - \Thetab^\star, \Gb^i_{j}} \right]
    \end{align*}
    The second term can be bounded by using \Cref{coro:proj-grad} and we can take expectation directly on the third term, since $\EE_{b_j}[\Gb^i_{j}]=\nabla F(\Thetab^i_j)$. For brevity, denote $L=L_{|\Omega|}$.We then have,
    \begin{align*}
    & \EE_{b_j}\left[ \fnorm{\widetilde\Thetab^i_{j+1} - \Thetab^\star}^2 \right] \\
    \le & \nu \fnorm{\Thetab^i_{j} - \Thetab^\star }^2 + 4\nu\eta^2L_{2k}\left[F(\Thetab^i_j)-F(\Thetab^\star) + F(\Thetab^i_0) - F(\Thetab^\star)\right] - 2\nu \eta \minnerprod{\Thetab^i_{j} - \Thetab^\star, \nabla F\left(\Thetab^i_{j}\right) } \\
    & - 4\nu \eta^2 L_{2k} \minnerprod{\nabla F(\Thetab^\star), \Thetab^i_j+\Thetab^i_0-2\Thetab^\star} + 4\nu \eta^2\fnorm{\Pcal_{\Omega} \left(\nabla F \left(\Thetab^\star\right)\right)}^2\\
    \le & \nu(1-\eta\alpha_{2k}) \fnorm{\Thetab^i_{j} - \Thetab^\star }^2 - 2\nu\eta(1-2\eta L_{2k})\left[F(\Thetab^i_j)-F(\Thetab^\star)\right] + 4\nu\eta^2L_{2k}\left[F(\Thetab^i_0) - F(\Thetab^\star)\right] \\
    & + 4\nu \eta^2 L_{2k} \fnorm{\nabla F(\Thetab^\star)}\fnorm{\Thetab^i_j+\Thetab^i_0-2\Thetab^\star} + 4\nu \eta^2\fnorm{\Pcal_{\Omega} \left(\nabla F \left(\Thetab^\star\right)\right)}^2
\end{align*}
where the first inequality plugs in the result from \Cref{coro:proj-grad} and takes expectation of $\Gb^i_j$. The second inequality uses RSC property and Cauchy-Shwartz inequality. For brevity, define $\sigma' = \max_{|\Omega|=3k+K}\fnorm{\Pcal_{\Omega}F(\Thetab^*)}$, we have that
	\begin{align*}
		\EE_{b_j}\left[\fnorm{\widetilde\Thetab^i_{j+1}-\Thetab^\star}^2\right] \le & \nu(1-\eta\alpha_{2k})\fnorm{\Thetab^i_{j}-\Thetab^\star}^2-2\nu\eta(1-2\eta L_{2k})\left[F(\Thetab^i_{j})-F_(\Thetab^\star))\right]\\
		& + 4\nu\eta^2L_{2k}\left[F(\Thetab^i_0)-F(\Thetab^\star)\right] + 4\nu\eta \sigma'(4L_{2k}\sqrt{k}\omega+\sigma')
	\end{align*}
	Thus for $\EE_{b_j}\left[\fnorm{\Thetab^i_{j+1}-\Thetab^\star}^2\right]$, we have
	\begin{align*}
		\EE_{b_j}\left[\fnorm{\Thetab^i_{j+1}-\Thetab^\star}^2\right] \le & \nu(1-\eta\alpha_{2k})\fnorm{\Thetab^i_{j}-\Thetab^\star}^2-2\nu\eta(1-2\eta L_{2k})\left[F(\Thetab^i_{j})-F_(\Thetab^\star))\right]\\
		& + 4\nu\eta^2L_{2k}\left[F(\Thetab^i_0)-F(\Thetab^\star)\right] + 4\nu\eta \sigma'(4L_{2k}\sqrt{k}\omega+\sigma') + \sigma^2_{\Delta|SVRG}
	\end{align*}
	By a telescope sum, define $B_t=\left\{b_1, b_2,...,b_t\right\}$
	\begin{align*}
		\EE_{B_t}\left[\fnorm{\Thetab^{i}_t-\Thetab^\star}^2\right] \le & [\nu(1-\eta\alpha_{2k})-1]\sum_{j=0}^{t-1}\fnorm{\Thetab^i_{j+1}-\Thetab^\star}^2 + \fnorm{\Thetab^i_0-\Thetab^\star}\\
		& - 2\nu\eta(1-2\eta L_{2k})\sum_{j=0}^{t-1}\left[F(\Thetab^i_{j+1})-F(\Thetab^\star))\right]\\
		& + 4\nu\eta^2L_{2k}m\left[F(\Thetab^i_0)-F(\Thetab^\star)\right] + 4\nu\eta \sigma'(4L_{2k}\sqrt{k}\omega+\sigma')m + m\sigma^2_{\Delta|SVRG} \\
		= & [\nu(1-\eta\alpha_{2k})-1]m\EE_{B_t, j'}\fnorm{\Thetab^{i+1}_0-\Thetab^\star}^2 + \fnorm{\Thetab^i_0-\Thetab^\star}\\
		& - 2\nu\eta(1-2\eta L_{2k})\EE_{B_t, j'}\left[F(\Thetab^{i+1}_0)-F(\Thetab^\star))\right]\\
		& + 4\nu\eta^2L_{2k}m\left[F(\Thetab^i_0)-F(\Thetab^\star)\right] + 4\nu\eta \sigma'(4L_{2k}\sqrt{k}\omega+\sigma')m + m\sigma^2_{\Delta|SVRG}
	\end{align*}
	By using RSC, we have ${2\over\alpha_{2k}}\left[F(\Thetab^i_0)-F(\Thetab^\star)-\minnerprod{\nabla F(\Thetab^\star), \Thetab^i_0-\Thetab^\star} \right] \ge \fnorm{\Thetab^i_0-\Thetab^\star}^2$, thus
	\begin{align*}
		\EE_{B_t}\left[\fnorm{\Thetab^{i}_t-\Thetab^\star}^2\right] \le & [\nu(1-\eta\alpha_{2k})-1]m\EE_{B_t, j'}\fnorm{\Thetab^{i+1}_0-\Thetab^\star}^2 \\
		& - 2\nu\eta(1-2\eta L_{2k})\EE_{B_t, j'}\left[F(\Thetab^{i+1}_0)-F(\Thetab^\star))\right]\\
		& + \left({2\over\alpha_{2k}}+4\nu\eta^2L_{2k}m\right)\left[F(\Thetab^i_0)-F(\Thetab^\star)\right] \\
		& + 4\nu\eta \sigma'(4L_{2k}\sqrt{k}\omega+\sigma')m + {2\over \alpha_{2k}}\sqrt{k}\omega\sigma'+ m\sigma^2_{\Delta|SVRG}
	\end{align*}
	By assumption, we have $[\nu(1-\eta\alpha_{2k})-1] \le 0$. For simplicity, define
	\begin{align*}
		\sigma_{SVRG} = {4\nu\eta \sigma'(4L_{2k}\sqrt{k}\omega+\sigma')m + {2\over \alpha_{2k}}\sqrt{k}\omega\sigma'+ m\sigma^2_{\Delta|SVRG}\over  2 \nu\eta(1-2\eta L_{2k})m}
	\end{align*}
	Choosing $\eta<{1\over2L_{2k}}$, we have
	\begin{align*}
		\EE_{B_t}\left[F(\Thetab^{i+1}_0)-F(\Thetab^\star))\right] \le \left({1\over\alpha_{2k}\nu\eta(1-2\eta L_{2k})m}+{2\eta L_{2k}\over1-2\eta L_{2k}}\right)\left[F(\Thetab^i_0)-F(\Thetab^\star)\right] + \sigma_{SVRG}
	\end{align*}
	Typically $m$ is quite large, thus for the condition of convergence, we require that
	\begin{align*}
		{2\eta L_{2k}\over 1-2\eta L_{2k}} < 1\Rightarrow \eta < {1\over 4L_{2k}}
	\end{align*}
	Define,
	\begin{align*}
		\kappa_{SVRG} &= {1\over\alpha_{2k}\nu\eta(1-2\eta L_{2k})m}+{2\eta L_{2k}\over1-2\eta L_{2k}}
	\end{align*}
	We have the linear convergence given by
	\begin{align*}
		\EE_{B_t}\left[F(\Thetab^{i+1}_0)-F(\Thetab^\star))\right] \le \kappa_{SVRG}\left[F(\Thetab^{i}_0)-F(\Thetab^\star)\right]+\sigma_{SVRG}
	\end{align*}	
\end{proof}

\section{Proofs for \Cref{sec:analysis}}

\subsection{Proof of \Cref{lemma:tightbound}}\label{prf:tightbound}
\begin{proof}
We give the proof here for completeness. Also, the proof here is much concise than the original proof in \cite{shen2017tight} and a related result shown in \cite{li2016nonconvex}. 
\begin{align*}
	\fnorm{\Hcal_k(\Bb) - \Thetab}^2 = & \underbrace{\fnorm{\Pcal_{\supp(\Hcal_k(\Bb)) \backslash \supp(\Thetab)} \left( \Bb \right)}^2}_{\fnorm{\Bb_1}^2} + \underbrace{ \fnorm{\Pcal_{\supp(\Hcal_k(\Bb)) \cap \supp(\Thetab)} \left(\Bb - \Thetab\right)}^2 }_{\fnorm{\Bb_2 - \Thetab_2}^2}\\
    & +  \underbrace{ \fnorm{\Pcal_{  \supp^c(\Hcal_k(\Bb)) \cap \supp(\Thetab)  }\left(\Thetab\right) }^2 }_{\fnorm{\Thetab_3}^2}
\end{align*}
On the other hand,
\begin{align*}
\fnorm{\Bb - \Thetab}^2 = & \underbrace{ \fnorm{\Pcal_{\supp(\Hcal_k(\Bb)) \backslash \supp(\Thetab)} \left( \Bb \right)}^2}_{\fnorm{\Bb_1}^2} + \underbrace{ \fnorm{\Pcal_{\supp(\Hcal_k(\Bb)) \cap \supp(\Thetab)} \left(\Bb - \Thetab\right)}^2 }_{\fnorm{\Bb_2 - \Thetab_2}^2}\\
    & + \underbrace{\fnorm{\Pcal_{ \supp^c(\Hcal_k(\Bb)) \cap \supp(\Thetab) }\left(\Bb - \Thetab\right) }^2}_{\fnorm{\Bb_3 - \Thetab_3}^2}  + \underbrace{\fnorm{\Pcal_{ \supp^c(\Hcal_k(\Bb)) \backslash \supp(\Thetab)}\left(\Bb \right)}^2}_{\fnorm{\Bb_4}^2}
\end{align*}
\begin{align*}
&\max_{\Bb, \Thetab} \frac{\fnorm{\Hcal_k(\Bb) - \Thetab}^2}{\fnorm{\Bb - \Thetab }^2} \\
= &  \max_{\Bb, \Thetab} \frac{ \fnorm{\Bb_1}^2 + \fnorm{\Bb_2 - \Thetab_2}^2 + \fnorm{\Thetab_3}^2 }{ \fnorm{\Bb_1}^2 + \fnorm{\Bb_2 - \Thetab_2}^2 + \fnorm{\Bb_3 - \Thetab_3}^2 + \fnorm{\Bb_4}^2} \\
\le &  \max_{\Bb, \Thetab} \frac{ \fnorm{\Bb_1}^2 + \fnorm{\Bb_2 - \Thetab_2}^2 + \fnorm{\Thetab_3}^2 }{ \fnorm{\Bb_1}^2 + \fnorm{\Bb_2 - \Thetab_2}^2 + \fnorm{\Thetab_3}^2 + \fnorm{\Bb_3 }^2 - 2\minnerprod{\Bb_3, \Thetab_3}} \\
\le & \max\left\{ 1,   \max_{\Bb, \Thetab} \frac{  \left\vert \supp(\Bb_1) \right\vert {\Bb_{1,\min}}^2 + \fnorm{\Thetab_3}^2 }{  \left\vert \supp(\Bb_1) \right\vert {\Bb_{1,\min}}^2 + \fnorm{\Thetab_3}^2 + \fnorm{\Bb_3 }^2 - 2\minnerprod{\Bb_3, \Thetab_3}}   \right\} \\
\le & \max\left\{ 1,   \max_{\Bb, \Thetab  } \frac{  \left\vert \supp(\Bb_1) \right\vert {\Bb_{1,\min}}^2  + \fnorm{\Thetab_3}^2 }{   \left\vert \supp(\Bb_1) \right\vert {\Bb_{1,\min}}^2 + \fnorm{\Thetab_3}^2 +   \left\vert \supp(\Thetab_3) \right\vert  \Bb_{1,\min}^2 - 2\Bb_{1,\min} \onorm{\Thetab_{3} }   } \right\} \triangleq  \gamma  
\end{align*}
We determine $\gamma$ by observing
\begin{align*}
 & \frac{  \left\vert \supp(\Bb_1) \right\vert {\Bb_{1,\min}}^2  + \fnorm{\Thetab_3}^2 }{   \left\vert \supp(\Bb_1) \right\vert {\Bb_{1,\min}}^2 + \fnorm{\Thetab_3}^2 +   \left\vert \supp(\Thetab_3) \right\vert  \Bb_{1,\min}^2 - 2\Bb_{1,\min} \onorm{\Thetab_{3} }   } \le  \gamma,  ~~~~~~~~ \forall  \Bb, \Thetab \\
\Leftrightarrow &  (\gamma-1) \fnorm{\Thetab_3}^2 - 2\gamma \Bb_{1,\min}^2 + \left( \gamma \left\vert \supp(\Thetab_3)\right\vert + (\gamma -1) \left\vert \supp(\Bb_1) \right\vert \right)\Bb_{1,\min}^2 \ge 0, ~~~~~~~~ \forall  \Bb, \Thetab \\
\Leftrightarrow & 4\gamma^2 \Bb_{1,\min}^2 \le 4(\gamma - 1) \left[ \gamma + (\gamma-1) \frac{\supp(\Bb_1)}{\supp(\Thetab_3)} \right] \Bb_{1,\min}^2 \\
\Leftarrow &  \gamma^2  \le (\gamma - 1) \left[ \gamma + (\gamma-1) \frac{1}{\rho} \right] \\
\Leftarrow & \gamma = 1 + \frac{\rho + \sqrt{(4+\rho)\rho}}{2}, ~~~~~~~~ \mbox{ where } \rho=\frac{\min\{K, p^2-k\} }{k-K+\min\{K, p^2-k\} }
\end{align*}
\end{proof}

\subsection{Proof of \Cref{coro:co-cercivity}}\label{prf:co-cercivity}
\begin{proof}
\newcommand{\vara}{\Xib}
\newcommand{\varb}{\Xib'}
\newcommand{\varc}{\Thetab}
\newcommand{\vard}{\Thetab'}
\newcommand{\vare}{\vara}

Define the auxiliary function
\begin{align}\label{eqt:co-ercivity}
    g(\vara) := f(\vara) -  \minnerprod{ \nabla f(\varc), \vara } 
\end{align}
Notice that the gradient of $g(\cdot)$ satisfies:
\begin{align*}
    \fnorm{ \nabla g(\vara) - \nabla g(\varb) } = \fnorm{ \nabla f(\vara) - \nabla f(\varb) } \le L_{\zeronorm{\vara - \varb}} \fnorm{\vara - \varb}
\end{align*}
which implies
\begin{align*}
    g(\vara) - g(\varb) - \minnerprod{ \nabla g(\varb), \vara - \varb } \le \frac{L_r}{2} \fnorm{\vara- \varb}^2
\end{align*}
where $r=|\supp(\vara - \varb)|$. On the other hand, 
\begin{align*}
    g(\vare) - g(\varc) = f(\vare) - f(\varc) - \minnerprod{ \nabla f(\varc), \vare - \varc } \ge 0
\end{align*} 
as long as $f(\cdot)$ satisfies $|\supp(\vare) \cup \supp(\varc)|$-RC. 
Take $\vare = \vard - \frac{1}{L_{|\Omega|}} \Pcal_{\Omega} \nabla g(\vard) $, $\varb = \vard$, then, 
\begin{align*}
    g(\varc) \le & g(\vard - \frac{1}{L_{|\Omega|}} \Pcal_{\Omega} \nabla g(\vard)) \\
    \le & g(\vard) + \minnerprod{\nabla g(\vard), - \frac{1}{L_{|\Omega|}} \Pcal_{\Omega} \nabla g(\vard)} + \frac{1}{2 L_{|\Omega|}} \fnorm{\Pcal_{\Omega} \nabla g(\vard)}^2 \\
    = & g(\vard) - \frac{1}{2 L_{|\Omega|}} \fnorm{\Pcal_{\Omega} \nabla g(\vard)}^2
\end{align*}
Plug in the definition in \Cref{eqt:co-ercivity} gives the  result we want. 
\end{proof}

\subsection{Proof of \Cref{coro:proj-grad}}\label{prf:proj-grad}
\begin{proof} ~
\begin{enumerate}
	\item SGD:
	\begin{align*}
	& \EE_{\Bcal_t}\fnorm{\Pcal_{\Omega}\left(\Gb^t \right)}^2 = \EE_{\Bcal_t}\fnorm{ \Pcal_{\Omega}\left( \nabla f_{\Bcal_t} \left(\Thetab^{t-1}\right) \right) }^2 \\
    \le & 2 \EE_{\Bcal_t} \fnorm{\Pcal_{\Omega}\left( \nabla f_{\Bcal_t}\left( \Thetab^{t-1}\right) - \nabla f_{\Bcal_t}\left(\Thetab^\star\right) \right)}^2 + 2\fnorm{\Pcal_{\Omega} \left(\nabla f_{\Bcal_t} \left(\Thetab^\star\right)\right)}^2 \\
    \le & 2L_{2k}^2\fnorm{\Thetab^{t-1}-\Thetab^\star}^2+2\fnorm{\Pcal_\Omega\left(\nabla f_{\Bcal_t}(\Thetab^\star)\right)}^2
\end{align*}
The first inequality is by algebra, the second inequality holds by RSM.
	\item SVRG:
	\begin{align*}
		\fnorm{\Pcal_{\Omega}(\Gb^i_j)}^2 =& \fnorm{\Pcal_{\Omega}\left(\nabla f_{b_j} \left(\Thetab^{i}_j\right)-\nabla f_{b_j} \left(\Thetab^{i}_0\right)+\nabla F \left(\Thetab^{i}_0\right)\right)}^2 \\
		\le & 2 \fnorm{\Pcal_{\Omega}\left(\nabla f_{b_j} \left(\Thetab^{i}_j\right)-\nabla f_{b_j} \left(\Thetab^\star\right)\right)}^2 \\
		& + 2 \fnorm{\Pcal_{\Omega}\left(\nabla f_{b_j} \left(\Thetab^{i}_0\right)-\nabla f_{b_j} \left(\Thetab^\star\right)-\nabla F \left(\Thetab^{i}_0\right)\right)}^2
	\end{align*}
	Expand the later square, we have
	\begin{align*}
		\fnorm{\Pcal_{\Omega}(\Gb^i_j)}^2 \le & 2 \fnorm{\Pcal_{\Omega}\left(\nabla f_{b_j} \left(\Thetab^{i}_j\right)-\nabla f_{b_j} \left(\Thetab^\star\right)\right)}^2 + 2 \fnorm{\Pcal_{\Omega}\left(\nabla f_{b_j} \left(\Thetab^{i}_0\right)-\nabla f_{b_j} \left(\Thetab^\star\right)\right)}^2\\
		& + 2 \fnorm{\Pcal_\Omega\nabla F \left(\Thetab^{i}_0\right)}^2 - 4\minnerprod{\Pcal_{\Omega}\left(\nabla f_{b_j} \left(\Thetab^{i}_0\right)-\nabla f_{b_j} \left(\Thetab^\star\right)\right), \Pcal_\Omega\nabla F \left(\Thetab^{i}_0\right)}
	\end{align*}
	By applying \Cref{coro:co-cercivity} to bound the first two terms, we have
	\begin{align*}
		\fnorm{\Pcal_{\Omega}(\Gb^i_j)}^2 \le & 4L_{2k}\left[f_{b_j}(\Thetab^i_j)-f_{b_j}(\Thetab^\star)-\minnerprod{\nabla f_{b_j}(\Thetab^\star), \Thetab^i_{j}-\Thetab^\star}\right] \\
		& + 4L_{2k}\left[f_{b_j}(\Thetab^i_0)-f_{b_j}(\Thetab^\star)-\minnerprod{\nabla f_{b_j}(\Thetab^\star), \Thetab^i_{0}-\Thetab^\star}\right]\\
		& + 2 \fnorm{\Pcal_\Omega\nabla F \left(\Thetab^{i}_0\right)}^2 - 4\minnerprod{\Pcal_{\Omega}\left(\nabla f_{b_j} \left(\Thetab^{i}_0\right)-\nabla f_{b_j} \left(\Thetab^\star\right)\right), \Pcal_\Omega\nabla F \left(\Thetab^{i}_0\right)}
	\end{align*}
	Taking expectation over $b_j$, we have
	\begin{align*}
		\EE_{b_j}\fnorm{\Pcal_{\Omega}(\Gb^i_j)}^2 \le & 4L_{2k}\left[F(\Thetab^i_j)-F(\Thetab^\star)\right] + 4L_{2k}\left[F(\Thetab^i_0)-F(\Thetab^\star)\right]\\
		& -4L_{2k}\minnerprod{\nabla F(\Thetab^\star), \Thetab^i_{j}+\Thetab^j_0-2\Thetab^\star}\\
		& + 2\minnerprod{2\Pcal_{\Omega}\left(\nabla F \left(\Thetab^\star\right)\right) - \Pcal_{\Omega}\left(\nabla F \left(\Thetab^{i}_0\right)\right), \Pcal_\Omega\nabla F \left(\Thetab^{i}_0\right)}\\
		= & 4L_{2k}\left[F(\Thetab^i_j)-F(\Thetab^\star)\right] + 4L_{2k}\left[F(\Thetab^i_0)-F(\Thetab^\star)\right]\\
		& -4L_{2k}\minnerprod{\nabla F(\Thetab^\star), \Thetab^i_{j}+\Thetab^j_0-2\Thetab^\star}\\
		& + \fnorm{2\Pcal_\Omega\left(\nabla F(\Thetab^\star)\right)}^2-\fnorm{2\Pcal_\Omega(\nabla F(\Thetab^\star)-\nabla F(\Thetab^i_0))}^2\\
		& - \fnorm{\Pcal_{\Omega}\left(\nabla F(\Thetab^i_0)\right)}^2 \\
		\le & 4L_{2k}\left[F(\Thetab^i_j)-F(\Thetab^\star)\right] + 4L_{2k}\left[F(\Thetab^i_0)-F(\Thetab^\star)\right]\\
		& -4L_{2k}\minnerprod{\nabla F(\Thetab^\star), \Thetab^i_{j}+\Thetab^j_0-2\Thetab^\star} + 4\fnorm{\Pcal_\Omega\left(\nabla F(\Thetab^\star)\right)}^2
	\end{align*}
\end{enumerate} 
\end{proof}

\subsection{Proof of \Cref{coro:HTproperty}}\label{prf:HTproperty}
\begin{proof}
	Denote $\Thetab^+=\Hcal_k \left( \Thetab - \eta \Gb \right)$. Define $\Lambda_{new}$ to be the indices set of $k$-largest elements in $G$ that doesn't belong to $\supp(\Thetab)$. It can be easily verified that
	\begin{align*}
		\Hcal_k \left( \Thetab - \eta \Gb \right) = \Hcal_k \left( \Thetab - \eta\Pcal_{\supp(\Thetab)\cup\Lambda_{new}}\left(\Gb\right) \right)
	\end{align*}
	Given that $|\supp(\Thetab)|\le k$, by pigeonhole principle, we have $\Lambda_{new}\subseteq\Lambda_{2k}$, thus
	\begin{align*}
		\Hcal_k \left( \Thetab - \eta\Pcal_{\supp(\Thetab)\cup\Lambda_{new}}\left(\Gb\right) \right) = \Hcal_k\left(\Thetab - \eta \Pcal_{\supp(\Thetab) \cup \Lambda_{2k}} (\Gb) \right)
	\end{align*}
\end{proof}

\subsection{Proof of \Cref{coro:deltaError}}\label{prf:deltaError}
\begin{proof}
	Define
	\begin{align*}
		\Gamma_0 = \supp(\Thetab^+)\cap\supp(\widetilde\Thetab^+), \Gamma_1 = \supp(\Thetab^+)\backslash\supp(\widetilde\Thetab^+), \Gamma_2=\supp(\widetilde\Thetab^+)\backslash\supp(\Thetab^+)
	\end{align*}
	We have that
	\begin{align*}
		\Thetab^+-\widetilde\Thetab^+ = \Pcal_{\Gamma_1}(\Thetab^+) - \Pcal_{\Gamma_2}(\widetilde\Thetab^+)
	\end{align*}
	By definition of $\Thetab^+, \widetilde\Thetab^+$, it is easy to verify that
	\begin{align*}
		\norm{\Pcal_{\Gamma_1}(\Thetab^+)}_\infty \le \norm{\Pcal_{\Gamma_2}(\widetilde\Thetab^+)}_\infty
	\end{align*}
	Also, since the elements that is greater than $\eta\Delta$ can only come from $\supp(\Thetab)\cup \Lambda_\Delta$, and given that $\Lambda_{\Delta}\subseteq\widetilde\Lambda$, we know that
	\begin{align*}
		i \in \supp(\Thetab^+), \forall i\ s.t. \widetilde\Thetab^+_i\ge \eta\Delta \Rightarrow i \in \Gamma_0, \forall i\ s.t. \widetilde\Thetab^+_i\ge \eta\Delta
	\end{align*}
	Thus we have
	\begin{align*}
		\norm{\Pcal_{\Gamma_2}(\widetilde\Thetab^+)}_\infty \le \eta\Delta
	\end{align*}
	By \Cref{coro:HTproperty}, we know that
	\begin{align*}
		\widetilde\Thetab^+=\Hcal_{k}\left(\Thetab-\eta\Pcal_{\Lambda_{2k}\cup\supp({\Thetab})}(\Gb_t)\right)
	\end{align*}
	Thus we have $|\Gamma_2|\le k_\Delta$ and given that $|\supp(\Thetab^+)| = |\widetilde\supp(\Thetab^+)|$, we have $|\Gamma_1|=|\Gamma_2|$. Thus,
	\begin{align*}
		\fnorm{\Thetab^+-\widetilde\Thetab^+} = \fnorm{\Pcal_{\Gamma_1}(\Thetab^+) - \Pcal_{\Gamma_2}(\widetilde\Thetab^+)} \le \eta\Delta \sqrt{2k_\Delta}
	\end{align*}
\end{proof}

\end{document}